\date{}
\newtheorem{theorem}{\textbf{Theorem}}
\newtheorem{assumption}{\textbf{Assumption}}
\newtheorem{lemma}{\textbf{Lemma}}
\newtheorem{corollary}{\textbf{Corollary}}
\newtheorem{remark}{\textbf{Remark}}
\numberwithin{equation}{section}
\theoremstyle{plain}
\theoremstyle{definition}
\title{Decentralized Stochastic Gradient Descent Ascent  for \\Finite-Sum Minimax Problems}
\author{
	Hongchang Gao \thanks{Temple University, {\tt hongchang.gao@temple.edu}} 
	}
\begin{document}
\maketitle

%
\begin{abstract}
	Minimax optimization problems have attracted significant attention in recent years due to their widespread application in numerous machine learning models. To solve the minimax problem, a wide variety of stochastic optimization methods have been proposed. However, most of them ignore the distributed setting where the training data is distributed on multiple workers. In this paper, we developed a novel decentralized stochastic gradient descent ascent method for the finite-sum minimax problem. In particular, by employing the variance-reduced gradient, our method can achieve $O(\frac{\sqrt{n}\kappa^3}{(1-\lambda)^2\epsilon^2})$ sample complexity and $O(\frac{\kappa^3}{(1-\lambda)^2\epsilon^2})$ communication complexity for the nonconvex-strongly-concave minimax problem. As far as we know, our work is the first one to achieve such theoretical complexities for this kind of minimax problem.  At last, we apply our method to   AUC maximization, and the experimental results confirm the effectiveness of our method. 
\end{abstract}

\section{Introduction}
In this paper, we consider the following decentralized finite-sum minimax problem:
\begin{equation} \label{loss}
	\begin{aligned}
		& \min_{\mathbf{x}\in \mathbb{R}^d} \max_{\mathbf{y}\in \mathbb{R}^{d'}} F(\mathbf{x}, \mathbf{y}) \triangleq \frac{1}{K}\sum_{k=1}^{K}\Big(\frac{1}{n}\sum_{i=1}^{n} f_i^{(k)}(\mathbf{x}, \mathbf{y})\Big) \ .
	\end{aligned}
\end{equation}
It is assumed that there are totally $K$ workers in a decentralized training system. Each worker has its own dataset and  objective function $f^{(k)}(\mathbf{x}, \mathbf{y})=\frac{1}{n}\sum_{i=1}^{n} f_i^{(k)}(\mathbf{x}, \mathbf{y})$ where $f_i^{(k)}(\mathbf{x}, \mathbf{y})$ is the loss function for the $i$-th sample on the $k$-th worker and $n$ is the total number of samples on each worker.  In this paper,  $f_i^{(k)}(\mathbf{x}, \mathbf{y})$ is assumed to be nonconvex in $\mathbf{x}$ and $\mu$-strongly-concave in $\mathbf{y}$.  
Under this kind of decentralized setting, all workers  collaboratively optimize Eq.~(\ref{loss}) to learn the model parameter $\mathbf{x}$ and $\mathbf{y}$.

The minimax optimization problem in Eq.~(\ref{loss}) covers numerous machine learning models, such as  adversarial training \cite{goodfellow2014generative,goodfellow2014explaining,madry2017towards}, distributionally robust optimization \cite{lin2020gradient,luo2020stochastic}, AUC maximization \cite{ying2016stochastic,liu2019stochastic}, etc. Recently, many efforts have been devoted to developing efficient optimization algorithms to solve the minimax optimization problem. For instance, \cite{lin2020gradient} proposed a stochastic gradient descent ascent method and investigated its convergence rate. Afterwards, several accelerated methods \cite{luo2020stochastic,xu2020enhanced,qiu2020single} have been proposed to improve the convergence speed by utilizing the variance reduction technique or momentum strategy.  However,  these methods only focus on the single-machine setting. It's unclear how these methods converge under the decentralized setting and how large their communication complexities are.

To handle the large-scale minimax optimization problem,  some distributed methods have been proposed in recent years. In \cite{deng2021local}, a communication-efficient local stochastic gradient descent ascent method was proposed, whose convergence rate was further improved in \cite{xie2021federated} by resorting to the variance reduction technique. \cite{guo2020communication} deveoped CoDA for the AUC maximization problem. However, these methods are based on the parameter-server setting so that they are not applicable to our decentralized setting. Recently,  \cite{liu2019decentralized} developed a decentralized optimistic stochastic gradient method and established the convergence rate for  the nonconvex-nonconcave problem. \cite{xian2021faster} developed a decentralized stochastic variance-reduced gradient descent ascent method for the nonconvex-strongly-concave problem based on the STORM gradient estimator \cite{cutkosky2019momentum}.  However, it has a large communication complexity $O(1/\epsilon^3)$  to achieve the $\epsilon$-accuracy solution \footnote{Here, we omit the spectral gap and condition number for simplification.}.   On the contrary, the decentralized algorithm for minimization problems can achieve the  $O(1/\epsilon^2)$ communication complexity. Moreover, \cite{xian2021faster}  only studied the stochastic setting, failing to handle the finite-sum optimization problem.  Recently, \cite{zhang2021taming} reformulated the policy evaluation problem in reinforcement learning as a finite-sum minimax problem and then proposed the decentralized GT-SRVR method to solve it. However, this method requires to compute the full gradient periodically,  incurring  large computation overhead.

To overcome  aforementioned issues,  we developed a novel decentralized stochastic gradient descent ascent (DSGDA) method for optimizing Eq.~(\ref{loss}) efficiently.  In detail, on each worker, DSGDA computes the variance-reduced gradient based on the local dataset and then employs the gradient tracking communication scheme to update the local model parameters $\mathbf{x}$ and $\mathbf{y}$.  Furthermore, we established the convergence rate of DSGDA for the finite-sum nonconvex-strongly-concave problem. Specifically, our theoretical analysis shows that DSGDA can achieve  $O(\frac{\kappa^3}{(1-\lambda)^2\epsilon^2})$ communication complexity, which is better than  $O(\frac{\kappa^3}{(1-\lambda)^2\epsilon^3})$ of \cite{xian2021faster} and matches $O(\frac{\kappa^3}{(1-\lambda)^2\epsilon^2})$ of  \cite{zhang2021taming} in terms of the order of the solution accuracy $\epsilon$, where $1-\lambda$ represents the spectral gap of the communication network and $\kappa$ denotes the condition number of the loss function. 
Moreover,  our method can achieve   $O(\frac{\sqrt{n}\kappa^3}{(1-\lambda)^2\epsilon^2})$ sample complexity on each worker, which is better than $O(n+\frac{\sqrt{n}\kappa^3}{(1-\lambda)^2\epsilon^2})$ of \cite{zhang2021taming}\footnote{The first term is ignored in \cite{zhang2021taming}. } in terms of $n$ because our method does not need to periodically compute the full gradient as \cite{zhang2021taming}. To the best of our knowledge, there is no existing literature achieving such a favorable sample complexity. This  confirms the superiority of our method. The detailed comparison between our method and existing methods is demonstrated in Table~\ref{sequence}.
At last, we leverage our method to optimize the decentralized AUC maximization problem and the experimental results  confirm the superior  performance of our method.  Finally, we summarize the contribution of our work in the following. 
\begin{itemize}
	\item We developed a novel decentralized stochastic gradient descent ascent method for optimizing  finite-sum nonconvex-strongly-concave  minimization problems without periodically computing the full gradient as existing methods \cite{luo2020stochastic,zhang2021taming}. Therefore, our method is  efficient in computation. 
	\item We established the convergence rate for our proposed decentralized optimization method, which demonstrates that our method can achieve a better sample complexity than existing decentralized minimax optimization methods.  This is the first work achieving such a theoretical result for the decentralized minimax problem.
	\item We conducted extensive experiments on the AUC maximization problem, which confirms the effectiveness of our method in practical applications. 
\end{itemize}

\begin{table*}[h] 
	\setlength{\tabcolsep}{12pt}
	\begin{center}
		\begin{tabular}{l|cc|c}
			\toprule
			{Methods} &   {Sample Complexity} &  {Communication Complexity}  & Category  \\
			\hline
			DM-HSGD \cite{xian2021faster} & $O(\frac{\kappa^3}{(1-\lambda)^2\epsilon^3})$   &  $O(\frac{\kappa^3}{(1-\lambda)^2\epsilon^3})$  & Stochastic\\
			GT-SRVR \cite{zhang2021taming} & $O(n+\frac{\sqrt{n}\kappa^3}{(1-\lambda)^2\epsilon^2})$   & $O(\frac{\kappa^3}{(1-\lambda)^2\epsilon^2})$ & Finite-sum\\
			\hline
			Ours & $O(\frac{\sqrt{n}\kappa^3}{(1-\lambda)^2\epsilon^2})$   & $O(\frac{\kappa^3}{(1-\lambda)^2\epsilon^2})$ & Finite-sum\\
			\bottomrule 
		\end{tabular}
	\end{center}
	\caption{The comparison in sample and communication complexities between our method and baseline methods. Here, $\kappa$ is the condition number, $1-\lambda$ is the spectral gap, and $n$ is the number of samples on each worker.  \cite{zhang2021taming} proposed a variant, GT-SRVRI, by introducing an additional assumption:  $ \mathbb{E}[\|\nabla_{\mathbf{x}} f_i^{(k)}(\mathbf{x}, \mathbf{y})-\nabla_{\mathbf{x}} f^{(k)}(\mathbf{x}, \mathbf{y})\|^2]\leq \sigma^2$ and $ \mathbb{E}[\|\nabla_{\mathbf{y}} f_i^{(k)}(\mathbf{x}, \mathbf{y})-\nabla_{\mathbf{y}} f^{(k)}(\mathbf{x}, \mathbf{y})\|^2]\leq \sigma^2$, where $\sigma>0$,  $k\in[K]$, and $i\in[n]$. However, this is not a common assumption for the finite-sum optimization problem. Therefore, we do not compare it. 
	}
	\label{sequence}
\end{table*}

\section{Related Work}

\subsection{Minimax Optimization}

Minimax optimization has attracted a surge of attention in the machine  learning community in the past few years due to its widespread application in many machine learning models. To this end, a line of research is to develop efficient optimization methods \cite{sanjabi2018solving,nouiehed2019solving,jin2020local,yan2020optimal,zhang2021complexity,chen2020efficient,yang2020global,tran2020hybrid} to solve the minimax optimization problem. In particular, under the \textit{stochastic setting}, \cite{lin2020gradient} developed a single-loop stochastic gradient descent ascent (SGDA) method, which updates $\mathbf{x}$ and $\mathbf{y}$  for only one step with stochastic gradients in each iteration.  The sample complexity of SGDA for the nonconvex-strongly-concave minimax problem is $O(\kappa^3/\epsilon^4)$.  Later, \cite{qiu2020single,guo2021stochastic} combined the momentum technique with SGDA to accelerate the empirical convergence speed.  Moreover,  \cite{qiu2020single,huang2020accelerated} utilized the variance reduction technique STORM \cite{cutkosky2019momentum} to accelerate the convergence speed for nonconvex-strongly-concave minimax problems. 

As for the \textit{finite-sum setting}, \cite{luo2020stochastic} proposed a double-loop (SREDA) method, which updates $\mathbf{x}$ for one step with the variance-reduced gradient estimator SPIDER \cite{fang2018spider} and solves the maximization problem about $\mathbf{y}$ with multiple gradient ascent steps. As such, it can achieve the  $O(n+n^{1/2}\kappa^2/\epsilon^2)$ sample complexity for the \textit{finite-sum} nonconvex-strongly-concave minimax problem.  However, SREDA requires to periodically compute the full gradient, which is not practical for large-scale real-world applications.  In addition, its step size should be as small as $\epsilon$, which also limits its application for real-world tasks.  Recently, \cite{xu2020enhanced} resorted to the SpiderBoost \cite{wang2019spiderboost} variance reduction technique to tolerate a large step size.  But it still needs to compute the full gradient periodically so that it has the same sample complexity with SREDA.

\subsection{Decentralized Optimization}
In recent years, decentralized optimization methods have been applied to optimize large-scale machine learning models. In particular, \cite{lian2017can} proposed a decentralized stochastic gradient descent (DSGD) method based on the gossip communication scheme, while \cite{pu2020distributed,lu2019gnsd} used the gradient tracking communication scheme for DSGD. \cite{yu2019linear} applied the momentum technique to DSGD to accelerate the convergence speed. Afterwards, the variance reduction technique has been utilized to further accelerate the convergence speed of DSGD. For example, \cite{sun2020improving} combines SPIDER \cite{fang2018spider} with the gradient-tracking-based DSGD, achieving the near-optimal sample and communication complexity.  Besides, there are some works focusing on the communication-efficient methods by compressing gradients \cite{koloskova2019decentralized} or skipping communication rounds \cite{li2019communication}.  However, all these methods are designed for the minimization problem. Hence, they are not applicable to optimize Eq.~(\ref{loss}). 

A few efforts have been made to optimizing the decentralized minimax problem in the past two years. For example, \cite{liu2019decentralized} developed a  decentralized optimistic stochastic gradient method to train the nonconvex-nonconcave generative adversarial nets \cite{goodfellow2014generative}.  \cite{rogozin2021decentralized} focused on the strongly-convex-strongly-concave problem. Recently, \cite{beznosikov2021decentralized} proposed a communication-efficient method based on the stochastic extragradient algorithm. \cite{xian2021faster} developed a decentralized stochastic gradient descent ascent method based on the STORM \cite{cutkosky2019momentum} gradient estimator for the stochastic minimax problem, rather than the finite-sum problem. \cite{zhang2021taming} proposed GT-SRVR based on the SPIDER gradient estimator \cite{zhang2021taming} for  finite-sum problems,  which requires to periodically compute the full gradient.

\section{Efficient Decentralized Stochastic Gradient Descent Ascent Method}

\begin{algorithm} [!h]
	\caption{Efficient Decentralized Stochastic Gradient Descent Ascent  (DSGDA)}
	\label{dsgda}
	\begin{algorithmic}[1]
		\REQUIRE $\mathbf{x}_{0}^{(k)}=\mathbf{x}_{-1}^{(k)}=\mathbf{x}_{0}$, $\mathbf{y}_{0}^{(k)}=\mathbf{y}_{-1}^{(k)}=\mathbf{y}_{0}$, $\mathbf{v}_{-1}^{(k)}=\mathbf{a}_{-1}^{(k)}=0$, $\mathbf{u}_{-1}^{(k)}=\mathbf{b}_{-1}^{(k)}=0$,\\ $\mathbf{g}_{i,-1}^{(k)}=0$, $\mathbf{h}_{i,-1}^{(k)}=0$ for $i\in \{1,2,\cdots n\}$.
		\FOR{$t=0,\cdots, T-1$} 
		
		\STATE Randomly select  samples $\mathcal{S}_{t}$ with $|\mathcal{S}_{t}|=s_t$ and then compute $\mathbf{v}_t^{(k)}$ and $\mathbf{u}_t^{(k)}$ as Eq.~(\ref{vt})  and Eq.~(\ref{ut})

		\STATE Update $\mathbf{x}$:
		
		$\mathbf{a}_t^{(k)}  = \sum_{j\in \mathcal{N}_k}w_{kj}\mathbf{a}_{t-1}^{(j)} +  \mathbf{v}_t^{(k)} - \mathbf{v}_{t-1}^{(k)} $
		
		$\mathbf{x}_{t+\frac{1}{2}}^{(k)} =  \sum_{j\in \mathcal{N}_k} w_{kj}\mathbf{x}_t^{(j)}-\gamma_1 \mathbf{a}_t^{(k)}$
		
		$\mathbf{x}_{t+1}^{(k)}  =  \mathbf{x}_t^{(k)} +\eta ({\mathbf{x}}_{t+\frac{1}{2}}^{(k)}   - \mathbf{x}_t^{(k)} ) $
		
		\STATE Update  $\mathbf{y}$:
		
		$\mathbf{b}_t^{(k)}  = \sum_{j\in \mathcal{N}_k} w_{kj}\mathbf{b}_{t-1}^{(j)} +  \mathbf{u}_t^{(k)} - \mathbf{u}_{t-1}^{(k)} $

		$\mathbf{y}_{t+\frac{1}{2}}^{(k)} =  \sum_{j\in \mathcal{N}_k} w_{kj}\mathbf{y}_t^{(j)}+\gamma_2 \mathbf{b}_t^{(k)}$
		
		$\mathbf{y}_{t+1}^{(k)}  =  \mathbf{y}_t^{(k)} +\eta  ({\mathbf{y}}_{t+\frac{1}{2}}^{(k)}   - \mathbf{y}_t^{(k)} ) $
		
		\STATE Update $\mathbf{g}$ and $\mathbf{h}$:
		
		$\mathbf{g}_{i, t}^{(k)} = \begin{cases}
			\nabla_{\mathbf{x}} f^{(k)}_{i}(\mathbf{x}_{t}^{(k)} , \mathbf{y}_{t}^{(k)} ),  & \text { for } i \in  \mathcal{S}_{t}\\
			\mathbf{g}_{i, t-1}^{(k)} , & \text{otherwise} \\
		\end{cases} $

		$\mathbf{h}_{i, t}^{(k)} = \begin{cases}
			\nabla_{\mathbf{y}} f^{(k)}_{i}(\mathbf{x}_{t}^{(k)} , \mathbf{y}_{t}^{(k)} ),  & \text { for } i \in  \mathcal{S}_{t}\\
			\mathbf{h}_{i, t-1}^{(k)} , & \text{otherwise} \\
		\end{cases} $

		\ENDFOR
	\end{algorithmic}
\end{algorithm}

\subsection{Problem Setup}
In this paper, the communication network  in the decentralized training system is represented by $\mathcal{G}=\{{P}, {W}\}$. Here, ${P}=\{p_1, p_2, \cdots, p_K\}$ represents $K$ workers.  ${W}=[w_{ij}]\in \mathbb{R}^{K\times K}$ is the adjacency matrix, denoting the connection among these $K$ workers. When $w_{ij}>0$, the workers $p_i$ and $p_j$ are connected.  Otherwise, they are disconnected and then cannot communicate to each other.   In addition, for the adjacency matrix, we have the following assumption. 

\begin{assumption} \label{graph}
	The adjacency matrix $W$  satisfies following properties:
	\begin{itemize}
		\item $W$ is nonnegative, i.e., $w_{ij}\geq 0$. 
		\item $W$  is symmetric, i.e., $W^T=W$.
		\item $W$  is  doubly stochastic, i.e., $W\mathbf{1}=\mathbf{1}$ and $\mathbf{1}^TW=\mathbf{1}^T$. 
		\item The  eigenvalues  $\{\lambda_i\}_{i=1}^{n}$ of $W$ satisfy $|\lambda_n|\leq \cdots \leq |\lambda_2|< |\lambda_1|=1$. 
	\end{itemize}
\end{assumption}
This assumption is also used in existing works \cite{lian2017can,koloskova2019decentralized,liu2019decentralized}. In this paper, the spectral gap is represented by $1-\lambda$ where $\lambda\triangleq |\lambda_2|$.

\subsection{Method}
In Algorithm~\ref{dsgda}, we developed a novel efficient descentralized stochastic gradient descent ascent (DSGDA) method.  Specifically, each worker computes the stochastic gradient with its local dataset and then updates its local model parameters. In detail, at the $t$-th iteration, the $k$-th worker samples a mini-batch of samples $\mathcal{S}_{t}$ to compute the variance-reduced gradient regarding $\mathbf{x}$ as follows:
\begin{equation} \label{vt}
	\begin{aligned}
		\mathbf{v}_t^{(k)}& =(1-\rho_{t}) \mathbf{v}_{t-1}^{(k)} +  \frac{1}{s_t} \sum_{i \in \mathcal{S}_{t}}(\nabla_{\mathbf{x}} f^{(k)}_{i}(\mathbf{x}_{t}^{(k)}, \mathbf{y}_{t}^{(k)})-\nabla_{\mathbf{x}} f^{(k)}_{i}(\mathbf{x}_{t-1}^{(k)}, \mathbf{y}_{t-1}^{(k)}))  \\
		& +\rho_{t}\Big(\frac{1}{s_t} \sum_{i \in \mathcal{S}_{t}}(\nabla_{\mathbf{x}} f^{(k)}_{i}(\mathbf{x}_{t-1}^{(k)}, \mathbf{y}_{t-1}^{(k)})-\mathbf{g}_{i, t-1}^{(k)}) +\frac{1}{n} \sum_{j=1}^{n} \mathbf{g}_{j, t-1}^{(k)}\Big) \ , 
	\end{aligned}
\end{equation}
where $ \rho_{t} \in [0, 1]$ is a hyperparameter, $\mathbf{x}_{t}^{(k)}$ and $\mathbf{y}_{t}^{(k)}$ denote the model parameters on the $k$-th worker in the $t$-th iteration, $\nabla_{\mathbf{x}} f^{(k)}_{i}(\mathbf{x}_{t}^{(k)}, \mathbf{y}_{t}^{(k)})$  denotes the stochastic gradient regarding $\mathbf{x}$,  $\mathbf{v}_t^{(k)}$ is the corresponding  variance-reduced gradient, $\mathbf{g}_{i, t}^{(k)}$ stores the stochastic gradient of the $i$-th sample on the $k$-th worker, which is updated as follows:
\begin{equation}
	\mathbf{g}_{i, t}^{(k)} = \begin{cases}
		\nabla_{\mathbf{x}} f^{(k)}_{i}(\mathbf{x}_{t}^{(k)} , \mathbf{y}_{t}^{(k)} ),  & \text { for } i \in  \mathcal{S}_{t}\\
		\mathbf{g}_{i, t-1}^{(k)} , & \text{otherwise} \ . \\
	\end{cases} 
\end{equation}

Similarly, to update $\mathbf{y}$, the $k$-th worker uses the same mini-batch of samples $\mathcal{S}_{t}$ to compute the variance-reduced gradient regarding $\mathbf{y}$ as follows:
\begin{equation} \label{ut}
	\begin{aligned}
		\mathbf{u}_t^{(k)} & =(1-\rho_{t})\mathbf{u}_{t-1}^{(k)} + \frac{1}{s_t} \sum_{i \in \mathcal{S}_{t}}(\nabla_{\mathbf{y}} f^{(k)}_{i}(\mathbf{x}_{t}^{(k)}, \mathbf{y}_{t}^{(k)})-\nabla_{\mathbf{y}} f^{(k)}_{i}(\mathbf{x}_{t-1}^{(k)}, \mathbf{y}_{t-1}^{(k)}))\\
		&  +\rho_{t}\Big(\frac{1}{s_t} \sum_{i \in \mathcal{S}_{t}}(\nabla_{\mathbf{y}} f^{(k)}_{i}(\mathbf{x}_{t-1}^{(k)}, \mathbf{y}_{t-1}^{(k)})-\mathbf{h}_{i, t-1}^{(k)})  +\frac{1}{n} \sum_{j=1}^{n} \mathbf{h}_{j, t-1}^{(k)}\Big) \ , 
	\end{aligned}
\end{equation}
where $\mathbf{u}_t^{(k)}$ is the variance-reduced gradient for the variable $\mathbf{y}$, $\mathbf{h}_{i, t}^{(k)}$ stores the stochastic gradient of the $i$-th sample on the $k$-th worker for the variable $\mathbf{y}$. Similar to $\mathbf{g}_{i, t}^{(k)}$ , $\mathbf{h}_{i, t}^{(k)}$ is updated as follows:
\begin{equation}
	\mathbf{h}_{i, t}^{(k)} = \begin{cases}
		\nabla_{\mathbf{y}} f^{(k)}_{i}(\mathbf{x}_{t}^{(k)} , \mathbf{y}_{t}^{(k)} ),  & \text { for } i \in  \mathcal{S}_{t}\\
		\mathbf{h}_{i, t-1}^{(k)} , & \text{otherwise}   \ . \\
	\end{cases} 
\end{equation}

After obtaining the variance-reduced gradient,  the $k$-th worker employs the gradient tracking communication scheme to communicate  with its neighboring workers:
\begin{equation}
	\begin{aligned}
		& \mathbf{a}_t^{(k)}  = \sum_{j\in \mathcal{N}_k}w_{kj}\mathbf{a}_{t-1}^{(j)} +  \mathbf{v}_t^{(k)} - \mathbf{v}_{t-1}^{(k)} \ ,   \mathbf{b}_t^{(k)}  = \sum_{j\in \mathcal{N}_k}w_{kj}\mathbf{b}_{t-1}^{(j)} +  \mathbf{u}_t^{(k)} - \mathbf{u}_{t-1}^{(k)}  \ , \\
	\end{aligned}
\end{equation}
where  $\mathcal{N}_k$ is the neighboring workers of the $k$-th worker,  $\mathbf{a}_t^{(k)}$  and $\mathbf{b}_t^{(k)}$ are the gradients after communicating with the neighboring workers. 

Then, the $k$-th worker updates its local model parameter $\mathbf{x}_{t}^{(k)}$ as follows:
\begin{equation}
	\begin{aligned}
		& \mathbf{x}_{t+\frac{1}{2}}^{(k)} =  \sum_{j\in \mathcal{N}_k}w_{kj}\mathbf{x}_t^{(j)}-\gamma_1 \mathbf{a}_t^{(k)}  \ ,  \quad \mathbf{x}_{t+1}^{(k)}  =  \mathbf{x}_t^{(k)} +\eta ({\mathbf{x}}_{t+\frac{1}{2}}^{(k)}   - \mathbf{x}_t^{(k)} )  \ ,
	\end{aligned}
\end{equation}
where $\gamma_1>0$ and $0<\eta<1$ are two hyperparameters.  Similarly, $\mathbf{y}_{t}^{(k)}$ is also updated as follows:
\begin{equation}
	\begin{aligned}
		& {\mathbf{y}}_{t+\frac{1}{2}}^{(k)}  = \sum_{j\in \mathcal{N}_k} w_{kj}\mathbf{y}_t^{(j)}+\gamma_2 \mathbf{b}_t ^{(k)} \ ,  \quad  \mathbf{y}_{t+1}^{(k)}  =  \mathbf{y}_t^{(k)} +\eta ({\mathbf{y}}_{t+\frac{1}{2}}^{(k)}   - \mathbf{y}_t^{(k)} )  \ ,
	\end{aligned}
\end{equation}
where $\gamma_2>0$ and $0<\eta<1$ are two hyperparameters.

All workers in the decentralized training system repeat the aforementioned steps to update $\mathbf{x}$ and $\mathbf{y}$ until it converges.

The variance-reduced gradient estimator  in Eq.~(\ref{vt}) was first proposed in \cite{li2021zerosarah}. But they only focus on the minimization problem and ignore the decentralized setting.  In fact, it is nontrivial to apply this variance-reduced gradient estimator to the decentralized minimax problem. Especially, it is challenging to establish the convergence rate, which is shown in the next section.

\section{Theoretical Analysis}

\subsection{Convergence Rate}

To establish the convergence rate of Algorithm~\ref{dsgda}, we introduce the following assumptions, which are commonly used in existing works.

\begin{assumption} \label{assumption_smooth}
	(Smoothness) Each function $f_i^{(k)}(\cdot, \cdot)$ is $L$-smooth. i.e., for any $ (\mathbf{x}_1, \mathbf{y_1})$ and $ (\mathbf{x}_2, \mathbf{y_2})$,  there exists $L>0$, such that
	\begin{equation}
		\begin{aligned}
			& \|\nabla f_i^{(k)}(\mathbf{x}_1, \mathbf{y}_1)-\nabla f_i^{(k)}(\mathbf{x}_2, \mathbf{y}_2)\|^2  \leq  L^2\|\mathbf{x}_1 - \mathbf{x}_2\|^2 + L^2\|\mathbf{y}_1 - \mathbf{y}_2\|^2  \ .\\
		\end{aligned}
	\end{equation}
\end{assumption}

\begin{assumption} \label{assumption_strong}
	(Strong concavity) The function $f^{(k)}(\mathbf{x}, \mathbf{y})$ is $\mu$-strongly concave with respect to $\mathbf{y}$, i.e., for any $ (\mathbf{x}, \mathbf{y_1})$ and $ (\mathbf{x}, \mathbf{y_2})$,  there exists $\mu>0$, such that
	\begin{equation}
		\begin{aligned}
			& f^{(k)}(\mathbf{x}, \mathbf{y_1})\leq f^{(k)}(\mathbf{x}, \mathbf{y_2}) + \langle \nabla_{\mathbf{y}} f^{(k)}(\mathbf{x}, \mathbf{y_2}), \mathbf{y_1} - \mathbf{y_2}\rangle  - \frac{\mu}{2} \|\mathbf{y_1} - \mathbf{y_2}\|^2 \ .
		\end{aligned}
	\end{equation}
\end{assumption}

Based on these assumption, we denote the condition number by $\kappa=L/\mu>1$.  In addition, we denote $\mathbf{y}^*({\mathbf{x}})=\arg\max_{\mathbf{y}} \frac{1}{K}\sum_{k=1}^{K}f^{(k)}({\mathbf{x}}, \mathbf{y})$ and 
$\Phi(\mathbf{x})=\frac{1}{K}\sum_{k=1}^{K}\Phi^{(k)}(\mathbf{x})=\frac{1}{K}\sum_{k=1}^{K}f^{(k)}(\mathbf{x}, \mathbf{y}^{*}({\mathbf{x}}))$.  According to \cite{lin2020gradient}, we can know that  $\Phi(\mathbf{x})$ is $L_{\Phi}$-smooth, where $L_{\Phi}=2\kappa L$. 

Throughout this paper, we denote $\bar{\mathbf{c}}=\frac{1}{K}\sum_{k=1}^{K}\mathbf{c}_k$, where $\mathbf{c}_k$ is the variable on the $k$-th worker. Then, we establish the convergence rate of our method for nonconvex-strongly-concave problems in Theorem~\ref{theorem}.
\begin{theorem} \label{theorem}
	Given Assumptions~\ref{graph}-\ref{assumption_strong}, if setting $s_t=s_1$ for $t>0$,  $\rho_t=\rho_1=\frac{s_1}{2n}$ for $t>0$, $\rho_0=1$, and 
	\begin{align}
		& \gamma_1\leq \min\left\{\frac{(1-\lambda)^2}{760\kappa L }, \frac{1}{150\kappa L} ,  \frac{\gamma_2}{15\kappa^2 } \right\} \ , \quad  \gamma_2 \leq \min\left\{ \frac{(1-\lambda)^2}{760 L },  \frac{  1}{624  \kappa L}, \frac{1}{6L}   \right\}  \ ,  \quad \eta< \min \left\{1, \frac{1}{2\gamma_1 L_{\Phi}}\right\} \ , 
	\end{align}
	our algorithm is able to achieve the following convergence rate:
	\begin{align}
		&  \frac{1}{T}\sum_{t=0}^{T-1}\left(\mathbb{E}[\|\nabla \Phi(\bar{\mathbf{x}}_{t})\|^2]   +L^2  \mathbb{E}[\|\bar{\mathbf{y}}_t -  {\mathbf{y}^*(\overline{\mathbf{x}}_t)}\|^2]\right)\leq \frac{2(\Phi({\mathbf{x}}_{0})- \Phi(\mathbf{x}_*))}{\gamma_1\eta T}  +  \frac{12 L^2}{\gamma_2\eta T \mu} \mathbb{E}[\|\bar{\mathbf{y}}_{0}   - \mathbf{y}^{*}(\bar{\mathbf{x}}_0)\| ^2 ]  \notag \\
		& \quad + \frac{30}{ T} \frac{n-s_0}{s_0s_1} \frac{1}{K}\sum_{k=1}^{K}\frac{1}{n}\sum_{i=1}^{n}  \mathbb{E}[\|\nabla_{\mathbf{x}} f^{(k)}_i(\mathbf{x}_{0}^{(k)}, \mathbf{y}_{0}^{(k)})\|^2]  +\frac{556\kappa^2}{ T} \frac{n-s_0}{s_0s_1} \frac{1}{K}\sum_{k=1}^{K}\frac{1}{n}\sum_{i=1}^{n} \mathbb{E}[\|\nabla_{\mathbf{y}} f^{(k)}_{i}(\mathbf{x}_{0}^{(k)}, \mathbf{y}_{0}^{(k)})\|^{2}]  \  ,
	\end{align}
	where $\mathbf{x}_*$ denotes the optimal solution. 
\end{theorem}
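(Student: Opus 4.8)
The plan is to follow the standard descent-lemma-plus-consensus-error template for decentralized variance-reduced methods, adapted to the minimax setting where the relevant potential must control both the primal suboptimality of $\Phi$ and the dual tracking error $\|\bar{\mathbf{y}}_t - \mathbf{y}^*(\bar{\mathbf{x}}_t)\|^2$. First I would introduce the averaged iterates $\bar{\mathbf{x}}_t, \bar{\mathbf{y}}_t$ and the averaged gradient estimators $\bar{\mathbf{v}}_t, \bar{\mathbf{u}}_t$, noting that the gradient-tracking recursions force $\bar{\mathbf{a}}_t = \bar{\mathbf{v}}_t$ and $\bar{\mathbf{b}}_t = \bar{\mathbf{u}}_t$ by double stochasticity of $W$, so the averaged dynamics reduce to an inexact gradient descent ascent step with step sizes $\gamma_1\eta$ and $\gamma_2\eta$. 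Using $L_\Phi$-smoothness of $\Phi$ (with $L_\Phi = 2\kappa L$), I would write the primal descent inequality for $\Phi(\bar{\mathbf{x}}_{t+1})$ in terms of $\|\nabla\Phi(\bar{\mathbf{x}}_t)\|^2$, the gradient estimation error $\|\bar{\mathbf{v}}_t - \frac{1}{K}\sum_k \nabla_{\mathbf{x}} f^{(k)}(\bar{\mathbf{x}}_t,\bar{\mathbf{y}}_t)\|^2$, the dual error $\|\bar{\mathbf{y}}_t - \mathbf{y}^*(\bar{\mathbf{x}}_t)\|^2$ (via $\|\nabla\Phi(\bar{\mathbf{x}}_t) - \nabla_{\mathbf{x}} F(\bar{\mathbf{x}}_t,\bar{\mathbf{y}}_t)\| \le L\|\bar{\mathbf{y}}_t - \mathbf{y}^*(\bar{\mathbf{x}}_t)\|$), and the consensus errors $\frac1K\sum_k\|\mathbf{x}_t^{(k)} - \bar{\mathbf{x}}_t\|^2$, $\frac1K\sum_k\|\mathbf{y}_t^{(k)} - \bar{\mathbf{y}}_t\|^2$.

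Next I would derive a contraction recursion for the dual tracking error: by $\mu$-strong concavity, $\|\bar{\mathbf{y}}_{t+1} - \mathbf{y}^*(\bar{\mathbf{x}}_{t+1})\|^2 \le (1-\gamma_2\eta\mu/\text{const})\|\bar{\mathbf{y}}_t - \mathbf{y}^*(\bar{\mathbf{x}}_t)\|^2$ plus cross terms controlled by the movement of $\bar{\mathbf{x}}$ (using $\kappa$-Lipschitzness of $\mathbf{y}^*(\cdot)$), the dual gradient estimation error, and consensus errors; the $\bar{\mathbf{x}}$-movement term is in turn bounded by $\gamma_1^2\eta^2$ times the primal quantities, which is why the step-size constraint $\gamma_1 \le \gamma_2/(15\kappa^2)$ appears. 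Then I would set up the two gradient-estimator error recursions for $\bar{\mathbf{v}}_t$ and $\bar{\mathbf{u}}_t$: exploiting the ZeroSARAH-type structure of Eqs.~(\ref{vt})--(\ref{ut}), the conditional expectation of the error contracts by a factor $(1-\rho_t)$ (here $\rho_1 = s_1/(2n)$) while picking up $O(1/s_1)$ variance terms proportional to the squared iterate increments $\|\bar{\mathbf{x}}_{t}^{(k)} - \bar{\mathbf{x}}_{t-1}^{(k)}\|^2$, $\|\bar{\mathbf{y}}_{t}^{(k)} - \bar{\mathbf{y}}_{t-1}^{(k)}\|^2$, plus a term tracking the staleness of the stored gradient tables $\mathbf{g}, \mathbf{h}$. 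Separately, the consensus errors $\frac1K\sum_k\|\mathbf{x}_t^{(k)} - \bar{\mathbf{x}}_t\|^2$ and the tracking-variable errors $\frac1K\sum_k\|\mathbf{a}_t^{(k)} - \bar{\mathbf{a}}_t\|^2$ obey the usual coupled linear recursions with contraction factor $\lambda^2$ (or $(1+\lambda^2)/2$ after Young's inequality), forced by the estimator increments; solving these gives the $1/(1-\lambda)^2$ dependence.

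The main obstacle — and the crux of the argument — is to assemble all of these pieces into a single Lyapunov function $\mathcal{V}_t = \Phi(\bar{\mathbf{x}}_t) - \Phi(\mathbf{x}_*) + c_1\|\bar{\mathbf{y}}_t - \mathbf{y}^*(\bar{\mathbf{x}}_t)\|^2 + c_2(\text{estimator errors}) + c_3(\text{consensus errors}) + c_4(\text{tracking errors})$ with positive constants chosen so that, under the stated step-size conditions, $\mathcal{V}_{t+1} \le \mathcal{V}_t - \frac{\gamma_1\eta}{2}(\|\nabla\Phi(\bar{\mathbf{x}}_t)\|^2 + L^2\|\bar{\mathbf{y}}_t - \mathbf{y}^*(\bar{\mathbf{x}}_t)\|^2)$. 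This requires carefully balancing the many cross terms: each error quantity's growth must be dominated by the contraction it enjoys in its own recursion minus the leakage it causes in the others, which is precisely where the explicit numerical constants ($760$, $150$, $624$, etc.) in the step-size bounds come from. Once the one-step inequality holds, I would telescope from $t=0$ to $T-1$, divide by $\gamma_1\eta T/2$, use the initialization $\mathbf{v}_{-1} = \mathbf{u}_{-1} = \mathbf{a}_{-1} = \mathbf{b}_{-1} = 0$ and $\mathbf{g}_{i,-1} = \mathbf{h}_{i,-1} = 0$ together with $\rho_0 = 1$, $s_0$ to evaluate $\mathcal{V}_0$ (which produces the $\frac{n-s_0}{s_0 s_1}$ initial-gradient-norm terms and the $\|\bar{\mathbf{y}}_0 - \mathbf{y}^*(\bar{\mathbf{x}}_0)\|^2$ term), and conclude. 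I would also record the auxiliary facts I am using from the cited works: $L_\Phi = 2\kappa L$ and the $\kappa$-Lipschitz continuity of $\mathbf{y}^*(\cdot)$ from \cite{lin2020gradient}, and the unbiasedness/variance structure of the ZeroSARAH estimator from \cite{li2021zerosarah}.
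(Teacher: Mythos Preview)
Your proposal is correct and follows essentially the same approach as the paper's proof: the paper builds precisely the Lyapunov function you describe (with explicit weights $C_0,\dots,C_8$ on the dual error, the per-worker estimator errors for $\mathbf{v},\mathbf{u}$, the $\mathbf{g}/\mathbf{h}$-staleness terms, and the consensus and tracking errors), proves the one-step descent $H_{t+1}-H_t \le -\tfrac{\gamma_1\eta}{2}\big(\|\nabla\Phi(\bar{\mathbf{x}}_t)\|^2 + L^2\|\bar{\mathbf{y}}_t-\mathbf{y}^*(\bar{\mathbf{x}}_t)\|^2\big)$ via exactly the component recursions you list, telescopes, and then bounds the initial potential using $\rho_0=1$ and $s_0$ to produce the $(n-s_0)/(s_0 s_1)$ terms. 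The only cosmetic differences are that the paper tracks per-worker estimator errors $\tfrac{1}{K}\sum_k\|\mathbf{v}_t^{(k)}-\nabla_{\mathbf{x}}f^{(k)}(\mathbf{x}_t^{(k)},\mathbf{y}_t^{(k)})\|^2$ rather than the averaged error, and telescopes from $t=1$ (handling the $t=0\to1$ step separately because the estimator recursions change form at $\rho_0=1$) rather than from $t=0$.
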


\begin{corollary} \label{corollary}
	Given Assumptions~\ref{graph}-\ref{assumption_strong}, by setting $s_0=s_1=\sqrt{n}$, we can get $\gamma_1=O((1-\lambda)^2/\kappa^3)$,   $\gamma_2=O((1-\lambda)^2/\kappa)$, and $\eta=O(1)$ under the worst case. Then,  by setting  $T=O(\frac{\kappa^3}{(1-\lambda)^2\epsilon^2})$,  our algorithm can achieve the $\epsilon$-accuracy solution:
	\begin{equation}
		\begin{aligned}
			& \quad \frac{1}{T}\sum_{t=0}^{T-1} (\mathbb{E}[\|\nabla \Phi(\bar{\mathbf{x}}_{t})\|^2]   + L^2\mathbb{E}[\|\bar{\mathbf{y}}_t -  {\mathbf{y}^*(\overline{\mathbf{x}}_t})\|^2]) \leq \epsilon^2 \ .\\
		\end{aligned}
	\end{equation}

\end{corollary}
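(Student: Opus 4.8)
## Proof Proposal for the Corollary

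\textbf{Overall strategy.} The corollary is essentially a bookkeeping exercise: substitute the choices $s_0 = s_1 = \sqrt{n}$ into the constraints of Theorem~\ref{theorem}, verify that the stated orders of $\gamma_1$, $\gamma_2$, $\eta$ satisfy those constraints, and then count how many iterations $T$ are needed to drive the right-hand side of the convergence bound below $\epsilon^2$. The plan is to treat each of the several terms on the right-hand side of the Theorem bound separately, show each is $O(\epsilon^2)$ once $T = O(\tfrac{\kappa^3}{(1-\lambda)^2\epsilon^2})$, and then combine.

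\textbf{Step 1: Verify the step-size choices.} First I would plug $s_0 = s_1 = \sqrt{n}$ into $\rho_1 = \tfrac{s_1}{2n} = \tfrac{1}{2\sqrt{n}}$ and observe this lies in $[0,1]$ as required. Then, reading off the constraint $\gamma_2 \le \min\{\tfrac{(1-\lambda)^2}{760 L}, \tfrac{1}{624\kappa L}, \tfrac{1}{6L}\}$: since $\kappa > 1$ and $1-\lambda \le 1$, the binding term in the worst case is $\tfrac{(1-\lambda)^2}{760 L}$ versus $\tfrac{1}{624\kappa L}$; taking $\gamma_2 = \Theta(\min\{(1-\lambda)^2, 1/\kappa\}/L)$, and noting that in the regime of interest this is $O((1-\lambda)^2/\kappa)$ up to the relation $L = \kappa\mu$, one gets $\gamma_2 = O((1-\lambda)^2/\kappa)$ after absorbing $1/L$ appropriately (the statement suppresses $\mu$-dependence). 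Next, $\gamma_1 \le \min\{\tfrac{(1-\lambda)^2}{760\kappa L}, \tfrac{1}{150\kappa L}, \tfrac{\gamma_2}{15\kappa^2}\}$; the third term with $\gamma_2 = O((1-\lambda)^2/\kappa)$ gives $\gamma_1 = O((1-\lambda)^2/\kappa^3)$, which dominates (is smaller than) the other two, so $\gamma_1 = O((1-\lambda)^2/\kappa^3)$. Finally $\eta < \min\{1, \tfrac{1}{2\gamma_1 L_\Phi}\}$ with $L_\Phi = 2\kappa L$ and $\gamma_1 = O((1-\lambda)^2/\kappa^3)$ gives $\tfrac{1}{2\gamma_1 L_\Phi} = \Omega(\tfrac{\kappa^2}{(1-\lambda)^2}) \gg 1$, so $\eta = O(1)$ (indeed we may take $\eta$ a constant like $1/2$).

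\textbf{Step 2: Bound each term of the RHS.} With $s_0 = s_1 = \sqrt{n}$ we have $\tfrac{n - s_0}{s_0 s_1} = \tfrac{n - \sqrt{n}}{n} \le 1$, so the last two ``initialization'' terms are bounded by $\tfrac{30}{T}\cdot G_x^2 + \tfrac{556\kappa^2}{T}\cdot G_y^2$ where $G_x^2, G_y^2$ denote the averaged squared initial gradient norms — both $O(\kappa^2/T)$ in order (treating the initial gradients as bounded constants). The first two terms are $\tfrac{2(\Phi(\mathbf{x}_0) - \Phi(\mathbf{x}_*))}{\gamma_1 \eta T}$ and $\tfrac{12 L^2}{\gamma_2 \eta T \mu}\|\bar{\mathbf{y}}_0 - \mathbf{y}^*(\bar{\mathbf{x}}_0)\|^2$. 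Using $\gamma_1 = \Theta((1-\lambda)^2/\kappa^3)$ and $\eta = \Theta(1)$, the first is $O(\tfrac{\kappa^3}{(1-\lambda)^2 T})$; using $\gamma_2 = \Theta((1-\lambda)^2/\kappa)$ and $L^2/\mu = \kappa^2 L / \kappa \cdot (\text{const})$, the second is also $O(\tfrac{\kappa^3}{(1-\lambda)^2 T})$ after the same $\mu$-absorption. So every term on the RHS is $O(\tfrac{\kappa^3}{(1-\lambda)^2 T})$, and choosing $T = O(\tfrac{\kappa^3}{(1-\lambda)^2 \epsilon^2})$ makes the whole RHS $\le \epsilon^2$, which is exactly the claimed conclusion.

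\textbf{Main obstacle.} There is no real mathematical difficulty here — the corollary follows mechanically from Theorem~\ref{theorem}. The one place to be careful is the consistent suppression of $L$ and $\mu$: the step-size constraints in the Theorem carry explicit factors of $1/L$, whereas the corollary writes $\gamma_1 = O((1-\lambda)^2/\kappa^3)$ etc.\ with no $L$ appearing, and similarly $T$ hides the smoothness and any $\mu$ dependence. I would state up front that we treat $L$ (equivalently $\mu$, since $\kappa = L/\mu$ is the tracked quantity) and the initial suboptimality $\Phi(\mathbf{x}_0) - \Phi(\mathbf{x}_*)$ and initial gradient norms as $O(1)$ constants, so that only the dependence on $\kappa$, $1-\lambda$, $n$, and $\epsilon$ is displayed — then the arithmetic of Steps 1 and 2 goes through cleanly. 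A secondary check worth spelling out is that $\tfrac{n-s_0}{s_0 s_1} \le 1$ precisely when $s_0 = s_1 = \sqrt{n}$, which is what removes any $n$-dependence from the final complexity (and is the source of the $\sqrt{n}$ improvement advertised in the introduction, since the per-iteration sample cost is $O(s_1) = O(\sqrt{n})$ and there is no periodic full-gradient term).
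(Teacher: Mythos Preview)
Your proposal is correct and follows exactly the approach the paper intends: the corollary is not given a separate proof in the paper, so the only thing to do is substitute $s_0=s_1=\sqrt n$ into Theorem~\ref{theorem}, read off the worst-case orders of $\gamma_1,\gamma_2,\eta$ from the step-size constraints, and then balance the four RHS terms to determine $T$. Your handling of the hidden $L,\mu$ dependence (treating them as $O(1)$ so that only $\kappa$, $1-\lambda$, $n$, $\epsilon$ appear) and the observation $(n-s_0)/(s_0s_1)\le 1$ are precisely the two bookkeeping points that make the argument go through.
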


\begin{remark} \label{remark_lr_our}
	From Corollary~\ref{corollary}, it is easy to know that  the communication complexity of our method is $O(\frac{\kappa^3}{(1-\lambda)^2\epsilon^2})$ and the sample complexity is $T\times \sqrt{n} = O(\frac{\sqrt{n}\kappa^3}{(1-\lambda)^2\epsilon^2})$. 
	
\end{remark}

\begin{remark} \label{remark_lr_storm}
	Compared with \cite{xian2021faster} whose step sizes are $O((1-\lambda)^2\epsilon/\kappa^3)$ and $O((1-\lambda)^2\epsilon/\kappa)$, our step sizes, i.e., $\gamma_1$, $\gamma_2$, and $\eta$,  are independent of $\epsilon$. In addition, our communication complexity is better than $O(\frac{\kappa^3}{(1-\lambda)^2\epsilon^3})$ of \cite{xian2021faster}.  
\end{remark}

\begin{remark} \label{remark_lr_spider}
	The step sizes of \cite{zhang2021taming} are also independent of $\epsilon$ and have the same order dependence on the spectral gap and condition number as our $\gamma_1$ and $\gamma_2$. But its sample complexity $O(n+\frac{\sqrt{n}\kappa^3}{(1-\lambda)^2\epsilon^2})$ is worse than ours because it needs to periodically compute the full gradient.  
\end{remark}

\subsection{Proof Sketch}
In this subsection, we present the proof sketch of our Theorem~\ref{theorem}. The detailed proof can be found in supplementary materials. 

To investigate the convergence rate of our method, we propose a novel potential function as follows:
\begin{align} \label{eq:potential_function}
	& H_{t}=\mathbb{E}[\Phi(\bar{\mathbf{x}}_{t})]+ C_0 \mathbb{E}[\|\bar{\mathbf{y}}_{t}   - \mathbf{y}^{*}(\bar{\mathbf{x}}_t)\| ^2 ]  \notag \\
	& \quad +  \frac{C_1}{K}\sum_{k=1}^{K}\mathbb{E}[\|\nabla_{\mathbf{x}} f^{(k)}(\mathbf{x}^{(k)}_t, \mathbf{y}^{(k)}_t)-\mathbf{v}_t^{(k)}\|^2] +  \frac{C_2}{K}\sum_{k=1}^{K}\mathbb{E}[\|\nabla_{\mathbf{y}} f^{(k)}(\mathbf{x}^{(k)}_t, \mathbf{y}^{(k)}_t)-\mathbf{u}_t^{(k)}\|^2] \notag \\
	& \quad + \frac{C_3}{K} \sum_{k=1}^{K}\mathbb{E}[\frac{1}{n} \sum_{j=1}^{n}\|\nabla_{\mathbf{x}} f^{(k)}_{j}(\mathbf{x}_{t}^{(k)}, \mathbf{y}_{t}^{(k)})-\mathbf{g}_{j,t}^{(k)}\|^{2}]  + \frac{C_4}{K}\sum_{k=1}^{K}\mathbb{E}[\frac{1}{n} \sum_{j=1}^{n}\|\nabla_{\mathbf{y}} f^{(k)}_{j}(\mathbf{x}_{t}^{(k)}, \mathbf{y}_{t}^{(k)})-\mathbf{h}_{j,t}^{(k)}\|^{2}] \notag \\
	& \quad + \frac{C_5}{K}\sum_{k=1}^{K} \mathbb{E}[\|\bar{\mathbf{x}}_t-\mathbf{x}^{(k)}_{t}\|^2]+ \frac{C_6}{K}\sum_{k=1}^{K} \mathbb{E}[\|\bar{\mathbf{y}}_t-\mathbf{y}^{(k)}_{t}\|^2] \notag \\
	& \quad  + \frac{C_7}{K}\sum_{k=1}^{K} \mathbb{E}[\|\bar{\mathbf{a}}_t-\mathbf{a}^{(k)}_{t}\|^2]+ \frac{C_8}{K}\sum_{k=1}^{K}\mathbb{E}[ \|\bar{\mathbf{b}}_t-\mathbf{b}^{(k)}_{t}\|^2 ] \  , 
\end{align}
where 
\begin{align} 
	&  C_0 = \frac{6\gamma_1  L^2}{\gamma_2\mu}  \ ,   \quad  C_1 =  \frac{3\gamma_1\eta}{\rho_{1}}  \ ,    \quad C_2 = \frac{51\eta \gamma_1 L^2}{\rho_{1}\mu^2} \ ,  \quad  C_3 = \frac{ 14n\rho_{1}\gamma_1\eta}{s_{1}^2} \ , \quad  C_4 = \frac{226n\rho_{1}\eta \gamma_1 L^2}{s_1^2\mu^2}   \ ,  \notag \\
	& C_5 = \frac{22568\gamma_1 \kappa^2  L^2}{(1-\lambda^2)} \ , \quad C_6 = \frac{22568\gamma_1 \kappa^2  L^2}{(1-\lambda^2)} \ , \quad  C_7 = \frac{(1-\lambda^2)\gamma_1\eta}{6\rho_{1}}  \ , \quad   C_8 =  \frac{(1-\lambda^2)\eta \gamma_1 L^2}{6\rho_{1}\mu^2}   \ .
\end{align}

Then, we can investigate how the potential function evolves across iterations by studying each item in this potential function.  In particular, we can get 
\begin{equation}
	\begin{aligned}
		&\quad  H_{t+1} - H_{t}  \leq - \frac{\gamma_1\eta}{2} \mathbb{E}[\|\nabla \Phi(\bar{\mathbf{x}}_{t})\|^2] - \frac{\gamma_1\eta L^2}{2}   \mathbb{E}[\|\bar{\mathbf{y}}_t - {\mathbf{y}^*(\overline{\mathbf{x}}_t)}\|^2] \ . 
	\end{aligned}
\end{equation}
Based on this inequality, we can complete the proof. The detailed proof can be found in Supplementary Materials.

\section{Experiments}
In this section, we present  experimental results to demonstrate the empirical performance of our method. 

\subsection{AUC Maximization}
AUC maximization is a commonly used method for the imbalanced data classification problem. Recently, \cite{ying2016stochastic}  reformulated the AUC maximization problem as an minimax optimization problem to facilicate  stochastic training for large-scale data. In our experiment, we employ our method to optimize the AUC maximization problem, which is defined as follows:

\begin{equation}
	\begin{aligned}
		& \min_{\bm{\theta}, \hat{\theta}_1, \hat{\theta}_2}\max_{\tilde{\theta}}\ell(\bm{\theta}, \hat{\theta}_1, \hat{\theta}_2, \tilde{\theta}) = (1-p) (\bm{\theta}^T\mathbf{a} - \hat{\theta}_1)^2\mathbb{I}_{[b=1]} + p(\bm{\theta}^T\mathbf{a} -\hat{\theta}_2)^2\mathbb{I}_{[b=-1]} \\
		& + 2(1+\tilde{\theta})(p\bm{\theta}^T\mathbf{a}\mathbb{I}_{[b=-1]} - (1-p)\bm{\theta}^T\mathbf{a}\mathbb{I}_{[b=1]})  - p(1-p)\tilde{\theta}^2  + \rho \sum_{j=1}^{d}\frac{\bm{\theta}_j^2}{1+\bm{\theta}_j^2}  \ , 
	\end{aligned}
\end{equation}
where $\bm{\theta}\in \mathbb{R}^d$ denotes the model parameter of the classifier, $\hat{\theta}_1\in \mathbb{R}$, $\hat{\theta}_2\in \mathbb{R}$ are two auxiliary model parameters for the minimization subproblem, $\tilde{\theta}\in \mathbb{R}$ is the model parameter for the maximization subproblem, $\{\mathbf{a}, b\}$ denotes training samples, $\rho>0$ is a hyperparameter for the regularization term. In our experiments, we set $\rho$ to 0.001. Obviously, this is a nonconvex-strongly-concave optimization problem. Then, we can use our Algorithm~\ref{dsgda} to optimize this problem.

\subsection{Experimental Settings}

In our experiments, we employ three binary classification datasets: a9a, w8a, and ijcnn1. All of them are  imbalanced datasets.  The detailed information about these datasets can be found in LIBSVM \footnote{\url{https://www.csie.ntu.edu.tw/~cjlin/libsvmtools/datasets/}}. We randomly select $20\%$ samples as the testing set and the left samples as the training set.  Throughout our experiments, we employ ten workers. Then, the training samples are randomly distributed to all workers.

To evaluate the performance of our method, we compare it with the state-of-the-art decentralized optimization algorithm: DM-HSGD \cite{xian2021faster},  GT-SRVR \cite{zhang2021taming}, and GT-SRVRI \cite{zhang2021taming}.  As for their step sizes, we set them in terms of Remarks~\ref{remark_lr_storm} and~\ref{remark_lr_spider}. Specifically, since we employed the  Erdos-Renyi random graph with the edge probability being $0.5$ to generate the communication network, whose spectral gap is in the order of $O(1)$ \cite{ying2021exponential}, we assume the spectral gap as $0.5$. Additionally, the solution accuracy $\epsilon$ is set to $0.01$. Then, the step sizes of two variables of DM-HSGD are set to $(1-\lambda)^2\epsilon=0.5^2\times0.01$ and the coefficient for the variance-reduced gradient estimator is set to $\epsilon\min\{1, n\epsilon\}=0.01\times 10 \times 0.01$ according to Theorem~1 in \cite{xian2021faster}. As for GT-SRVR and GT-SRVRI, the step sizes of two variables are set to $0.5^2$ based on Remark \ref{remark_lr_spider} \footnote{Since GT-SRVR's theoretical step size leads to divergence for a9a dataset, we scaled it by $0.01$ for the random graph and $0.5$ for the line graph, respectively.  }. As for our method, the step sizes $\gamma_1$ and $\gamma_2$ are set to $(1-\lambda)^2=0.5^2$ in terms of Remark~\ref{remark_lr_our}. Note that we omit the condition number for all step sizes since they are the same for all methods and it is difficult to obtain. As for our method,  since $\eta$ is independent of the spectral gap and the solution accuracy, we set it to $0.9$ throughout our experiments. Additionally, $\rho_1$ is set to $0.5/\sqrt{n}$. Moreover, the batch size is set to $\sqrt{n}$ for GT-SRVR  and our method according to Corollary~\ref{corollary}, and that is set to  64 for DM-HSGD.

\begin{figure*}[ht]
	\centering 
	\subfigure[a9a]{
		\includegraphics[scale=0.387]{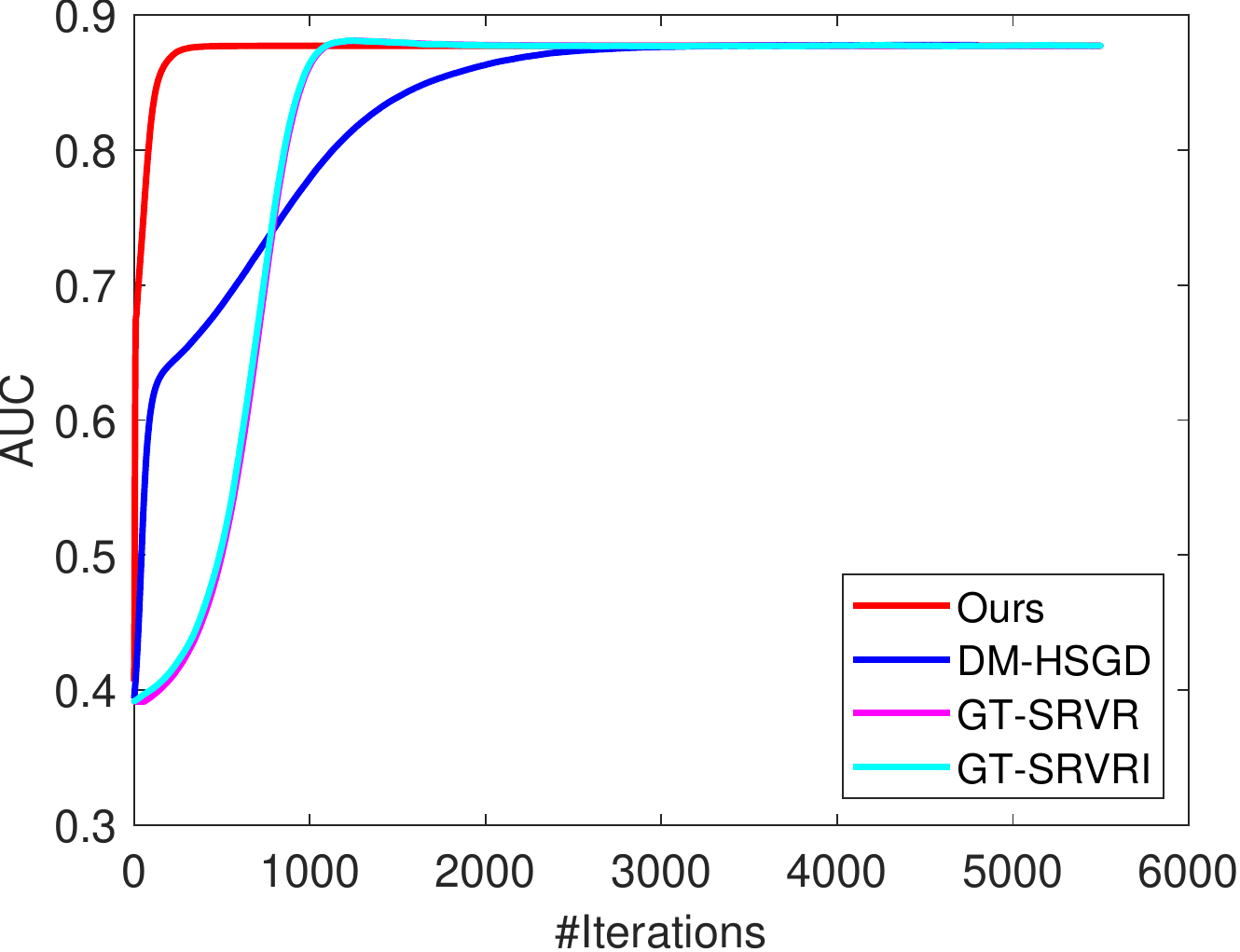}
	}
	\hspace{-12pt}
	\subfigure[w8a]{
		\includegraphics[scale=0.387]{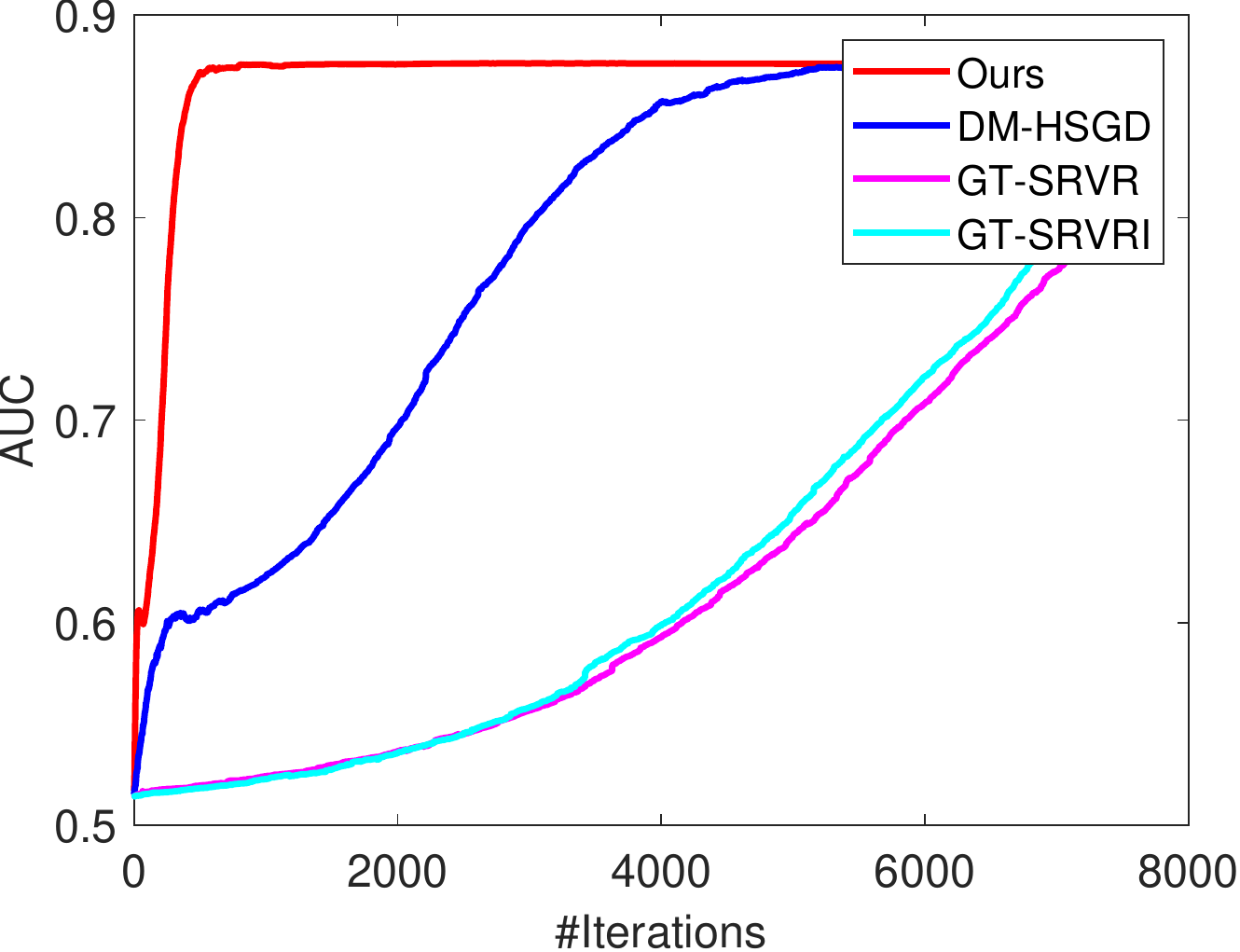}
	}
	\hspace{-12pt}
	\subfigure[ijcnn1]{
		\includegraphics[scale=0.387]{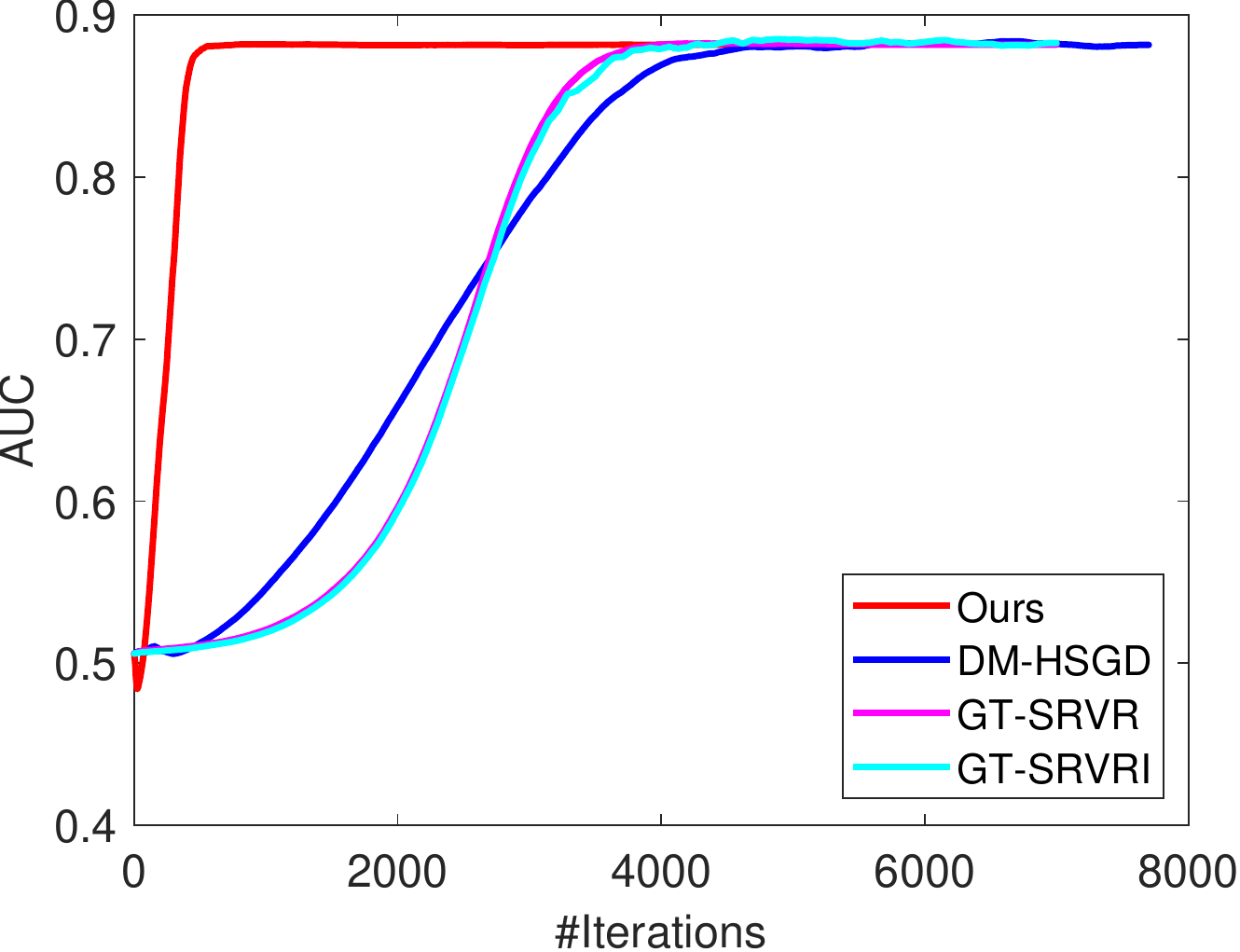}
	}
	\caption{The test AUC versus the number of iterations when using the random communication graph. }
	\label{fig_rand_iter}
\end{figure*}
\begin{figure*}[h]
	\centering 
	\subfigure[a9a]{
		\includegraphics[scale=0.4]{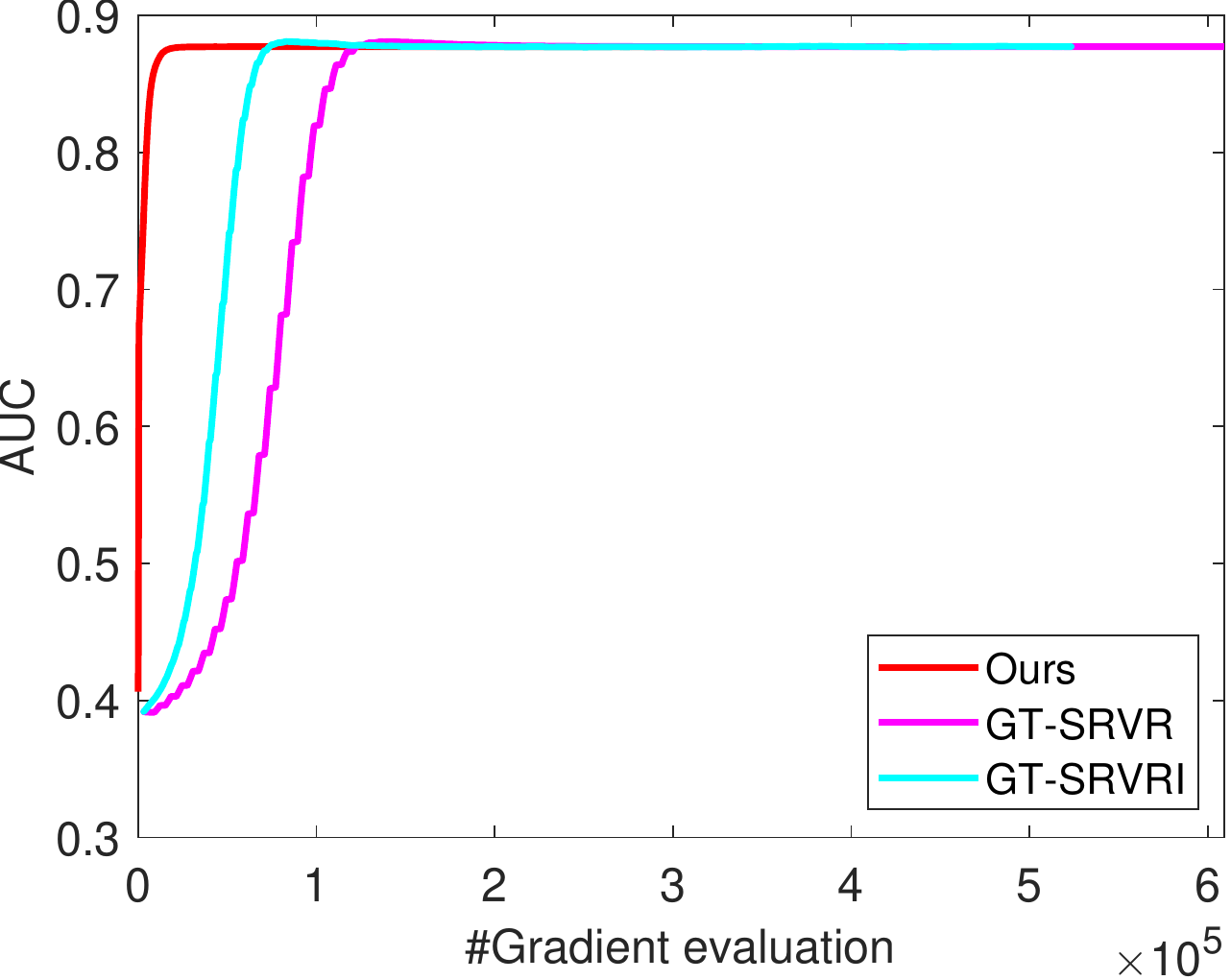}
	}
	\hspace{-10pt}
	\subfigure[w8a]{
		\includegraphics[scale=0.4]{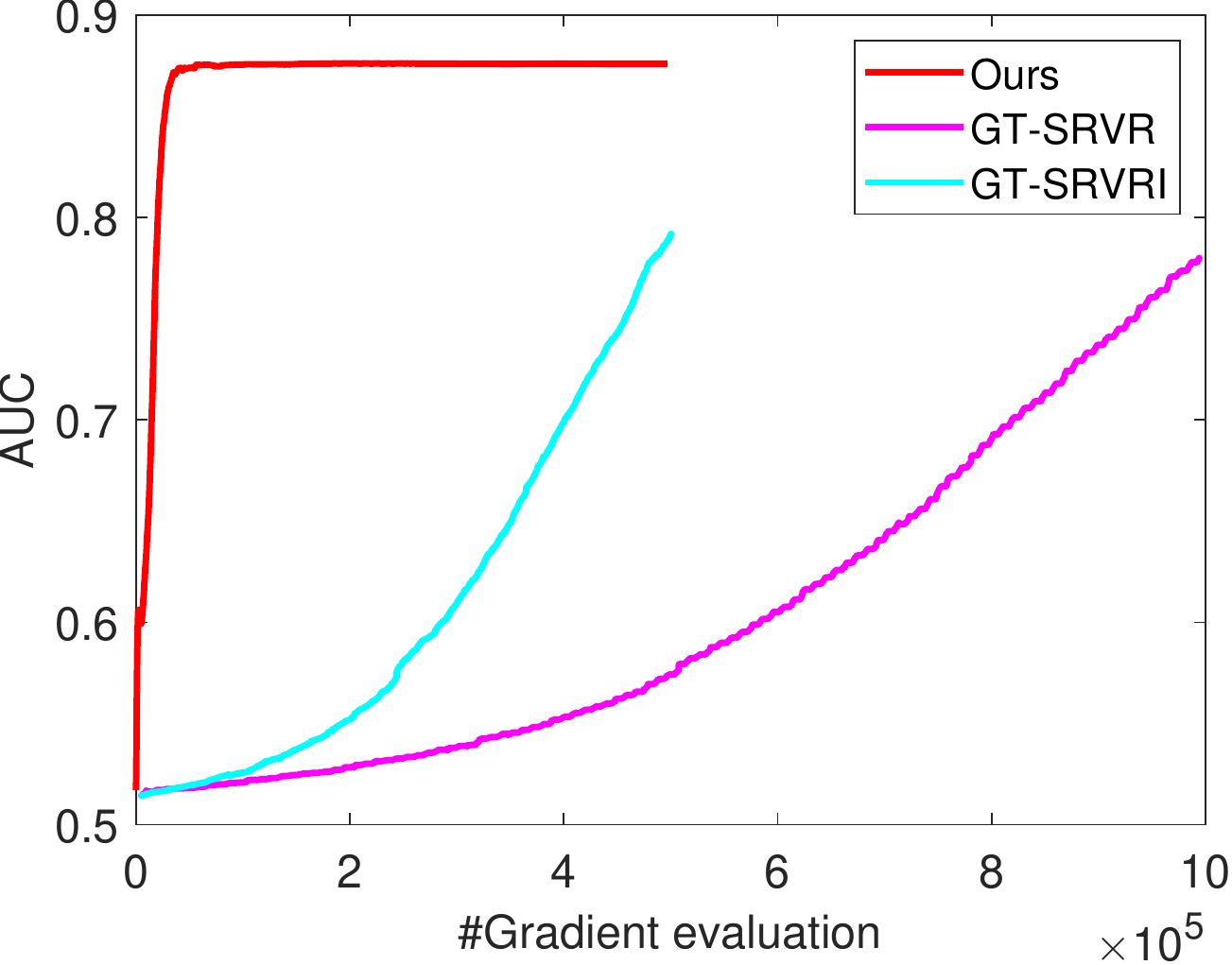}
	}
	\hspace{-10pt}
	\subfigure[ijcnn1]{
		\includegraphics[scale=0.4]{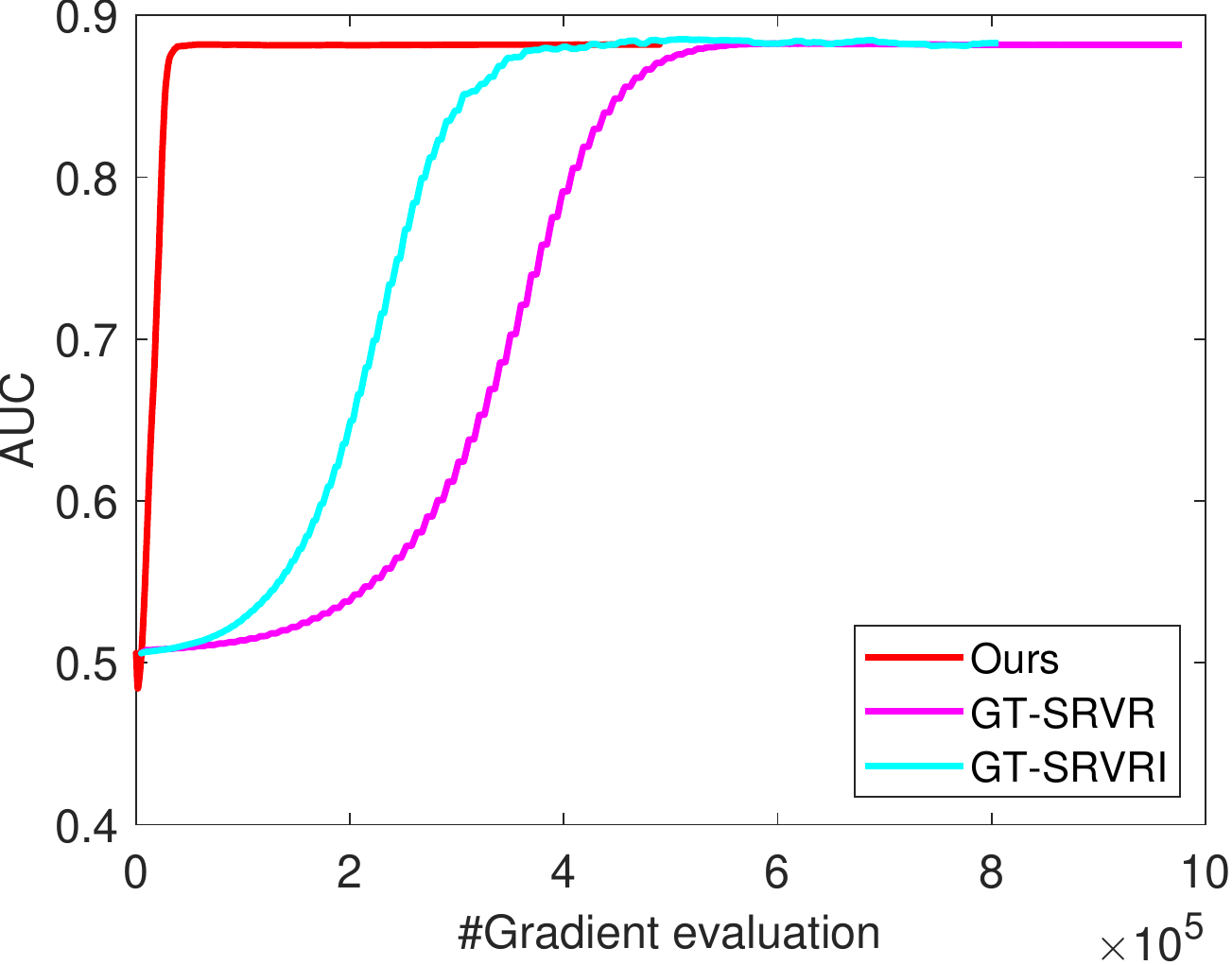}
	}
	\caption{The test AUC versus the number of gradient evaluation when using the random communication graph.}
	\label{fig_rand_grad}
\end{figure*}

\begin{figure}[h]
	\centering 
	\subfigure[Iterations]{
		\includegraphics[scale=0.475]{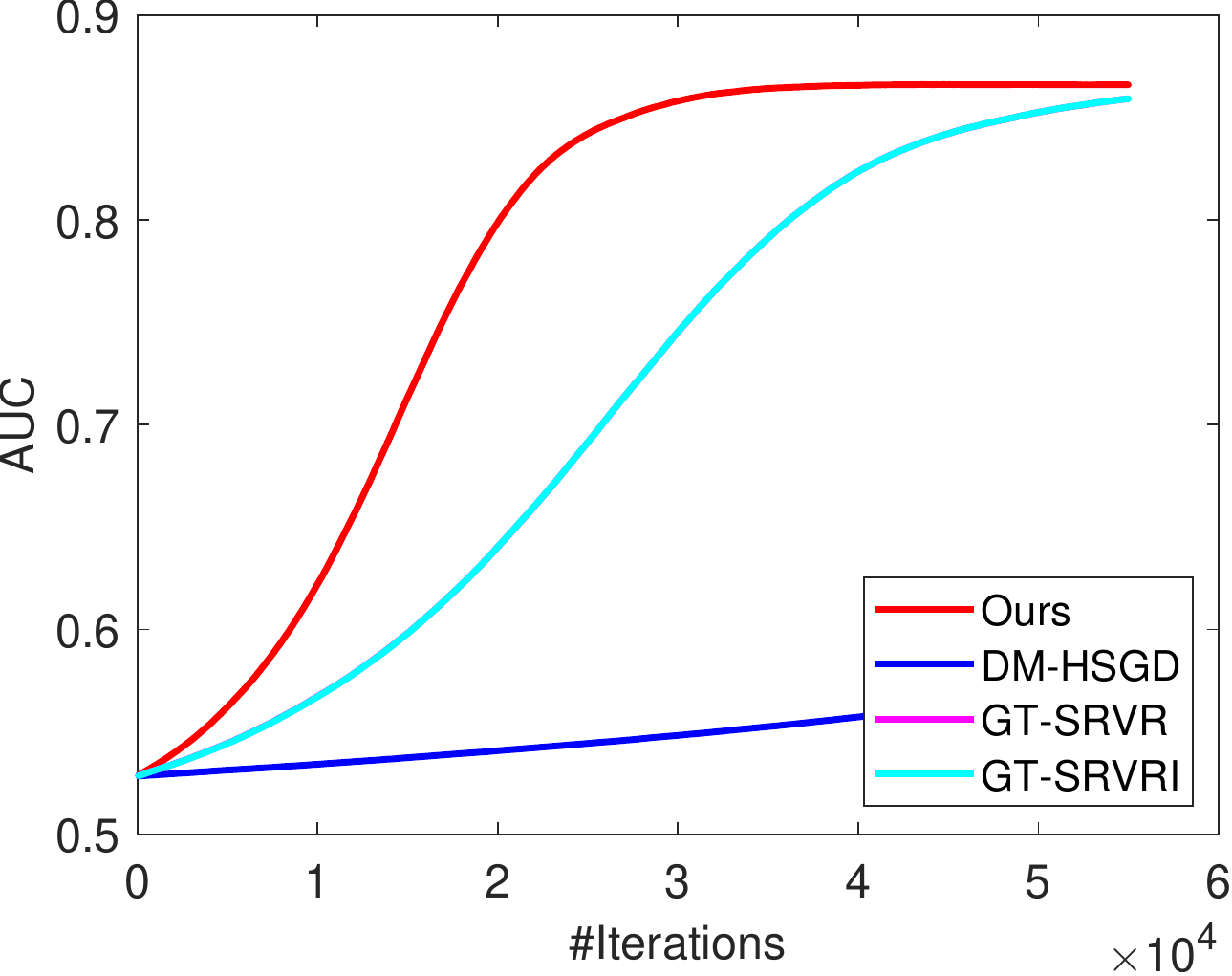}
	}
	\subfigure[Gradient evaluation]{
		\includegraphics[scale=0.475]{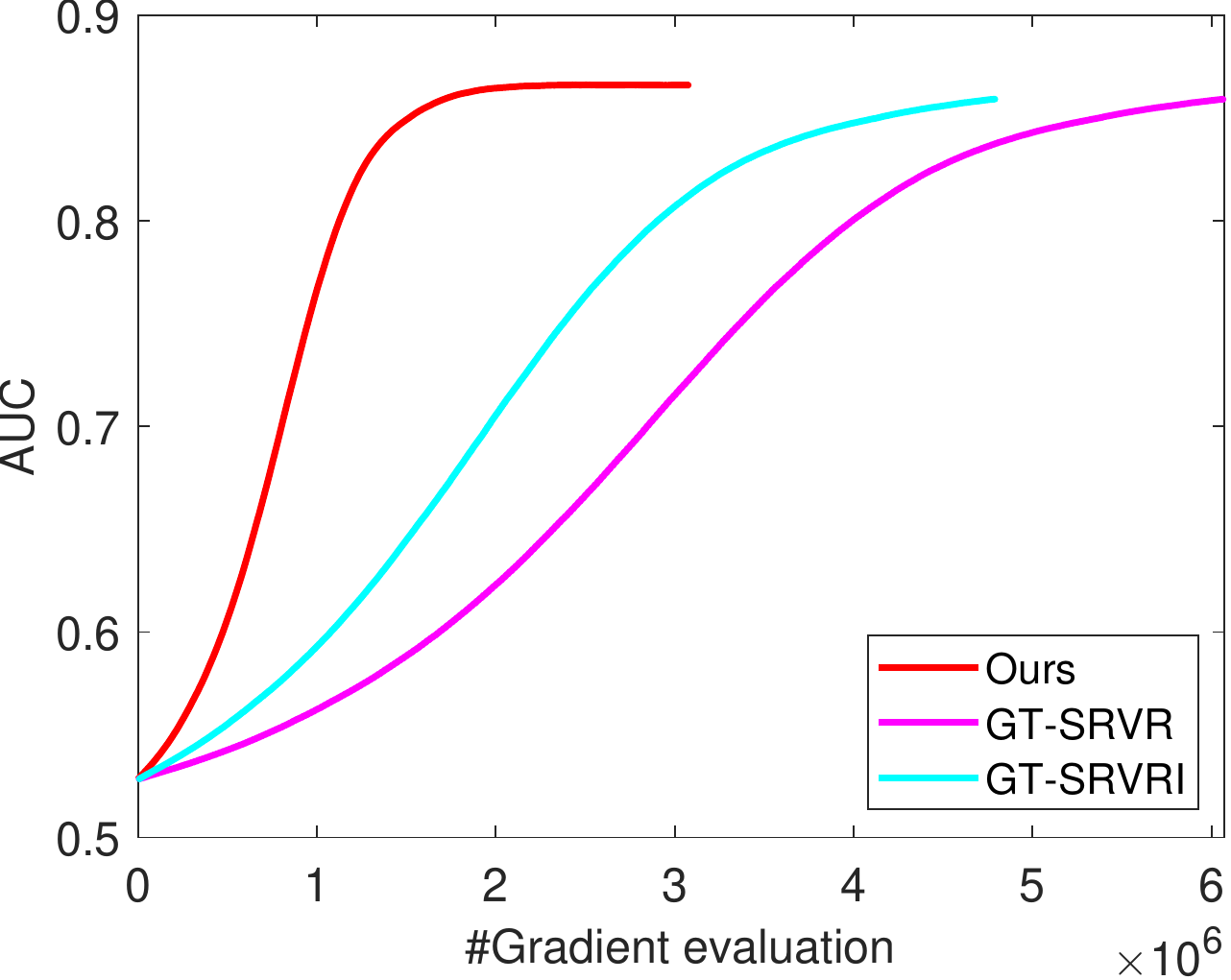}
	}
	\caption{The test AUC  when using the line communication graph for a9a dataset. }
	\label{fig_line_a9a}
\end{figure}

\begin{figure}[h!]
	\centering 
	\subfigure[Iterations]{
		\includegraphics[scale=0.475]{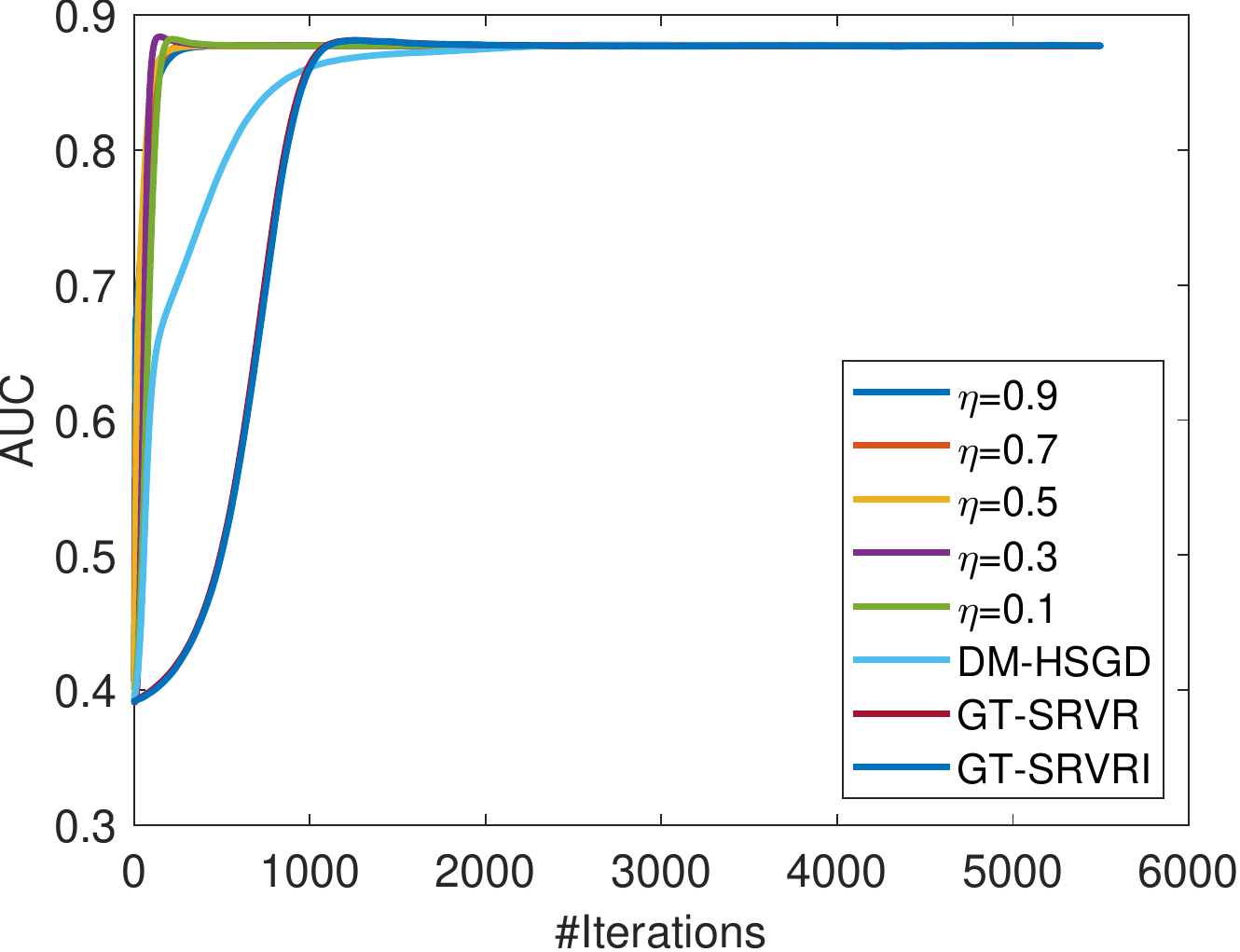}
	}
	\subfigure[Gradient evaluation]{
		\includegraphics[scale=0.475]{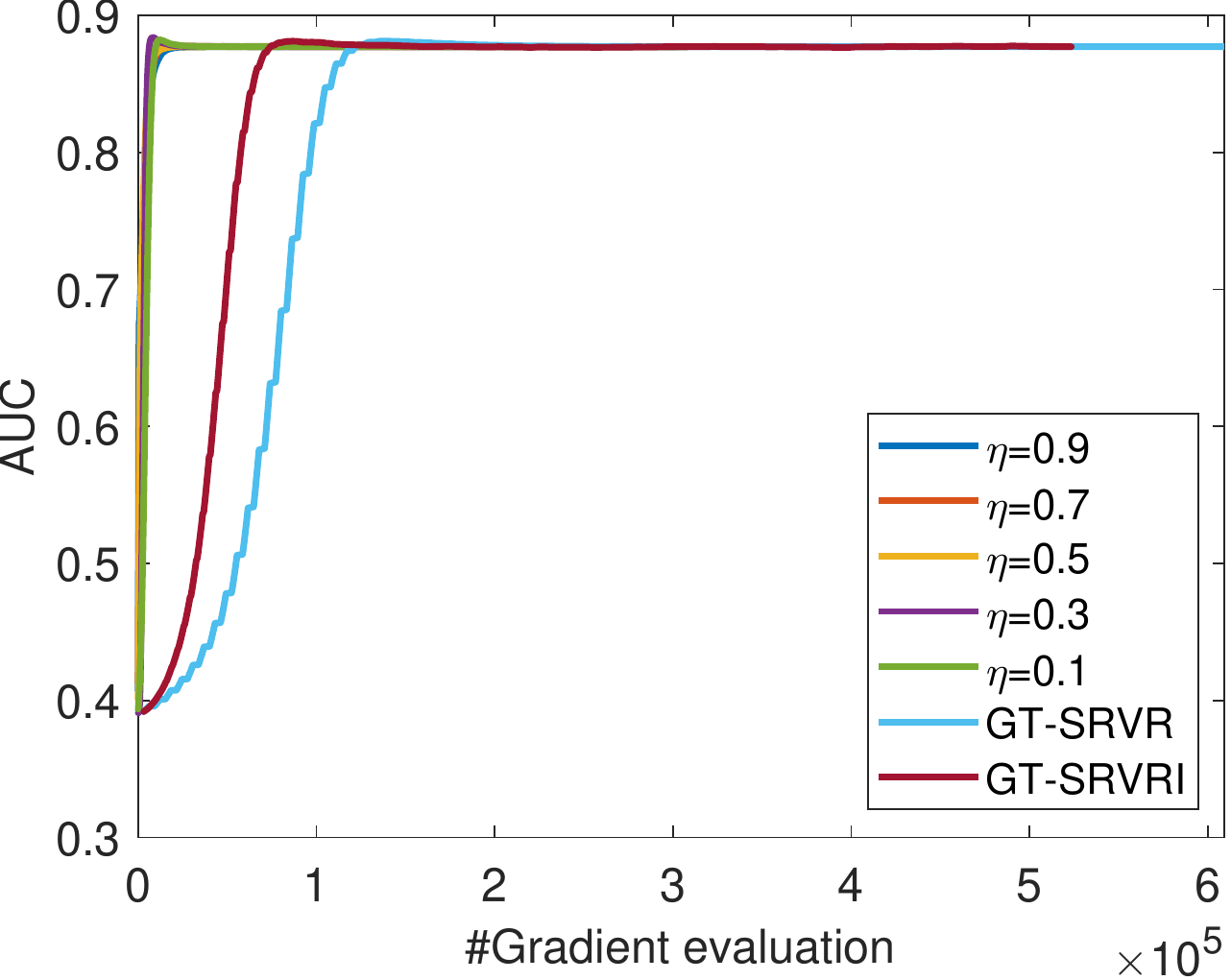}
	}
	
	\caption{The test AUC on different $\eta$ when using the random communication graph for a9a dataset. }
	\label{fig_random_a9a_eta}
\end{figure}

\subsection{Experimental Results}
To compare our method with  baseline methods, we plot  test AUC  versus the number of iterations in Figure~\ref{fig_rand_iter} and that versus the number of gradient evaluation in Figure~\ref{fig_rand_grad} \footnote{We didn't plot the gradient evaluation of DM-HSGD since it is for the stochastic setting, rather than the finite-sum setting.}.  From Figure~\ref{fig_rand_iter}, it can be observed that our method converges much faster than all baseline methods in terms of the number of iterations, which confirms the efficacy of our method. Additionally,  from Figure~\ref{fig_rand_grad}, it can be observed that our method converges faster than GT-SRVR and GT-SRVRI in terms of the number of gradient evaluation, which confirms  the sample complexity of our method is much smaller than theirs.

Moreover, we verify the performance of our method on the line communication network. In particular, in a line communication network, each worker connects with only two neighboring workers. Its spectral gap is in the order of $O(1/K^2)$ \cite{ying2021exponential} where $K=10$ in our experiments. Accordingly, we set the step size  to $(1-\lambda)^2\epsilon=0.01^2\times0.01$ for DM-HSGD, $(1-\lambda)^2=0.01^2$ for other methods. The other hyperparameters are the same with the random communication network. In Figure~\ref{fig_line_a9a}, we plot the test AUC score versus the number of iterations and the number of gradient evaluation for a9a dataset. It can be observed that our method still converges much faster than two baseline methods, which further confirms the efficacy of our method. 

Finally, we demonstrate the performance of our method with different values of $\eta$ in Figure~\ref{fig_random_a9a_eta}. It can be observed that our method with different $\eta$ still converges much faster than baseline methods in terms of the number of iterations and gradient evaluation.

\section{Conclusion}
In this paper, we developed a novel decentralized stochastic gradient descent ascent method for the finite-sum optimization problems. 
We also established the convergence rate of our method, providing the sample complexity and communication complexity. Importantly,  our method can achieve the better communication or computation complexity than existing decentralized methods. Finally, the extensive experimental results confirm the efficacy of our method.

\bibliographystyle{abbrv}
\bibliography{egbib}

\newpage
\onecolumn
\appendix
\section{Supplementary Materials}

Throughout this paper, we  denote  $C_{t}=[\mathbf{c}_{t}^{(1)}, \cdots,  \mathbf{c}_{t}^{(K)} ]$ and $\bar{C}_{t}=[\bar{\mathbf{c}}_{t}, \cdots,  \bar{\mathbf{c}}_{t} ]$ where $\mathbf{c}_{t}^{(k)}$ denotes any variable on the $k$-th device in the $t$-th iteration. 

\begin{lemma} \label{lemma_fx}
	Given Assumption~\ref{assumption_smooth}-\ref{assumption_strong}, by setting $\eta \leq \frac{1}{2\gamma_1 L_{\Phi}}$, we have
	\begin{align}
		& \Phi(\bar{\mathbf{x}}_{t+1}) \leq \Phi(\bar{\mathbf{x}}_{t}) - \frac{\gamma_1\eta}{2} \|\nabla \Phi(\bar{\mathbf{x}}_{t})\|^2-  \frac{\gamma_1\eta}{4} \|\bar{\mathbf{v}}_t\|^2   + \gamma_1\eta L^2 \|\bar{\mathbf{y}}_t -  \mathbf{y}^*(\bar{\mathbf{x}}_t)\|^2\notag \\
		& \quad    +  \frac{2\gamma_1\eta L^2}{K}\sum_{k=1}^{K} \|\bar{\mathbf{x}}_t-\mathbf{x}^{(k)}_{t}\|^2  +  \frac{2\gamma_1\eta L^2}{K}\sum_{k=1}^{K} \|\bar{\mathbf{y}}_t-\mathbf{y}^{(k)}_{t}\|^2  + \frac{2\gamma_1\eta}{K}\sum_{k=1}^{K}\|\nabla_{\mathbf{x}} f^{(k)}(\mathbf{x}^{(k)}_t, \mathbf{y}^{(k)}_t) -\mathbf{v}^{(k)}_t\|^2 \ . 
	\end{align}
\end{lemma}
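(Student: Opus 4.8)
\textbf{Proof proposal for Lemma~\ref{lemma_fx}.}

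The plan is to start from the $L_\Phi$-smoothness of $\Phi$ established via \cite{lin2020gradient}, apply it to the averaged iterates $\bar{\mathbf{x}}_{t+1}$ and $\bar{\mathbf{x}}_t$, and then control the resulting inner-product and quadratic terms. First I would observe that averaging the update $\mathbf{x}_{t+1}^{(k)} = \mathbf{x}_t^{(k)} + \eta(\mathbf{x}_{t+1/2}^{(k)} - \mathbf{x}_t^{(k)})$ over $k$, together with the double-stochasticity of $W$ (Assumption~\ref{graph}, which gives $\frac{1}{K}\sum_k \sum_{j\in\mathcal N_k} w_{kj}\mathbf{x}_t^{(j)} = \bar{\mathbf{x}}_t$) and the fact that the gradient-tracking variable satisfies $\bar{\mathbf{a}}_t = \bar{\mathbf{v}}_t$ (a telescoping/induction argument on $\mathbf{a}_t^{(k)} = \sum_j w_{kj}\mathbf{a}_{t-1}^{(j)} + \mathbf{v}_t^{(k)} - \mathbf{v}_{t-1}^{(k)}$ with $\mathbf{a}_{-1}^{(k)}=\mathbf{v}_{-1}^{(k)}=0$), yields the clean recursion $\bar{\mathbf{x}}_{t+1} = \bar{\mathbf{x}}_t - \gamma_1\eta\,\bar{\mathbf{v}}_t$. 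This reduces the update of the average to an inexact gradient step on $\Phi$ with step size $\gamma_1\eta$.

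Next I would write $\Phi(\bar{\mathbf{x}}_{t+1}) \le \Phi(\bar{\mathbf{x}}_t) - \gamma_1\eta\langle \nabla\Phi(\bar{\mathbf{x}}_t), \bar{\mathbf{v}}_t\rangle + \frac{L_\Phi \gamma_1^2\eta^2}{2}\|\bar{\mathbf{v}}_t\|^2$, and use $\eta \le \frac{1}{2\gamma_1 L_\Phi}$ to bound the last term by $\frac{\gamma_1\eta}{4}\|\bar{\mathbf{v}}_t\|^2$. For the cross term I would use the identity $-\langle a,b\rangle = \frac12\|a-b\|^2 - \frac12\|a\|^2 - \frac12\|b\|^2$ with $a = \nabla\Phi(\bar{\mathbf{x}}_t)$, $b = \bar{\mathbf{v}}_t$, producing $-\frac{\gamma_1\eta}{2}\|\nabla\Phi(\bar{\mathbf{x}}_t)\|^2 - \frac{\gamma_1\eta}{2}\|\bar{\mathbf{v}}_t\|^2 + \frac{\gamma_1\eta}{2}\|\nabla\Phi(\bar{\mathbf{x}}_t) - \bar{\mathbf{v}}_t\|^2$. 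Combining with the quadratic term leaves $-\frac{\gamma_1\eta}{4}\|\bar{\mathbf{v}}_t\|^2$ as desired, and the whole task reduces to bounding $\|\nabla\Phi(\bar{\mathbf{x}}_t) - \bar{\mathbf{v}}_t\|^2$.

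For that residual I would insert intermediate quantities and split via $\|\cdot\|^2 \le 2\|\cdot\|^2 + 2\|\cdot\|^2$ (or a three-way split): $\nabla\Phi(\bar{\mathbf{x}}_t) = \nabla_{\mathbf{x}} F(\bar{\mathbf{x}}_t, \mathbf{y}^*(\bar{\mathbf{x}}_t))$ by Danskin's theorem; compare this against $\frac1K\sum_k \nabla_{\mathbf{x}} f^{(k)}(\mathbf{x}_t^{(k)},\mathbf{y}_t^{(k)})$ using $L$-smoothness (Assumption~\ref{assumption_smooth}), which generates the consensus terms $\frac{L^2}{K}\sum_k\|\bar{\mathbf{x}}_t - \mathbf{x}_t^{(k)}\|^2$, $\frac{L^2}{K}\sum_k\|\bar{\mathbf{y}}_t - \mathbf{y}_t^{(k)}\|^2$, and the optimality-gap term $L^2\|\bar{\mathbf{y}}_t - \mathbf{y}^*(\bar{\mathbf{x}}_t)\|^2$; then compare $\frac1K\sum_k\nabla_{\mathbf{x}} f^{(k)}(\mathbf{x}_t^{(k)},\mathbf{y}_t^{(k)})$ against $\bar{\mathbf{v}}_t = \frac1K\sum_k\mathbf{v}_t^{(k)}$, which contributes $\frac{1}{K}\sum_k\|\nabla_{\mathbf{x}} f^{(k)}(\mathbf{x}_t^{(k)},\mathbf{y}_t^{(k)}) - \mathbf{v}_t^{(k)}\|^2$ after using Jensen's inequality to pull the average outside the norm. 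Tracking the constants so they land exactly at $\gamma_1\eta L^2$ for the $\mathbf{y}^*$-gap, $2\gamma_1\eta L^2$ for each consensus term, and $2\gamma_1\eta$ for the gradient-estimation term is the bookkeeping part; the main obstacle I anticipate is the first reduction — verifying carefully that $\bar{\mathbf{a}}_t = \bar{\mathbf{v}}_t$ for all $t$ (hence that the averaged dynamics really is $\bar{\mathbf{x}}_{t+1} = \bar{\mathbf{x}}_t - \gamma_1\eta\bar{\mathbf{v}}_t$ with no stray consensus error), since everything downstream depends on that clean recursion, and then choosing the split of $\|\nabla\Phi(\bar{\mathbf{x}}_t)-\bar{\mathbf{v}}_t\|^2$ so the constants come out as stated rather than merely up to universal factors.
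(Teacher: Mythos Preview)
Your proposal is correct and matches the paper's proof: $L_\Phi$-smoothness of $\Phi$, the averaged recursion $\bar{\mathbf{x}}_{t+1}=\bar{\mathbf{x}}_t-\gamma_1\eta\,\bar{\mathbf{v}}_t$ via $\bar{\mathbf{a}}_t=\bar{\mathbf{v}}_t$, the polarization identity on the inner product, and then a splitting of $\|\nabla\Phi(\bar{\mathbf{x}}_t)-\bar{\mathbf{v}}_t\|^2$. The only bookkeeping detail needed to recover the stated constants exactly is that the paper first inserts $\frac{1}{K}\sum_k\nabla_{\mathbf{x}} f^{(k)}(\bar{\mathbf{x}}_t,\bar{\mathbf{y}}_t)$ at the \emph{averaged} point (so the $\|\bar{\mathbf{y}}_t-\mathbf{y}^*(\bar{\mathbf{x}}_t)\|^2$ term arises from a single two-way split and carries coefficient $\gamma_1\eta L^2$), and only then splits the remainder against $\frac{1}{K}\sum_k\nabla_{\mathbf{x}} f^{(k)}(\mathbf{x}_t^{(k)},\mathbf{y}_t^{(k)})$ and $\bar{\mathbf{v}}_t$ to produce the $2\gamma_1\eta L^2$ and $2\gamma_1\eta$ coefficients; going directly from $\nabla\Phi(\bar{\mathbf{x}}_t)$ to the local gradients as you sketch would instead yield $\|\mathbf{y}^*(\bar{\mathbf{x}}_t)-\mathbf{y}_t^{(k)}\|^2$ and an extra factor of $2$ on the optimality-gap term.
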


\begin{proof}
	\begin{align}
		& \Phi(\bar{\mathbf{x}}_{t+1}) \leq \Phi(\bar{\mathbf{x}}_{t}) + \langle \nabla \Phi(\bar{\mathbf{x}}_{t}), \bar{\mathbf{x}}_{t+1} - \bar{\mathbf{x}}_{t}\rangle  + \frac{L_{\Phi}}{2} \|\bar{\mathbf{x}}_{t+1} - \bar{\mathbf{x}}_{t}\|^2 \notag \\
		& = \Phi(\bar{\mathbf{x}}_{t}) - \gamma_1\eta\langle \nabla \Phi(\bar{\mathbf{x}}_{t}), \bar{\mathbf{v}}_t\rangle  + \frac{\gamma_1^2\eta^2L_{\Phi}}{2} \|\bar{\mathbf{v}}_t\|^2 \notag \\
		& = \Phi(\bar{\mathbf{x}}_{t}) - \frac{\gamma_1\eta}{2} \|\nabla \Phi(\bar{\mathbf{x}}_{t})\|^2+ \Big(\frac{\gamma_1^2\eta^2L_{\Phi}}{2}-  \frac{\gamma_1\eta}{2}\Big) \|\bar{\mathbf{v}}_t\|^2  + \frac{\gamma_1\eta}{2} \|\nabla \Phi(\bar{\mathbf{x}}_{t}) -\bar{\mathbf{v}}_t\|^2 \notag \\
		& \leq \Phi(\bar{\mathbf{x}}_{t}) - \frac{\gamma_1\eta}{2} \|\nabla \Phi(\bar{\mathbf{x}}_{t})\|^2+ \Big(\frac{\gamma_1^2\eta^2L_{\Phi}}{2}-  \frac{\gamma_1\eta}{2}\Big) \|\bar{\mathbf{v}}_t\|^2 \notag \\
		& \quad  + \gamma_1\eta\|\nabla \Phi(\bar{\mathbf{x}}_{t}) -\frac{1}{K}\sum_{k=1}^{K}\nabla_{\mathbf{x}} f^{(k)}(\bar{\mathbf{x}}_{t}, \bar{\mathbf{y}})\|^2  + \gamma_1\eta\|\frac{1}{K}\sum_{k=1}^{K}\nabla_{\mathbf{x}} f^{(k)}(\bar{\mathbf{x}}_{t}, \bar{\mathbf{y}}) -\bar{\mathbf{v}}_t\|^2\notag \\
		& \leq \Phi(\bar{\mathbf{x}}_{t}) - \frac{\gamma_1\eta}{2} \|\nabla \Phi(\bar{\mathbf{x}}_{t})\|^2+ \Big(\frac{\gamma_1^2\eta^2L_{\Phi}}{2}-  \frac{\gamma_1\eta}{2}\Big) \|\bar{\mathbf{v}}_t\|^2 \notag \\
		& \quad  + \gamma_1\eta\|\nabla \Phi(\bar{\mathbf{x}}_{t}) -\frac{1}{K}\sum_{k=1}^{K} \nabla_{\mathbf{x}} f^{(k)}(\bar{\mathbf{x}}_{t}, \bar{\mathbf{y}})\|^2  + 2\gamma_1\eta\|\frac{1}{K}\sum_{k=1}^{K}\nabla_{\mathbf{x}} f^{(k)}(\bar{\mathbf{x}}_{t}, \bar{\mathbf{y}}) - \frac{1}{K}\sum_{k=1}^{K}\nabla_{\mathbf{x}} f^{(k)}(\mathbf{x}^{(k)}_t, \mathbf{y}^{(k)}_t)\|^2\notag \\
		& \quad + 2\gamma_1\eta\|\frac{1}{K}\sum_{k=1}^{K}\nabla_{\mathbf{x}} f^{(k)}(\mathbf{x}^{(k)}_t, \mathbf{y}^{(k)}_t) -\bar{\mathbf{v}}_t\|^2\notag \\
		& \leq \Phi(\bar{\mathbf{x}}_{t}) - \frac{\gamma_1\eta}{2} \|\nabla \Phi(\bar{\mathbf{x}}_{t})\|^2-  \frac{\gamma_1\eta}{4} \|\bar{\mathbf{v}}_t\|^2   + \gamma_1\eta L^2 \|\bar{\mathbf{y}}_t -  \mathbf{y}^*(\bar{\mathbf{x}}_t)\|^2 \notag \\
		& \quad  +  \frac{2\gamma_1\eta L^2}{K}\sum_{k=1}^{K} \|\bar{\mathbf{x}}_t-\mathbf{x}^{(k)}_{t}\|^2  +  \frac{2\gamma_1\eta L^2}{K}\sum_{k=1}^{K} \|\bar{\mathbf{y}}_t-\mathbf{y}^{(k)}_{t}\|^2  + \frac{2\gamma_1\eta}{K}\sum_{k=1}^{K}\|\nabla_{\mathbf{x}} f^{(k)}(\mathbf{x}^{(k)}_t, \mathbf{y}^{(k)}_t) -\mathbf{v}^{(k)}_t\|^2 \ , 
	\end{align}
	where the last inequality holds due to $\eta \leq \frac{1}{2\gamma_1 L_{\Phi}}$  and the following inequality:
	\begin{align}
		& \quad 	\|\nabla \Phi(\bar{\mathbf{x}}_{t}) -\frac{1}{K}\sum_{k=1}^{K} \nabla_{\mathbf{x}} f^{(k)}(\bar{\mathbf{x}}_{t}, \bar{\mathbf{y}})\|^2 \notag \\
		& = \|\frac{1}{K}\sum_{k=1}^{K}\nabla_{\mathbf{x}} f^{(k)}(\bar{\mathbf{x}}_{t}, \mathbf{y}^*(\bar{\mathbf{x}}_t)) -\frac{1}{K}\sum_{k=1}^{K}\nabla_{\mathbf{x}} f^{(k)}(\bar{\mathbf{x}}_{t}, \bar{\mathbf{y}})\|^2  \notag \\
		& \leq \frac{L^2}{K}\sum_{k=1}^{K} (\|\bar{\mathbf{x}}_t-\bar{\mathbf{x}}_t\|^2 + \| \mathbf{y}^{*}(\bar{\mathbf{x}}_t)- \bar{\mathbf{y}}\|^2) \notag \\
		& = L^2\| \mathbf{y}^{*}(\bar{\mathbf{x}}_t)- \bar{\mathbf{y}}\|^2 \ . 
	\end{align}
	
\end{proof}

\begin{lemma} \label{lemma_y}
	Given Assumption~\ref{assumption_smooth}-\ref{assumption_strong},  by setting $\gamma_2\leq \frac{1}{6L}$ and $\eta<1$, we have
	\begin{align}
		&  \|\bar{\mathbf{y}}_{t+1} - {\mathbf{y}^*(\overline{\mathbf{x}}_{t+1})}\| ^2   \leq (1-\frac{\eta\gamma_2\mu}{4})  \|\bar{\mathbf{y}}_{t}   - {\mathbf{y}^*(\overline{\mathbf{x}}_t)}\| ^2 - \frac{3\gamma_2^2\eta^2}{4} \|\bar{\mathbf{u}}_t\|^2  + \frac{25\eta\gamma_1^2 \kappa^2 }{6\gamma_2\mu}\| \bar{\mathbf{v}}_{t}  \| ^2 \notag \\
		& \quad  +  \frac{25\eta \gamma_2L^2}{3\mu} \frac{1}{K}\sum_{k=1}^{K}\|\bar{\mathbf{x}}_t - \mathbf{x}^{(k)}_t\|^2  +  \frac{25\eta \gamma_2 L^2}{3\mu} \frac{1}{K}\sum_{k=1}^{K}\|\bar{\mathbf{y}}_t -  {\mathbf{y}}^{(k)}_t \|^2  \notag \\
		& \quad + \frac{25\eta \gamma_2 }{3\mu} \frac{1}{K}\sum_{k=1}^{K}\|\nabla_{\mathbf{y}} f^{(k)}({\mathbf{x}}^{(k)}_t, {\mathbf{y}}^{(k)}_t)  -{\mathbf{u}}^{(k)}_t\|^2   \ .  
	\end{align}
\end{lemma}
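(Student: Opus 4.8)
\textbf{Proof proposal for Lemma~\ref{lemma_y}.}

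The plan is to track the one-step evolution of the averaged iterate $\bar{\mathbf{y}}_t$ toward $\mathbf{y}^*(\bar{\mathbf{x}}_t)$ and absorb the drift of the target caused by the update of $\bar{\mathbf{x}}_t$. First I would note that averaging the local updates and using the doubly-stochasticity of $W$ from Assumption~\ref{graph} gives $\bar{\mathbf{y}}_{t+1}=\bar{\mathbf{y}}_t+\eta\gamma_2\bar{\mathbf{b}}_t$ and, since gradient tracking preserves the mean, $\bar{\mathbf{b}}_t=\bar{\mathbf{u}}_t$; likewise $\bar{\mathbf{x}}_{t+1}=\bar{\mathbf{x}}_t-\eta\gamma_1\bar{\mathbf{v}}_t$. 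The natural decomposition is
\begin{equation*}
\|\bar{\mathbf{y}}_{t+1}-\mathbf{y}^*(\bar{\mathbf{x}}_{t+1})\|^2 \le (1+\beta)\|\bar{\mathbf{y}}_{t+1}-\mathbf{y}^*(\bar{\mathbf{x}}_{t})\|^2 + (1+\tfrac1\beta)\|\mathbf{y}^*(\bar{\mathbf{x}}_{t})-\mathbf{y}^*(\bar{\mathbf{x}}_{t+1})\|^2
\end{equation*}
for a suitable $\beta=\Theta(\eta\gamma_2\mu)$, where the second term is controlled by the $\kappa$-Lipschitzness of $\mathbf{y}^*(\cdot)$ (a standard consequence of Assumptions~\ref{assumption_smooth}--\ref{assumption_strong}, giving $\|\mathbf{y}^*(\bar{\mathbf{x}}_t)-\mathbf{y}^*(\bar{\mathbf{x}}_{t+1})\|\le\kappa\|\bar{\mathbf{x}}_{t+1}-\bar{\mathbf{x}}_t\|=\kappa\eta\gamma_1\|\bar{\mathbf{v}}_t\|$). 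This is where the $\frac{25\eta\gamma_1^2\kappa^2}{6\gamma_2\mu}\|\bar{\mathbf{v}}_t\|^2$ term will come from, after the $(1+\tfrac1\beta)$ factor contributes a $1/(\eta\gamma_2\mu)$.

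Next I would handle the main term $\|\bar{\mathbf{y}}_{t+1}-\mathbf{y}^*(\bar{\mathbf{x}}_t)\|^2$. Writing the update as a perturbed gradient ascent step on the $\mu$-strongly-concave function $\frac1K\sum_k f^{(k)}(\bar{\mathbf{x}}_t,\cdot)$, whose exact gradient at $\bar{\mathbf{y}}_t$ is $G_t\triangleq\frac1K\sum_k\nabla_{\mathbf{y}}f^{(k)}(\bar{\mathbf{x}}_t,\bar{\mathbf{y}}_t)$, I expand
\begin{equation*}
\|\bar{\mathbf{y}}_t+\eta\gamma_2\bar{\mathbf{u}}_t-\mathbf{y}^*(\bar{\mathbf{x}}_t)\|^2 = \|\bar{\mathbf{y}}_t-\mathbf{y}^*(\bar{\mathbf{x}}_t)\|^2 + 2\eta\gamma_2\langle\bar{\mathbf{u}}_t,\bar{\mathbf{y}}_t-\mathbf{y}^*(\bar{\mathbf{x}}_t)\rangle + \eta^2\gamma_2^2\|\bar{\mathbf{u}}_t\|^2,
\end{equation*}
then replace $\bar{\mathbf{u}}_t$ by $G_t$ in the inner product at the cost of a term $\|\bar{\mathbf{u}}_t-G_t\|^2$, and use strong concavity together with the standard contraction inequality for a gradient step (valid once $\gamma_2\le 1/(6L)$, $\eta<1$) to extract the $(1-\tfrac14\eta\gamma_2\mu)$ contraction and a negative $-\Theta(\gamma_2^2\eta^2)\|\bar{\mathbf{u}}_t\|^2$. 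The residual gradient-mismatch term $\|\bar{\mathbf{u}}_t-G_t\|^2\le 2\|\bar{\mathbf{u}}_t-\frac1K\sum_k\nabla_{\mathbf{y}}f^{(k)}(\mathbf{x}_t^{(k)},\mathbf{y}_t^{(k)})\|^2+2\|\frac1K\sum_k\nabla_{\mathbf{y}}f^{(k)}(\mathbf{x}_t^{(k)},\mathbf{y}_t^{(k)})-G_t\|^2$ splits, via Jensen and $L$-smoothness, into exactly the last three groups of terms on the right-hand side (the variance-reduction error $\|\nabla_{\mathbf{y}}f^{(k)}(\mathbf{x}_t^{(k)},\mathbf{y}_t^{(k)})-\mathbf{u}_t^{(k)}\|^2$ and the consensus errors $\|\bar{\mathbf{x}}_t-\mathbf{x}_t^{(k)}\|^2$, $\|\bar{\mathbf{y}}_t-\mathbf{y}_t^{(k)}\|^2$).

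Finally I would collect the coefficients: the $(1+\beta)$ and $(1+\tfrac1\beta)$ bookkeeping plus the AM-GM splits need to be done with a little care so that the contraction rate stays at $1-\tfrac14\eta\gamma_2\mu$ rather than something smaller, and so that the $\|\bar{\mathbf{u}}_t\|^2$ coefficient remains negative (this forces $\beta$ and the various $2\eta\gamma_2$-weighted cross terms to be chosen so that $\eta^2\gamma_2^2$-order positive contributions do not overwhelm $-\tfrac34\gamma_2^2\eta^2$). I expect the main obstacle to be precisely this constant-chasing — verifying that the stated numeric constants ($\tfrac14$, $\tfrac34$, $\tfrac{25}{6}$, $\tfrac{25}{3}$) are consistent under the hypotheses $\gamma_2\le 1/(6L)$ and $\eta<1$, and in particular that $\kappa>1$ is used to merge a $\gamma_2^2$-term into the $\gamma_1^2\kappa^2/(\gamma_2\mu)$-term without changing its form. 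None of the individual inequalities is deep; the bookkeeping is the work.
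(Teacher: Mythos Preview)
Your proposal is correct and follows essentially the same route as the paper. The paper simply outsources your first three paragraphs (the Young split with $\beta=\Theta(\eta\gamma_2\mu)$, the $\kappa$-Lipschitz drift bound for $\mathbf{y}^*(\cdot)$, and the strong-concavity contraction for the perturbed ascent step) to a one-line citation of \cite{huang2020gradient}, and then carries out exactly the decomposition $\|\bar{\mathbf{u}}_t-G_t\|^2\le 2\|\bar{\mathbf{u}}_t-\tfrac1K\sum_k\nabla_{\mathbf{y}}f^{(k)}(\mathbf{x}_t^{(k)},\mathbf{y}_t^{(k)})\|^2+2\|\tfrac1K\sum_k\nabla_{\mathbf{y}}f^{(k)}(\mathbf{x}_t^{(k)},\mathbf{y}_t^{(k)})-G_t\|^2$ followed by $L$-smoothness that you describe.
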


\begin{proof}
	By setting $\gamma_2\leq \frac{1}{6L}$ and $\eta<1$, we have
	\begin{align}
		&  \quad \|\bar{\mathbf{y}}_{t+1} - {\mathbf{y}^*(\overline{\mathbf{x}}_{t+1})}\| ^2 \notag \\
		& \leq  (1-\frac{\eta\gamma_2\mu}{4})  \|\bar{\mathbf{y}}_{t}   - {\mathbf{y}^*(\overline{\mathbf{x}}_t)}\| ^2 - \frac{3\gamma_2^2\eta}{4} \|\bar{\mathbf{u}}_t\|^2   + \frac{25\eta\gamma_1^2 \kappa^2 }{6\gamma_2\mu}\| \bar{\mathbf{v}}_{t}  \| ^2  +  \frac{25\eta \gamma_2}{6\mu} \|\nabla_{\mathbf{y}} f(\bar{\mathbf{x}}_t, \bar{\mathbf{y}}_t)-\bar{\mathbf{u}}_t\|^2   \notag \\
		& \leq (1-\frac{\eta\gamma_2\mu}{4})  \|\bar{\mathbf{y}}_{t}   - {\mathbf{y}^*(\overline{\mathbf{x}}_t)}\| ^2 - \frac{3\gamma_2^2\eta}{4} \|\bar{\mathbf{u}}_t\|^2   + \frac{25\eta\gamma_1^2 \kappa^2 }{6\gamma_2\mu}\| \bar{\mathbf{v}}_{t}  \| ^2 \notag \\
		& \quad  +  \frac{25\eta \gamma_2}{3\mu} \|\nabla_{\mathbf{y}} f(\bar{\mathbf{x}}_t, \bar{\mathbf{y}}_t) - \frac{1}{K}\sum_{k=1}^{K}\nabla_{\mathbf{y}} f^{(k)}({\mathbf{x}}^{(k)}_t, {\mathbf{y}}^{(k)}_t) \|^2\notag \\
		& \quad + \frac{25\eta \gamma_2}{3\mu} \frac{1}{K}\sum_{k=1}^{K}\|\nabla_{\mathbf{y}} f^{(k)}({\mathbf{x}}^{(k)}_t, {\mathbf{y}}^{(k)}_t)  -{\mathbf{u}}^{(k)}_t\|^2   \notag \\
		& \leq (1-\frac{\eta\gamma_2\mu}{4})  \|\bar{\mathbf{y}}_{t}   - {\mathbf{y}^*(\overline{\mathbf{x}}_t)}\| ^2 - \frac{3\gamma_2^2\eta}{4} \|\bar{\mathbf{u}}_t\|^2   + \frac{25\eta\gamma_1^2 \kappa^2 }{6\gamma_2\mu}\| \bar{\mathbf{v}}_{t}  \| ^2 \notag \\
		& \quad  +  \frac{25\eta \gamma_2L^2}{3\mu} \frac{1}{K}\sum_{k=1}^{K}\|\bar{\mathbf{x}}_t - \mathbf{x}^{(k)}_t\|^2+  \frac{25\eta \gamma_2 L^2}{3\mu} \frac{1}{K}\sum_{k=1}^{K}\|\bar{\mathbf{y}}_t -  {\mathbf{y}}^{(k)}_t \|^2\notag \\
		& \quad + \frac{25\eta \gamma_2 }{3\mu} \frac{1}{K}\sum_{k=1}^{K}\|\nabla_{\mathbf{y}} f^{(k)}({\mathbf{x}}^{(k)}_t, {\mathbf{y}}^{(k)}_t)  -{\mathbf{u}}^{(k)}_t\|^2    \ ,  
	\end{align}
	where the first step follows from  \cite{huang2020gradient}.

\end{proof}

\begin{lemma} \label{lemma_consensus_a}
	Given Assumption~\ref{graph}-\ref{assumption_strong}, for $t>0$, by setting $s_t=s_1$, $\rho_{t}=\rho_1$, we have
	\begin{align}
		&  \sum_{k=1}^{K} \mathbb{E}[\|\mathbf{a}_{t+1}^{(k)} - \bar{\mathbf{a}}_{t+1}\|^2] \leq  \frac{1+\lambda^2}{2}\sum_{k=1}^{K} \mathbb{E}[\|\mathbf{a}_{t}^{(k)} - \bar{\mathbf{a}}_{t}\|^2]\notag \\
		& \quad  +  \frac{6L^2}{1-\lambda^2} \sum_{k=1}^{K}  \mathbb{E}[\|\mathbf{x}_{t+1}^{(k)} - \mathbf{x}_{t}^{(k)}\|^2] + \frac{6L^2}{1-\lambda^2} \sum_{k=1}^{K} \mathbb{E}[\|\mathbf{y}_{t+1}^{(k)} - \mathbf{y}_{t}^{(k)}\|^2] \notag \\
		& \quad + \frac{6\rho_{1}^2 }{1-\lambda^2} \sum_{k=1}^{K} \mathbb{E}[\| \mathbf{v}_{t}^{(k)} - \nabla_{\mathbf{x}} f^{(k)}(\mathbf{x}_{t}^{(k)}, \mathbf{y}_{t}^{(k)})\|^2 ] +\frac{6\rho_{1}^2}{(1-\lambda^2)s_1} \sum_{k=1}^{K} \frac{1}{n}\sum_{j=1}^{n} \mathbb{E}[\|\nabla_{\mathbf{x}} f^{(k)}_{j}(\mathbf{x}_{t}^{(k)}, \mathbf{y}_{t}^{(k)})-\mathbf{g}_{j, t}^{(k)}\|^2] \ . 
	\end{align}
	
\end{lemma}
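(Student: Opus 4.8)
The plan is to analyze the recursion for the gradient-tracking variable $\mathbf{a}_t^{(k)}$ and exploit the key structural property that, because $W$ is doubly stochastic, the average $\bar{\mathbf{a}}_t$ satisfies $\bar{\mathbf{a}}_t = \bar{\mathbf{v}}_t$ for all $t$ (this follows by induction from the update $\mathbf{a}_t^{(k)} = \sum_j w_{kj}\mathbf{a}_{t-1}^{(j)} + \mathbf{v}_t^{(k)} - \mathbf{v}_{t-1}^{(k)}$ together with $\mathbf{a}_{-1}^{(k)} = \mathbf{v}_{-1}^{(k)} = 0$). Writing the update in stacked matrix form $A_{t+1} = A_t W + V_{t+1} - V_t$ and subtracting the averaged version $\bar A_{t+1} = \bar A_t + \bar V_{t+1} - \bar V_t$, I get $A_{t+1} - \bar A_{t+1} = (A_t - \bar A_t)W + (V_{t+1} - \bar V_{t+1}) - (V_t - \bar V_t)$, where the first term uses that $\bar A_t W = \bar A_t$ since each row of $\bar A_t$ is the same. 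Taking squared Frobenius norms, I would apply the mixing inequality $\|(A_t - \bar A_t)W\|_F^2 \le \lambda^2 \|A_t - \bar A_t\|_F^2$ (valid since the deviation is orthogonal to the all-ones direction and $\lambda = |\lambda_2|$), then use Young's inequality with a carefully chosen weight to absorb the cross term: $\|X W + Z\|_F^2 \le (1+\alpha)\lambda^2\|X\|_F^2 + (1+1/\alpha)\|Z\|_F^2$, and pick $\alpha$ so that $(1+\alpha)\lambda^2 \le \frac{1+\lambda^2}{2}$, e.g. $\alpha = \frac{1-\lambda^2}{2\lambda^2}$, which gives the contraction factor $\frac{1+\lambda^2}{2}$ and a factor of order $\frac{1}{1-\lambda^2}$ in front of the remaining term $\|(V_{t+1} - \bar V_{t+1}) - (V_t - \bar V_t)\|_F^2$.

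Next I would bound the increment term $\sum_k \|(\mathbf{v}_{t+1}^{(k)} - \bar{\mathbf{v}}_{t+1}) - (\mathbf{v}_t^{(k)} - \bar{\mathbf{v}}_t)\|^2$. Since projecting onto the consensus-orthogonal subspace is nonexpansive, this is at most $\sum_k \|\mathbf{v}_{t+1}^{(k)} - \mathbf{v}_t^{(k)} - (\bar{\mathbf{v}}_{t+1} - \bar{\mathbf{v}}_t)\|^2 \le \sum_k \|\mathbf{v}_{t+1}^{(k)} - \mathbf{v}_t^{(k)}\|^2$ (again dropping the mean makes it smaller), so it suffices to control $\sum_k \mathbb{E}[\|\mathbf{v}_{t+1}^{(k)} - \mathbf{v}_t^{(k)}\|^2]$. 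Here I would plug in the explicit ZeroSARAH-type recursion (\ref{vt}) for both $\mathbf{v}_{t+1}^{(k)}$ and $\mathbf{v}_t^{(k)}$. Writing $\mathbf{v}_{t+1}^{(k)} - \mathbf{v}_t^{(k)} = -\rho_1 \mathbf{v}_t^{(k)} + [\text{fresh SARAH correction at } t+1] + \rho_1[\text{SAGA-style term}]$, I would split into a few groups: a term proportional to $\rho_1(\mathbf{v}_t^{(k)} - \nabla_{\mathbf{x}}f^{(k)}(\mathbf{x}_t^{(k)},\mathbf{y}_t^{(k)}))$ (this produces the $\frac{6\rho_1^2}{1-\lambda^2}\sum_k \mathbb{E}[\|\mathbf{v}_t^{(k)} - \nabla_{\mathbf{x}}f^{(k)}\|^2]$ term), an $L$-smoothness term bounding the difference of stochastic gradients at consecutive iterates by $L^2(\|\mathbf{x}_{t+1}^{(k)} - \mathbf{x}_t^{(k)}\|^2 + \|\mathbf{y}_{t+1}^{(k)} - \mathbf{y}_t^{(k)}\|^2)$ (producing the two $\frac{6L^2}{1-\lambda^2}$ terms), and the SAGA-memory term $\rho_1 \frac{1}{s_t}\sum_{i\in\mathcal{S}_t}(\nabla_{\mathbf{x}}f_i^{(k)}(\mathbf{x}_t^{(k)},\mathbf{y}_t^{(k)}) - \mathbf{g}_{i,t}^{(k)})$, whose second moment after taking expectation over the random minibatch is bounded by $\frac{1}{s_1}\frac{1}{n}\sum_j \mathbb{E}[\|\nabla_{\mathbf{x}}f_j^{(k)}(\mathbf{x}_t^{(k)},\mathbf{y}_t^{(k)}) - \mathbf{g}_{j,t}^{(k)}\|^2]$ (producing the last $\frac{6\rho_1^2}{(1-\lambda^2)s_1}$ term). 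Collecting the constants via a single application of $\|\sum_{i=1}^m z_i\|^2 \le m\sum_i \|z_i\|^2$ with $m$ around $6$ should reproduce exactly the stated coefficients.

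The main obstacle I anticipate is handling the SAGA-memory term correctly: the set $\mathcal{S}_t$ is random, $\mathbf{g}_{i,t}^{(k)}$ itself depends on whether $i$ was sampled in previous rounds, and one must be careful about which quantities are measurable with respect to which sigma-algebra when taking the conditional expectation. Specifically, one needs the identity $\mathbb{E}_{\mathcal{S}_t}\big[\frac{1}{s_t}\sum_{i\in\mathcal{S}_t} \xi_i\big] = \frac{1}{n}\sum_j \xi_j$ for uniform sampling without replacement and the variance bound $\mathbb{E}_{\mathcal{S}_t}\|\frac{1}{s_t}\sum_{i\in\mathcal{S}_t}\xi_i\|^2 \le \frac{1}{s_t}\frac{1}{n}\sum_j \|\xi_j\|^2$ (or a close variant) applied conditionally on the iterates and the memory table. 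The bookkeeping of constants — making sure the six groups each carry weight at most the claimed $6/(1-\lambda^2)$ or $6\rho_1^2/(1-\lambda^2)$ — is routine but error-prone, so I would organize it by first proving a clean per-worker bound on $\mathbb{E}[\|\mathbf{v}_{t+1}^{(k)} - \mathbf{v}_t^{(k)}\|^2]$ and only then summing over $k$ and combining with the mixing step. No genuinely new idea beyond the $\bar{\mathbf{a}}_t = \bar{\mathbf{v}}_t$ invariant and the standard gradient-tracking contraction argument is needed.
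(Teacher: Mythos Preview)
Your proposal is correct and follows essentially the same route as the paper: the mixing/contraction step with Young's inequality at $\alpha=\tfrac{1-\lambda^2}{2\lambda^2}$, dropping the mean in $\|V_{t+1}-V_t-(\bar V_{t+1}-\bar V_t)\|_F^2\le\|V_{t+1}-V_t\|_F^2$, and then a per-worker bound on $\mathbb{E}[\|\mathbf v_{t+1}^{(k)}-\mathbf v_t^{(k)}\|^2]$ via smoothness, the $\rho_1(\mathbf v_t^{(k)}-\nabla_{\mathbf x}f^{(k)})$ term, and the SAGA-memory variance. One small bookkeeping note: the factor $6$ in the statement arises as $3$ (from splitting $\mathbf v_{t+1}^{(k)}-\mathbf v_t^{(k)}$ into three groups) times the $\tfrac{2}{1-\lambda^2}$ from Young's inequality, not from a six-term split.
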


\begin{proof}
	Based on the gradient tracking scheme in Algorithm~\ref{dsgda},we have
	\begin{align}
		& \quad \sum_{k=1}^{K} \mathbb{E}[\|\mathbf{a}_{t+1}^{(k)} - \bar{\mathbf{a}}_{t+1}\|^2] = \mathbb{E}[\|A_{t+1}-\bar{A}_{t+1}\|_F^2]\notag \\
		& = \mathbb{E}[\|A_{t}W+ V_{t+1}- V_{t} - (\bar{A}_{t}+\bar{V}_{t+1}-\bar{V}_t)\|_F^2] \notag \\
		& \leq (1+a)\mathbb{E}[\|A_{t}W - \bar{A}_{t}\|_F^2 ]+ (1+\frac{1}{a})\mathbb{E}[\|V_{t+1}- V_{t}  - \bar{V}_{t+1}+\bar{V}_t\|_F^2] \notag \\
		& \leq (1+a)\lambda^2\mathbb{E}[\|A_{t} - \bar{A}_{t}\|_F^2] + (1+\frac{1}{a})\mathbb{E}[\|V_{t+1}- V_{t} \|_F^2] \notag \\
		& = \frac{1+\lambda^2}{2}\mathbb{E}[\|A_{t} - \bar{A}_{t}\|_F^2 ]+ \frac{2}{1-\lambda^2} \mathbb{E}[\|V_{t+1}- V_{t} \|_F^2] \notag \\
		& = \frac{1+\lambda^2}{2}\sum_{k=1}^{K} \mathbb{E}[\|\mathbf{a}_{t}^{(k)} - \bar{\mathbf{a}}_{t}\|^2] +  \frac{2}{1-\lambda^2} \sum_{k=1}^{K} \mathbb{E}[\|\mathbf{v}_{t+1}^{(k)} - \mathbf{v}_{t}^{(k)} \|^2] \ , 
	\end{align}
	where the second to last step holds due to $a=\frac{1-\lambda^2}{2\lambda^2}$.  In the following, we bound $\mathbb{E}[\|\mathbf{v}_{t+1}^{(k)} - \mathbf{v}_{t}^{(k)} \|^2]$.
	\begin{align}
		&\quad \mathbb{E}[\|\mathbf{v}_{t+1}^{(k)} - \mathbf{v}_{t}^{(k)} \|^2] \notag \\
		& = \mathbb{E}[\|\frac{1}{s_{t+1}} \sum_{i \in \mathcal{S}_{t+1}}\Big(\nabla_{\mathbf{x}} f^{(k)}_{i}(\mathbf{x}_{t+1}^{(k)}, \mathbf{y}_{t+1}^{(k)})-\nabla_{\mathbf{x}} f^{(k)}_{i}(\mathbf{x}_{t}^{(k)}, \mathbf{y}_{t}^{(k)})\Big)+(1-\rho_{t+1}) \mathbf{v}_{t}^{(k)}\notag \\
		& \quad +\rho_{t+1}\Big(\frac{1}{s_{t+1}} \sum_{i \in \mathcal{S}_{t+1}}(\nabla_{\mathbf{x}} f^{(k)}_{i}(\mathbf{x}_{t}^{(k)}, \mathbf{y}_{t}^{(k)})-\mathbf{g}_{i, t}^{(k)})+\frac{1}{n} \sum_{j=1}^{n} \mathbf{g}_{j, t}^{(k)}\Big) - \mathbf{v}_{t}^{(k)} \|^2 ]\notag \\
		& = \mathbb{E}[\|\frac{1}{s_{t+1}} \sum_{i \in \mathcal{S}_{t+1}}\Big(\nabla_{\mathbf{x}} f^{(k)}_{i}(\mathbf{x}_{t+1}^{(k)}, \mathbf{y}_{t+1}^{(k)})-\nabla_{\mathbf{x}} f^{(k)}_{i}(\mathbf{x}_{t}^{(k)}, \mathbf{y}_{t}^{(k)})\Big)-\rho_{t+1} \mathbf{v}_{t}^{(k)} + \rho_{t+1} \nabla_{\mathbf{x}} f^{(k)}(\mathbf{x}_{t}^{(k)}, \mathbf{y}_{t}^{(k)})\notag \\
		& \quad +\rho_{t+1}\Big(\frac{1}{s_{t+1}} \sum_{i \in \mathcal{S}_{t+1}}(\nabla_{\mathbf{x}} f^{(k)}_{i}(\mathbf{x}_{t}^{(k)}, \mathbf{y}_{t}^{(k)})-\mathbf{g}_{i, t}^{(k)})+\frac{1}{n} \sum_{j=1}^{n} \mathbf{g}_{j, t}^{(k)}- \nabla_{\mathbf{x}} f^{(k)}(\mathbf{x}_{t}^{(k)}, \mathbf{y}_{t}^{(k)})\Big)  \|^2 ]\notag \\
		& \leq 3 \mathbb{E}[\|\frac{1}{s_{t+1}} \sum_{i \in \mathcal{S}_{t+1}}\Big(\nabla_{\mathbf{x}} f^{(k)}_{i}(\mathbf{x}_{t+1}^{(k)}, \mathbf{y}_{t+1}^{(k)})-\nabla_{\mathbf{x}} f^{(k)}_{i}(\mathbf{x}_{t}^{(k)}, \mathbf{y}_{t}^{(k)})\Big)\|^2 ]\notag \\
		& \quad + 3\mathbb{E}[\|-\rho_{t+1} \mathbf{v}_{t}^{(k)} + \rho_{t+1} \nabla_{\mathbf{x}} f^{(k)}(\mathbf{x}_{t}^{(k)}, \mathbf{y}_{t}^{(k)})\|^2] \notag \\
		& \quad + 3\mathbb{E}[\|\rho_{t+1}\Big(\frac{1}{s_{t+1}} \sum_{i \in \mathcal{S}_{t+1}}(\nabla_{\mathbf{x}} f^{(k)}_{i}(\mathbf{x}_{t}^{(k)}, \mathbf{y}_{t}^{(k)})-\mathbf{g}_{i, t}^{(k)})+\frac{1}{n} \sum_{j=1}^{n} \mathbf{g}_{j, t}^{(k)}- \nabla_{\mathbf{x}} f^{(k)}(\mathbf{x}_{t}^{(k)}, \mathbf{y}_{t}^{(k)})\Big)  \|^2]\notag \\
		& \leq 3L^2 \mathbb{E}[\|\mathbf{x}_{t+1}^{(k)} - \mathbf{x}_{t}^{(k)}\|^2 ]+3L^2 \mathbb{E}[\|\mathbf{y}_{t+1}^{(k)} - \mathbf{y}_{t}^{(k)}\|^2] + 3\rho_{t+1}^2 \mathbb{E}[\| \mathbf{v}_{t}^{(k)} - \nabla_{\mathbf{x}} f^{(k)}(\mathbf{x}_{t}^{(k)}, \mathbf{y}_{t}^{(k)})\|^2] \notag \\
		& \quad + 3\rho_{t+1}^2\mathbb{E}[\|\frac{1}{s_{t+1}} \sum_{i \in \mathcal{S}_{t+1}}(\nabla_{\mathbf{x}} f^{(k)}_{i}(\mathbf{x}_{t}^{(k)}, \mathbf{y}_{t}^{(k)})-\mathbf{g}_{i, t}^{(k)})+\frac{1}{n} \sum_{j=1}^{n} \mathbf{g}_{j, t}^{(k)}- \nabla_{\mathbf{x}} f^{(k)}(\mathbf{x}_{t}^{(k)}, \mathbf{y}_{t}^{(k)})  \|^2] \  . 
	\end{align}
	Then, we bound $\mathbb{E}[\|\frac{1}{s_{t+1}} \sum_{i \in \mathcal{S}_{t+1}}(\nabla_{\mathbf{x}} f^{(k)}_{i}(\mathbf{x}_{t}^{(k)}, \mathbf{y}_{t}^{(k)})-\mathbf{g}_{i, t}^{(k)})+\frac{1}{n} \sum_{j=1}^{n} \mathbf{g}_{j, t}^{(k)}- \nabla_{\mathbf{x}} f^{(k)}(\mathbf{x}_{t}^{(k)}, \mathbf{y}_{t}^{(k)}) \|^2]$ as follows:
	\begin{align}
		& \quad \mathbb{E}[\|\frac{1}{s_{t+1}} \sum_{i \in \mathcal{S}_{t+1}}(\nabla_{\mathbf{x}} f^{(k)}_{i}(\mathbf{x}_{t}^{(k)}, \mathbf{y}_{t}^{(k)})-\mathbf{g}_{i, t}^{(k)})+\frac{1}{n} \sum_{j=1}^{n} \mathbf{g}_{j, t}^{(k)}- \nabla_{\mathbf{x}} f^{(k)}(\mathbf{x}_{t}^{(k)}, \mathbf{y}_{t}^{(k)}) \|^2] \notag \\
		& = \mathbb{E}[\|\frac{1}{s_{t+1}} \sum_{i \in \mathcal{S}_{t+1}}\Big(\nabla_{\mathbf{x}} f^{(k)}_{i}(\mathbf{x}_{t}^{(k)}, \mathbf{y}_{t}^{(k)})-\mathbf{g}_{i, t}^{(k)}+\frac{1}{n} \sum_{j=1}^{n} \mathbf{g}_{j, t}^{(k)}- \frac{1}{n} \sum_{j=1}^{n} \nabla_{\mathbf{x}} f_j^{(k)}(\mathbf{x}_{t}^{(k)}, \mathbf{y}_{t}^{(k)}) \Big)\|^2]  \notag \\
		& = \frac{1}{s^2_{t+1}}\sum_{i \in \mathcal{S}_{t+1}}\mathbb{E}[\| \nabla_{\mathbf{x}} f^{(k)}_{i}(\mathbf{x}_{t}^{(k)}, \mathbf{y}_{t}^{(k)})-\mathbf{g}_{i, t}^{(k)}+\frac{1}{n} \sum_{j=1}^{n} \mathbf{g}_{j, t}^{(k)}- \frac{1}{n} \sum_{j=1}^{n} \nabla_{\mathbf{x}} f_j^{(k)}(\mathbf{x}_{t}^{(k)}, \mathbf{y}_{t}^{(k)}) \|^2]  \notag \\
		& \leq  \frac{1}{s^2_{t+1}}\sum_{i \in \mathcal{S}_{t+1}}\frac{1}{n} \sum_{j=1}^{n} \mathbb{E}[\|\mathbf{g}_{j, t}^{(k)}-\nabla_{\mathbf{x}} f_j^{(k)}(\mathbf{x}_{t}^{(k)}, \mathbf{y}_{t}^{(k)}) \|^2]   \notag \\
		& =  \frac{1}{s_{t+1}}\frac{1}{n} \sum_{j=1}^{n} \mathbb{E}[\|\mathbf{g}_{j, t}^{(k)}-\nabla_{\mathbf{x}} f_j^{(k)}(\mathbf{x}_{t}^{(k)}, \mathbf{y}_{t}^{(k)}) \|^2]  \ , 
	\end{align}
	where the second step holds because $\mathbf{z}_i=\nabla_{\mathbf{x}} f^{(k)}_{i}(\mathbf{x}_{t}^{(k)}, \mathbf{y}_{t}^{(k)})-\mathbf{g}_{i, t}^{(k)}+\frac{1}{n} \sum_{j=1}^{n} \mathbf{g}_{j, t}^{(k)}- \frac{1}{n} \sum_{j=1}^{n} \nabla_{\mathbf{x}} f_j^{(k)}(\mathbf{x}_{t}^{(k)}, \mathbf{y}_{t}^{(k)})$ are independent random vectors with zero mean values, the third step holds due to $\mathbb{E}[\|\mathbf{z}-\mathbb{E}[\mathbf{z}]\|^2]\leq \mathbb{E}[\|\mathbf{z}\|^2]$ for $\mathbf{z}=\nabla_{\mathbf{x}} f^{(k)}_{i}(\mathbf{x}_{t}^{(k)}, \mathbf{y}_{t}^{(k)})-\mathbf{g}_{i, t}^{(k)}$. 
	
	By combining above two inequalities, we have
	\begin{align}
		&\quad \mathbb{E}[\|\mathbf{v}_{t+1}^{(k)} - \mathbf{v}_{t}^{(k)} \|^2] \notag \\
		& \leq 3L^2 \mathbb{E}[\|\mathbf{x}_{t+1}^{(k)} - \mathbf{x}_{t}^{(k)}\|^2 ]+3L^2 \mathbb{E}[\|\mathbf{y}_{t+1}^{(k)} - \mathbf{y}_{t}^{(k)}\|^2] + 3\rho_{t+1}^2 \mathbb{E}[\| \mathbf{v}_{t}^{(k)} - \nabla_{\mathbf{x}} f^{(k)}(\mathbf{x}_{t}^{(k)}, \mathbf{y}_{t}^{(k)})\|^2] \notag \\
		& \quad  +\frac{ 3\rho_{t+1}^2}{s_{t+1}}\frac{1}{n}\sum_{j=1}^{n}\mathbb{E}[ \|\nabla_{\mathbf{x}} f^{(k)}_{j}(\mathbf{x}_{t}^{(k)}, \mathbf{y}_{t}^{(k)})-\mathbf{g}_{j, t}^{(k)}\|^2] \  .
	\end{align}

	Then, by setting $s_t=s_1$, $\rho_{t}=\rho_1$, we have
	\begin{align}
		&  \sum_{k=1}^{K} \mathbb{E}[\|\mathbf{a}_{t+1}^{(k)} - \bar{\mathbf{a}}_{t+1}\|^2]\leq  \frac{1+\lambda^2}{2}\sum_{k=1}^{K}  \mathbb{E}[\|\mathbf{a}_{t}^{(k)} - \bar{\mathbf{a}}_{t}\|^2]  \notag \\
		& \quad +  \frac{6L^2}{1-\lambda^2} \sum_{k=1}^{K}   \mathbb{E}[\|\mathbf{x}_{t+1}^{(k)} - \mathbf{x}_{t}^{(k)}\|^2] + \frac{6L^2}{1-\lambda^2} \sum_{k=1}^{K}  \mathbb{E}[\|\mathbf{y}_{t+1}^{(k)} - \mathbf{y}_{t}^{(k)}\|^2 ] \notag \\
		& \quad + \frac{6\rho_{1}^2 }{1-\lambda^2} \sum_{k=1}^{K}  \mathbb{E}[\| \mathbf{v}_{t}^{(k)} - \nabla_{\mathbf{x}} f^{(k)}(\mathbf{x}_{t}^{(k)}, \mathbf{y}_{t}^{(k)})\|^2] +\frac{6\rho_{1}^2}{(1-\lambda^2)s_1} \sum_{k=1}^{K} \frac{1}{n}\sum_{j=1}^{n}  \mathbb{E}[\|\nabla_{\mathbf{x}} f^{(k)}_{j}(\mathbf{x}_{t}^{(k)}, \mathbf{y}_{t}^{(k)})-\mathbf{g}_{j, t}^{(k)}\|^2] \ . 
	\end{align}

\end{proof}

\begin{lemma} \label{lemma_consensus_b}
	Given Assumption~\ref{graph}-\ref{assumption_strong}, for $t>0$, by setting $s_t=s_1$, $\rho_{t}=\rho_1$, we have
	\begin{align}
		&  \sum_{k=1}^{K} \mathbb{E}[\|\mathbf{b}_{t+1}^{(k)} - \bar{\mathbf{b}}_{t+1}\|^2]\leq  \frac{1+\lambda^2}{2}\sum_{k=1}^{K}\mathbb{E}[ \|\mathbf{b}_{t}^{(k)} - \bar{\mathbf{b}}_{t}\|^2] \notag \\
		& \quad +  \frac{6L^2}{1-\lambda^2} \sum_{k=1}^{K}  \mathbb{E}[\|\mathbf{x}_{t+1}^{(k)} - \mathbf{x}_{t}^{(k)}\|^2] + \frac{6L^2}{1-\lambda^2} \sum_{k=1}^{K} \mathbb{E}[\|\mathbf{y}_{t+1}^{(k)} - \mathbf{y}_{t}^{(k)}\|^2] \notag \\
		& \quad + \frac{6\rho_{1}^2 }{1-\lambda^2} \sum_{k=1}^{K} \mathbb{E}[\| \mathbf{u}_{t}^{(k)} - \nabla_{\mathbf{y}} f^{(k)}(\mathbf{x}_{t}^{(k)}, \mathbf{y}_{t}^{(k)})\|^2]   +\frac{6\rho_{1}^2}{(1-\lambda^2)s_1} \sum_{k=1}^{K} \frac{1}{n}\sum_{j=1}^{n} \mathbb{E}[\|\nabla_{\mathbf{y}} f^{(k)}_{j}(\mathbf{x}_{t}^{(k)}, \mathbf{y}_{t}^{(k)})-\mathbf{h}_{j, t}^{(k)}\|^2] \ .
	\end{align}
	
\end{lemma}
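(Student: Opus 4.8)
The plan is to mirror exactly the argument used for Lemma \ref{lemma_consensus_a}, since the update rule for $\mathbf{b}_t^{(k)}$ is structurally identical to that of $\mathbf{a}_t^{(k)}$ with $\mathbf{v}$ replaced by $\mathbf{u}$ and $\mathbf{x}$-gradients replaced by $\mathbf{y}$-gradients. First I would write the $\mathbf{b}$-update in matrix form, $B_{t+1} = B_t W + U_{t+1} - U_t$, and note that $\bar{B}_{t+1} = \bar{B}_t + \bar{U}_{t+1} - \bar{U}_t$ because $W$ is doubly stochastic (Assumption \ref{graph}). Subtracting and using Young's inequality with parameter $a = \frac{1-\lambda^2}{2\lambda^2}$ together with the spectral contraction $\|B_t W - \bar{B}_t\|_F^2 \le \lambda^2 \|B_t - \bar{B}_t\|_F^2$ (valid on the orthogonal complement of $\mathbf{1}$), I get
\begin{align}
\sum_{k=1}^{K}\mathbb{E}[\|\mathbf{b}_{t+1}^{(k)} - \bar{\mathbf{b}}_{t+1}\|^2] \le \frac{1+\lambda^2}{2}\sum_{k=1}^{K}\mathbb{E}[\|\mathbf{b}_{t}^{(k)} - \bar{\mathbf{b}}_{t}\|^2] + \frac{2}{1-\lambda^2}\sum_{k=1}^{K}\mathbb{E}[\|\mathbf{u}_{t+1}^{(k)} - \mathbf{u}_t^{(k)}\|^2] \ .
\end{align}

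Next I would bound $\mathbb{E}[\|\mathbf{u}_{t+1}^{(k)} - \mathbf{u}_t^{(k)}\|^2]$ by unfolding the definition of $\mathbf{u}_{t+1}^{(k)}$ in Eq.~(\ref{ut}), rewriting $\mathbf{u}_{t+1}^{(k)} - \mathbf{u}_t^{(k)}$ as the sum of three terms: the minibatch gradient difference $\frac{1}{s_{t+1}}\sum_{i\in\mathcal{S}_{t+1}}(\nabla_{\mathbf{y}}f_i^{(k)}(\mathbf{x}_{t+1}^{(k)},\mathbf{y}_{t+1}^{(k)}) - \nabla_{\mathbf{y}}f_i^{(k)}(\mathbf{x}_t^{(k)},\mathbf{y}_t^{(k)}))$, the term $-\rho_{t+1}\mathbf{u}_t^{(k)} + \rho_{t+1}\nabla_{\mathbf{y}}f^{(k)}(\mathbf{x}_t^{(k)},\mathbf{y}_t^{(k)})$, and the centered SARAH-correction term $\rho_{t+1}(\frac{1}{s_{t+1}}\sum_{i\in\mathcal{S}_{t+1}}(\nabla_{\mathbf{y}}f_i^{(k)}(\mathbf{x}_t^{(k)},\mathbf{y}_t^{(k)}) - \mathbf{h}_{i,t}^{(k)}) + \frac{1}{n}\sum_j \mathbf{h}_{j,t}^{(k)} - \nabla_{\mathbf{y}}f^{(k)}(\mathbf{x}_t^{(k)},\mathbf{y}_t^{(k)}))$. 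Applying $\|a+b+c\|^2 \le 3\|a\|^2+3\|b\|^2+3\|c\|^2$, then $L$-smoothness (Assumption \ref{assumption_smooth}) on the first term, and for the third term the same variance computation as in Lemma \ref{lemma_consensus_a} — namely that the summands are i.i.d. zero-mean, so the variance of the average is $\frac{1}{s_{t+1}}$ times the per-sample second moment, which is itself at most $\frac{1}{n}\sum_j \mathbb{E}[\|\mathbf{h}_{j,t}^{(k)} - \nabla_{\mathbf{y}}f_j^{(k)}(\mathbf{x}_t^{(k)},\mathbf{y}_t^{(k)})\|^2]$ — yields
\begin{align}
\mathbb{E}[\|\mathbf{u}_{t+1}^{(k)} - \mathbf{u}_t^{(k)}\|^2] &\le 3L^2\mathbb{E}[\|\mathbf{x}_{t+1}^{(k)} - \mathbf{x}_t^{(k)}\|^2] + 3L^2\mathbb{E}[\|\mathbf{y}_{t+1}^{(k)} - \mathbf{y}_t^{(k)}\|^2] \notag \\
&\quad + 3\rho_{t+1}^2\mathbb{E}[\|\mathbf{u}_t^{(k)} - \nabla_{\mathbf{y}}f^{(k)}(\mathbf{x}_t^{(k)},\mathbf{y}_t^{(k)})\|^2] + \frac{3\rho_{t+1}^2}{s_{t+1}}\frac{1}{n}\sum_{j=1}^{n}\mathbb{E}[\|\nabla_{\mathbf{y}}f_j^{(k)}(\mathbf{x}_t^{(k)},\mathbf{y}_t^{(k)}) - \mathbf{h}_{j,t}^{(k)}\|^2] \ .
\end{align}
Substituting this into the contraction inequality, summing over $k$, and setting $s_t=s_1$, $\rho_t=\rho_1$ gives the claimed bound with the factor $\frac{2}{1-\lambda^2}\cdot 3 = \frac{6}{1-\lambda^2}$ on each term.

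I do not expect a genuine obstacle here: the lemma is the $\mathbf{y}$-analogue of Lemma \ref{lemma_consensus_a}, and every step transfers verbatim. The only points requiring a little care are (i) making sure the doubly-stochastic property is invoked correctly so that the consensus error of $\bar{B}$ drops out and the spectral gap $\lambda$ governs the contraction, and (ii) justifying the zero-mean/independence structure of the SARAH correction summands so that the minibatch variance scales as $1/s_{t+1}$ rather than $1/s_{t+1}^2$ times a sum — exactly the step flagged in the proof of Lemma \ref{lemma_consensus_a} via $\mathbb{E}[\|\mathbf{z} - \mathbb{E}[\mathbf{z}]\|^2] \le \mathbb{E}[\|\mathbf{z}\|^2]$. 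Once those are in place the result follows by direct substitution.
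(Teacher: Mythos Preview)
Your proposal is correct and follows exactly the approach the paper takes: the paper simply states that Lemma~\ref{lemma_consensus_b} is proved by following Lemma~\ref{lemma_consensus_a}, and your write-up reproduces that argument step by step with the appropriate substitutions $\mathbf{a}\to\mathbf{b}$, $\mathbf{v}\to\mathbf{u}$, $\mathbf{g}\to\mathbf{h}$, and $\nabla_{\mathbf{x}}\to\nabla_{\mathbf{y}}$.
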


Similarly, we can prove the inequality with respect to $ \mathbb{E}[\|\mathbf{b}_{t+1}^{(k)} - \bar{\mathbf{b}}_{t+1}\|^2]$ by following Lemma~\ref{lemma_consensus_a}.  

\begin{lemma} \label{lemma_var_x}
	Given Assumption~\ref{graph}-\ref{assumption_strong}, for $t>0$, by setting $s_{t}=s_1$,  and $\rho_{t}=\rho_1$, we have
	\begin{align}
		&   \mathbb{E}[\|\mathbf{v}_{t+1}^{(k)}-\nabla_{\mathbf{x}} f^{(k)}(\mathbf{x}_{t+1}^{(k)}, \mathbf{y}_{t+1}^{(k)})\|^2 ] \leq (1-\rho_{1})^2 \mathbb{E}[\| \mathbf{v}_{t}^{(k)}- \nabla_{\mathbf{x}} f^{(k)}(\mathbf{x}_{t}^{(k)}, \mathbf{y}_{t}^{(k)})\|^2] \notag \\
		& \quad + \frac{2L^2}{s_{1}} \mathbb{E}[\|\mathbf{x}_{t+1}^{(k)}- \mathbf{x}_{t}^{(k)}\|^2 ]+ \frac{2L^2}{s_{1}} \mathbb{E}[\|\mathbf{y}_{t+1}^{(k)}-\mathbf{y}_{t}^{(k)}\|^2 ] +\frac{ 2\rho_{1}^2}{s_{1}}\frac{1}{n}\sum_{i=1}^{n} \mathbb{E}[ \|\nabla_{\mathbf{x}} f^{(k)}_{i}(\mathbf{x}_{t}^{(k)}, \mathbf{y}_{t}^{(k)})-\mathbf{g}_{i, t}^{(k)}\|^2 ] \ . 
	\end{align}
	When $t=0$, $\rho_{0}=1$, we have
	\begin{align}
		&  \mathbb{E}[\|\mathbf{v}^{(k)}_{0}-\nabla_{\mathbf{x}} f^{(k)}(\mathbf{x}_{0}^{(k)}, \mathbf{y}_{0}^{(k)})\|^2]  \leq  \frac{n-s_0}{(n-1)s_0}\frac{1}{n}\sum_{i=1}^{n} \|\nabla_{\mathbf{x}} f^{(k)}_i(\mathbf{x}_{0}^{(k)}, \mathbf{y}_{0}^{(k)})\|^2 \ .
	\end{align}
\end{lemma}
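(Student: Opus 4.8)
The plan is to treat the two cases separately, both resting on a bias--variance decomposition of $\mathbf{v}^{(k)}_{t+1}$ conditioned on the history through iteration $t$. For the recursive case ($t>0$), let $\mathcal{F}_t$ be the $\sigma$-algebra generated by the random sample sets $\mathcal{S}_0,\dots,\mathcal{S}_t$; then $\mathbf{x}^{(k)}_{t+1},\mathbf{y}^{(k)}_{t+1},\mathbf{x}^{(k)}_t,\mathbf{y}^{(k)}_t,\mathbf{v}^{(k)}_t$ and the table entries $\mathbf{g}^{(k)}_{i,t}$ are all $\mathcal{F}_t$-measurable, and the only fresh randomness in $\mathbf{v}^{(k)}_{t+1}$ comes from $\mathcal{S}_{t+1}$. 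Writing $\mathbf{z}_i \triangleq \big(\nabla_{\mathbf{x}} f^{(k)}_i(\mathbf{x}^{(k)}_{t+1},\mathbf{y}^{(k)}_{t+1})-\nabla_{\mathbf{x}} f^{(k)}_i(\mathbf{x}^{(k)}_t,\mathbf{y}^{(k)}_t)\big)+\rho_1\big(\nabla_{\mathbf{x}} f^{(k)}_i(\mathbf{x}^{(k)}_t,\mathbf{y}^{(k)}_t)-\mathbf{g}^{(k)}_{i,t}\big)$ and $\bar{\mathbf{z}}=\frac1n\sum_{j=1}^n\mathbf{z}_j$, the definition in Eq.~(\ref{vt}) reads $\mathbf{v}^{(k)}_{t+1}=(1-\rho_1)\mathbf{v}^{(k)}_t+\frac{1}{s_1}\sum_{i\in\mathcal{S}_{t+1}}\mathbf{z}_i+\rho_1\frac1n\sum_{j=1}^n\mathbf{g}^{(k)}_{j,t}$, so $\mathbb{E}[\mathbf{v}^{(k)}_{t+1}\mid\mathcal{F}_t]=(1-\rho_1)\mathbf{v}^{(k)}_t+\nabla_{\mathbf{x}} f^{(k)}(\mathbf{x}^{(k)}_{t+1},\mathbf{y}^{(k)}_{t+1})-(1-\rho_1)\nabla_{\mathbf{x}} f^{(k)}(\mathbf{x}^{(k)}_t,\mathbf{y}^{(k)}_t)$. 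Subtracting $\nabla_{\mathbf{x}} f^{(k)}(\mathbf{x}^{(k)}_{t+1},\mathbf{y}^{(k)}_{t+1})$ shows the conditional mean of the estimation error equals $(1-\rho_1)$ times the previous error.

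Next I would apply the identity $\mathbb{E}[\|\xi\|^2]=\|\mathbb{E}\xi\|^2+\mathbb{E}[\|\xi-\mathbb{E}\xi\|^2]$ to $\xi=\mathbf{v}^{(k)}_{t+1}-\nabla_{\mathbf{x}} f^{(k)}(\mathbf{x}^{(k)}_{t+1},\mathbf{y}^{(k)}_{t+1})$ conditioned on $\mathcal{F}_t$: the first term is $(1-\rho_1)^2\|\mathbf{v}^{(k)}_t-\nabla_{\mathbf{x}} f^{(k)}(\mathbf{x}^{(k)}_t,\mathbf{y}^{(k)}_t)\|^2$ by the previous paragraph, and since the $\mathcal{F}_t$-measurable parts cancel, the second term equals the minibatch-sampling variance $\mathbb{E}\big[\|\frac{1}{s_1}\sum_{i\in\mathcal{S}_{t+1}}\mathbf{z}_i-\bar{\mathbf{z}}\|^2\mid\mathcal{F}_t\big]$. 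For a uniform size-$s_1$ subset of $[n]$ this is $\frac{n-s_1}{(n-1)s_1}\cdot\frac1n\sum_j\|\mathbf{z}_j-\bar{\mathbf{z}}\|^2\le\frac{1}{s_1 n}\sum_j\|\mathbf{z}_j\|^2$, i.e.\ the same sampling estimate used in the proof of Lemma~\ref{lemma_consensus_a}. Finally $\|\mathbf{z}_j\|^2\le 2\|\nabla_{\mathbf{x}} f^{(k)}_j(\mathbf{x}^{(k)}_{t+1},\mathbf{y}^{(k)}_{t+1})-\nabla_{\mathbf{x}} f^{(k)}_j(\mathbf{x}^{(k)}_t,\mathbf{y}^{(k)}_t)\|^2+2\rho_1^2\|\nabla_{\mathbf{x}} f^{(k)}_j(\mathbf{x}^{(k)}_t,\mathbf{y}^{(k)}_t)-\mathbf{g}^{(k)}_{j,t}\|^2$ by Young's inequality, and the first piece is at most $L^2\|\mathbf{x}^{(k)}_{t+1}-\mathbf{x}^{(k)}_t\|^2+L^2\|\mathbf{y}^{(k)}_{t+1}-\mathbf{y}^{(k)}_t\|^2$ by Assumption~\ref{assumption_smooth} (the $\mathbf{x}$-block of a gradient has norm no larger than the full gradient). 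Averaging over $j$ and taking total expectations yields exactly the stated recursion.

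For the base case $t=0$, I would simply substitute $\rho_0=1$, $\mathbf{v}^{(k)}_{-1}=0$, $\mathbf{g}^{(k)}_{i,-1}=0$, and $\mathbf{x}^{(k)}_{-1}=\mathbf{x}_0=\mathbf{x}^{(k)}_0$, $\mathbf{y}^{(k)}_{-1}=\mathbf{y}_0=\mathbf{y}^{(k)}_0$ into Eq.~(\ref{vt}): the SARAH correction vanishes because the two evaluation points coincide, the table terms vanish because $\mathbf{g}^{(k)}_{i,-1}=0$, and $\mathbf{v}^{(k)}_0$ collapses to the plain minibatch average $\frac{1}{s_0}\sum_{i\in\mathcal{S}_0}\nabla_{\mathbf{x}} f^{(k)}_i(\mathbf{x}^{(k)}_0,\mathbf{y}^{(k)}_0)$. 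Its deviation from $\nabla_{\mathbf{x}} f^{(k)}(\mathbf{x}^{(k)}_0,\mathbf{y}^{(k)}_0)=\frac1n\sum_j\nabla_{\mathbf{x}} f^{(k)}_j(\mathbf{x}^{(k)}_0,\mathbf{y}^{(k)}_0)$ is handled by the exact size-$s_0$ sampling-variance identity, giving $\frac{n-s_0}{(n-1)s_0}\cdot\frac1n\sum_j\|\nabla_{\mathbf{x}} f^{(k)}_j(\mathbf{x}^{(k)}_0,\mathbf{y}^{(k)}_0)-\frac1n\sum_l\nabla_{\mathbf{x}} f^{(k)}_l(\mathbf{x}^{(k)}_0,\mathbf{y}^{(k)}_0)\|^2$, which we bound from above by dropping the centering.

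I expect the only delicate point to be the filtration bookkeeping --- verifying that $\mathbf{g}^{(k)}_{i,t}$, $\mathbf{x}^{(k)}_{t+1}$, and $\mathbf{y}^{(k)}_{t+1}$ are measurable with respect to the randomness through iteration $t$, so they can be pulled out of the conditional expectation over $\mathcal{S}_{t+1}$ --- together with getting the sampling-variance constant right (the crude $1/s_1$ bound suffices for the recursion, whereas the sharp $\tfrac{n-s_0}{(n-1)s_0}$ factor is needed for $t=0$). Everything else is Young's inequality, $L$-smoothness, and the $(1-\rho_1)^2$-contraction of the conditional mean.
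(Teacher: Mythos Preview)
Your proposal is correct and follows essentially the same approach as the paper: a bias--variance decomposition of $\mathbf{v}^{(k)}_{t+1}-\nabla_{\mathbf{x}} f^{(k)}(\mathbf{x}^{(k)}_{t+1},\mathbf{y}^{(k)}_{t+1})$ conditioned on $\mathcal{F}_t$, identifying the $(1-\rho_1)$-scaled previous error as the conditional mean, bounding the minibatch-sampling variance by the crude $1/s_1$ factor, and finishing with Young's inequality plus $L$-smoothness; the base case is the exact without-replacement sampling variance with the centering dropped. The only cosmetic difference is that the paper applies Young's inequality to split the SARAH and SAGA corrections \emph{before} invoking the sampling-variance bound, whereas you combine them into a single $\mathbf{z}_i$ and split afterward---the resulting bounds are identical.
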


\begin{proof}
	When $t=0$,  $\rho_{0}=1$,  by denoting $\mathbf{z}_i =f^{(k)}_{i}(\mathbf{x}_{0}^{(k)}, \mathbf{y}_{0}^{(k)}) -\nabla_{\mathbf{x}} f^{(k)}(\mathbf{x}_{0}^{(k)}, \mathbf{y}_{0}^{(k)})$, we can get
	\begin{align}
		& \quad \mathbb{E}[\|\mathbf{v}^{(k)}_{0}-\nabla_{\mathbf{x}} f^{(k)}(\mathbf{x}_{0}^{(k)}, \mathbf{y}_{0}^{(k)})\|^2] \notag \\
		& = \mathbb{E}[\|\frac{1}{s_0} \sum_{i \in \mathcal{S}_{0}}\nabla_{\mathbf{x}} f^{(k)}_{i}(\mathbf{x}_{0}^{(k)}, \mathbf{y}_{0}^{(k)}) -\nabla_{\mathbf{x}} f^{(k)}(\mathbf{x}_{0}^{(k)}, \mathbf{y}_{0}^{(k)})\|^2]  \notag \\
		& =\mathbb{E}[\|  \frac{1}{s_{0}}\sum_{i=1}^{n} \mathbf{z}_i \mathbb{I}_{i\in \mathcal{S}_0} \|^2]  \notag \\
		& = \frac{1}{s_{0}^2}\mathbb{E}[\| \sum_{i=1}^{n} \mathbf{z}_i \mathbb{I}_{i\in \mathcal{S}_0}  \|^2] \notag \\
		& = \frac{1}{s_{0}^2}\mathbb{E}\left[\sum_{i=1}^{n} \left\|  \mathbf{z}_i \mathbb{I}_{i\in \mathcal{S}_0} \right \|^2 +  \sum_{i=1}^{n}\sum_{j=1, j\neq i}^{n}\langle \mathbf{z}_i \mathbb{I}_{i\in \mathcal{S}_0} , \mathbf{z}_j\mathbb{I}_{j\in \mathcal{S}_0} \rangle\right]\notag \\
		& = \frac{1}{s_{0}^2}\mathbb{E}\left[\sum_{i=1}^{n} \left\|  \mathbf{z}_i \right \|^2  \mathbb{I}_{j\in \mathcal{S}_0}  +  \sum_{i=1}^{n}\sum_{j=1, j\neq i}^{n}\langle \mathbf{z}_i , \mathbf{z}_j \rangle \mathbb{I}_{i\in \mathcal{S}_0, j\in \mathcal{S}_0}\right] \notag \\
		& = \frac{1}{s_{0}^2}\frac{s_{0}}{n}\sum_{i=1}^{n} \left\|  \mathbf{z}_i \right \|^2  +  \frac{1}{s_{0}^2} \frac{s_{0}(s_{0}-1)}{n(n-1)}\sum_{i=1}^{n}\sum_{j=1, j\neq i}^{n}\langle \mathbf{z}_i , \mathbf{z}_j \rangle \notag\\
		& = \frac{1}{s_{0}n}\sum_{i=1}^{n} \left\|  \mathbf{z}_i \right \|^2  +   \frac{(s_{0}-1)}{s_{0}n(n-1)}\sum_{i=1}^{n}\sum_{j=1, j\neq i}^{n}\langle \mathbf{z}_i , \mathbf{z}_j \rangle \notag \\
		& = \frac{1}{s_{0}n}\sum_{i=1}^{n} \left\|  \mathbf{z}_i \right \|^2  +   \frac{(s_{0}-1)}{s_{0}n(n-1)}\left( \left\|  \sum_{i=1}^{n}\mathbf{z}_i \right \|^2 - \sum_{i=1}^{n} \left\|  \mathbf{z}_i \right \|^2\right) \notag \\
		& = \left(\frac{1}{s_{0}n} -   \frac{(s_{0}-1)}{s_{0}n(n-1)}\right)\sum_{i=1}^{n} \left\|  \mathbf{z}_i \right \|^2 \notag   \\
		& = \frac{n-s_0}{s_{0}(n-1)}\frac{1}{n} \sum_{i=1}^{n} \left\|  \mathbf{z}_i \right \|^2  \notag  \\
		& = \frac{n-s_0}{s_{0}(n-1)}\frac{1}{n} \sum_{i=1}^{n} \left\|  \nabla_{\mathbf{x}} f^{(k)}_{i}(\mathbf{x}_{0}^{(k)}, \mathbf{y}_{0}^{(k)}) -\nabla_{\mathbf{x}} f^{(k)}(\mathbf{x}_{0}^{(k)}, \mathbf{y}_{0}^{(k)}) \right \|^2  \notag  \\
		& \leq  \frac{n-s_0}{s_{0}(n-1)}\frac{1}{n} \sum_{i=1}^{n} \left\|   \nabla_{\mathbf{x}} f^{(k)}_{i}(\mathbf{x}_{0}^{(k)}, \mathbf{y}_{0}^{(k)})  \right \|^2  \  , 
	\end{align}
	where $ \mathbb{I}_{i\in \mathcal{S}_0} =1$ when $i\in  \mathcal{S}_0$, and $ \mathbb{I}_{i\in \mathcal{S}_0, j\in \mathcal{S}_0}=1$ when $i\in  \mathcal{S}_0$ and $j\in  \mathcal{S}_0$. 
	
	For $t>0$, we can get
	\begin{align}
		& \quad \mathbb{E}[\|\mathbf{v}_{t+1}^{(k)}-\nabla_{\mathbf{x}} f^{(k)}(\mathbf{x}_{t+1}^{(k)}, \mathbf{y}_{t+1}^{(k)})\|^2] \notag \\
		& = \mathbb{E}[\|\frac{1}{s_{t+1}} \sum_{i \in \mathcal{S}_{t+1}}\Big(\nabla_{\mathbf{x}} f^{(k)}_{i}(\mathbf{x}_{t+1}^{(k)}, \mathbf{y}_{t+1}^{(k)})-\nabla_{\mathbf{x}} f^{(k)}_{i}(\mathbf{x}_{t}^{(k)}, \mathbf{y}_{t}^{(k)})\Big)+(1-\rho_{t+1}) \mathbf{v}_{t}^{(k)}\notag \\
		& \quad +\rho_{t+1}\Big(\frac{1}{s_{t+1}} \sum_{i \in \mathcal{S}_{t+1}}(\nabla_{\mathbf{x}} f^{(k)}_{i}(\mathbf{x}_{t}^{(k)}, \mathbf{y}_{t}^{(k)})-\mathbf{g}_{i, t}^{(k)})+\frac{1}{n} \sum_{j=1}^{n} \mathbf{g}_{j, t}^{(k)}\Big) - \nabla_{\mathbf{x}} f^{(k)}(\mathbf{x}_{t+1}^{(k)}, \mathbf{y}_{t+1}^{(k)})\|^2 ] \notag \\
		& = \mathbb{E}[\|\frac{1}{s_{t+1}} \sum_{i \in \mathcal{S}_{t+1}}\Big(\nabla_{\mathbf{x}} f^{(k)}_{i}(\mathbf{x}_{t+1}^{(k)}, \mathbf{y}_{t+1}^{(k)})-\nabla_{\mathbf{x}} f^{(k)}_{i}(\mathbf{x}_{t}^{(k)}, \mathbf{y}_{t}^{(k)}) - \nabla_{\mathbf{x}} f^{(k)}(\mathbf{x}_{t+1}^{(k)}, \mathbf{y}_{t+1}^{(k)})+ \nabla_{\mathbf{x}} f^{(k)}(\mathbf{x}_{t}^{(k)}, \mathbf{y}_{t}^{(k)})\Big)\notag \\
		& \quad +\rho_{t+1}\Big(\frac{1}{s_{t+1}} \sum_{i \in \mathcal{S}_{t+1}}(\nabla_{\mathbf{x}} f^{(k)}_{i}(\mathbf{x}_{t}^{(k)}, \mathbf{y}_{t}^{(k)})-\mathbf{g}_{i, t}^{(k)})+\frac{1}{n} \sum_{j=1}^{n} \mathbf{g}_{j, t}^{(k)}-\nabla_{\mathbf{x}} f^{(k)}(\mathbf{x}_{t}^{(k)}, \mathbf{y}_{t}^{(k)})\Big)\notag \\
		& \quad +(1-\rho_{t+1})( \mathbf{v}_{t}^{(k)}- \nabla_{\mathbf{x}} f^{(k)}(\mathbf{x}_{t}^{(k)}, \mathbf{y}_{t}^{(k)}))\|^2 ] \notag \\
		& \leq \mathbb{E}[\|\frac{1}{s_{t+1}} \sum_{i \in \mathcal{S}_{t+1}}\Big(\nabla_{\mathbf{x}} f^{(k)}_{i}(\mathbf{x}_{t+1}^{(k)}, \mathbf{y}_{t+1}^{(k)})-\nabla_{\mathbf{x}} f^{(k)}_{i}(\mathbf{x}_{t}^{(k)}, \mathbf{y}_{t}^{(k)}) - \nabla_{\mathbf{x}} f^{(k)}(\mathbf{x}_{t+1}^{(k)}, \mathbf{y}_{t+1}^{(k)})+ \nabla_{\mathbf{x}} f^{(k)}(\mathbf{x}_{t}^{(k)}, \mathbf{y}_{t}^{(k)})\Big)\notag \\
		& \quad +\rho_{t+1}\Big(\frac{1}{s_{t+1}} \sum_{i \in \mathcal{S}_{t+1}}(\nabla_{\mathbf{x}} f^{(k)}_{i}(\mathbf{x}_{t}^{(k)}, \mathbf{y}_{t}^{(k)})-\mathbf{g}_{i, t}^{(k)})+\frac{1}{n} \sum_{j=1}^{n} \mathbf{g}_{j, t}^{(k)}-\nabla_{\mathbf{x}} f^{(k)}(\mathbf{x}_{t}^{(k)}, \mathbf{y}_{t}^{(k)})\Big)\|^2]\notag \\
		& \quad +(1-\rho_{t+1})^2\mathbb{E}[\| \mathbf{v}_{t}^{(k)}- \nabla_{\mathbf{x}} f^{(k)}(\mathbf{x}_{t}^{(k)}, \mathbf{y}_{t}^{(k)})\|^2 ] \notag \\
		& \leq 2\mathbb{E}[\|\frac{1}{s_{t+1}} \sum_{i \in \mathcal{S}_{t+1}}\Big(\nabla_{\mathbf{x}} f^{(k)}_{i}(\mathbf{x}_{t+1}^{(k)}, \mathbf{y}_{t+1}^{(k)})-\nabla_{\mathbf{x}} f^{(k)}_{i}(\mathbf{x}_{t}^{(k)}, \mathbf{y}_{t}^{(k)}) - \nabla_{\mathbf{x}} f^{(k)}(\mathbf{x}_{t+1}^{(k)}, \mathbf{y}_{t+1}^{(k)})+ \nabla_{\mathbf{x}} f^{(k)}(\mathbf{x}_{t}^{(k)}, \mathbf{y}_{t}^{(k)})\Big)\|^2]\notag \\
		& \quad +2\rho_{t+1}^2\mathbb{E}[\|\Big(\frac{1}{s_{t+1}} \sum_{i \in \mathcal{S}_{t+1}}(\nabla_{\mathbf{x}} f^{(k)}_{i}(\mathbf{x}_{t}^{(k)}, \mathbf{y}_{t}^{(k)})-\mathbf{g}_{i, t}^{(k)})+\frac{1}{n} \sum_{j=1}^{n} \mathbf{g}_{j, t}^{(k)}-\nabla_{\mathbf{x}} f^{(k)}(\mathbf{x}_{t}^{(k)}, \mathbf{y}_{t}^{(k)})\Big)\|^2]\notag \\
		& \quad +(1-\rho_{t+1})^2\mathbb{E}[\| \mathbf{v}_{t}^{(k)}- \nabla_{\mathbf{x}} f^{(k)}(\mathbf{x}_{t}^{(k)}, \mathbf{y}_{t}^{(k)})\|^2]  \notag \\
		& \leq \frac{2}{s_{t+1}^2}\sum_{i \in \mathcal{S}_{t+1}}\mathbb{E}[\|\nabla_{\mathbf{x}} f^{(k)}_{i}(\mathbf{x}_{t+1}^{(k)}, \mathbf{y}_{t+1}^{(k)})-\nabla_{\mathbf{x}} f^{(k)}_{i}(\mathbf{x}_{t}^{(k)}, \mathbf{y}_{t}^{(k)})- \nabla_{\mathbf{x}} f^{(k)}(\mathbf{x}_{t+1}^{(k)}, \mathbf{y}_{t+1}^{(k)})+ \nabla_{\mathbf{x}} f^{(k)}(\mathbf{x}_{t}^{(k)}, \mathbf{y}_{t}^{(k)})\|^2 ]\notag \\
		& \quad +\frac{ 2\rho_{t+1}^2}{s_{t+1}^2}\sum_{i \in \mathcal{S}_{t+1}} \mathbb{E}[\|\nabla_{\mathbf{x}} f^{(k)}_{i}(\mathbf{x}_{t}^{(k)}, \mathbf{y}_{t}^{(k)})-\mathbf{g}_{i, t}^{(k)}+\frac{1}{n} \sum_{j=1}^{n} \mathbf{g}_{j, t}^{(k)}-\nabla_{\mathbf{x}} f^{(k)}(\mathbf{x}_{t}^{(k)}, \mathbf{y}_{t}^{(k)})\|^2 ]\notag \\
		& \quad +(1-\rho_{t+1})^2\mathbb{E}[\| \mathbf{v}_{t}^{(k)}- \nabla_{\mathbf{x}} f^{(k)}(\mathbf{x}_{t}^{(k)}, \mathbf{y}_{t}^{(k)})\|^2]  \notag \\
		& \leq \frac{2}{s_{t+1}^2}\sum_{i \in \mathcal{S}_{t+1}}\mathbb{E}[\|\nabla_{\mathbf{x}} f^{(k)}_{i}(\mathbf{x}_{t+1}^{(k)}, \mathbf{y}_{t+1}^{(k)})-\nabla_{\mathbf{x}} f^{(k)}_{i}(\mathbf{x}_{t}^{(k)}, \mathbf{y}_{t}^{(k)})\|^2] +\frac{ 2\rho_{t+1}^2}{s_{t+1}^2}\sum_{i \in \mathcal{S}_{t+1}} \mathbb{E}[\|\nabla_{\mathbf{x}} f^{(k)}_{i}(\mathbf{x}_{t}^{(k)}, \mathbf{y}_{t}^{(k)})-\mathbf{g}_{i, t}^{(k)}\|^2] \notag \\
		& \quad +(1-\rho_{t+1})^2\mathbb{E}[\| \mathbf{v}_{t}^{(k)}- \nabla_{\mathbf{x}} f^{(k)}(\mathbf{x}_{t}^{(k)}, \mathbf{y}_{t}^{(k)})\|^2]  \notag \\
		& \leq \frac{2L^2}{s_{t+1}}\mathbb{E}[\|\mathbf{x}_{t+1}^{(k)}- \mathbf{x}_{t}^{(k)}\|^2] + \frac{2L^2}{s_{t+1}}\mathbb{E}[\|\mathbf{y}_{t+1}^{(k)}-\mathbf{y}_{t}^{(k)}\|^2 ]+\frac{ 2\rho_{t+1}^2}{s_{t+1}}\frac{1}{n}\sum_{j=1}^{n}\mathbb{E}[ \|\nabla_{\mathbf{x}} f^{(k)}_{j}(\mathbf{x}_{t}^{(k)}, \mathbf{y}_{t}^{(k)})-\mathbf{g}_{j, t}^{(k)}\|^2] \notag \\
		& \quad +(1-\rho_{t+1})^2\mathbb{E}[\| \mathbf{v}_{t}^{(k)}- \nabla_{\mathbf{x}} f^{(k)}(\mathbf{x}_{t}^{(k)}, \mathbf{y}_{t}^{(k)})\|^2]   \ , 
	\end{align}
	where the second to last step follows from Eq.~(\ref{eq_var}),  the last step follows from Assumption~\ref{assumption_smooth}. 
	\begin{align} \label{eq_var}
		& \quad \mathbb{E}[\|\nabla_{\mathbf{x}} f^{(k)}_{i}(\mathbf{x}_{t}^{(k)}, \mathbf{y}_{t}^{(k)})-\mathbf{g}_{i, t}^{(k)}+\frac{1}{n} \sum_{j=1}^{n} \mathbf{g}_{j, t}^{(k)}-\nabla_{\mathbf{x}} f^{(k)}(\mathbf{x}_{t}^{(k)}, \mathbf{y}_{t}^{(k)})\|^2]  \notag  \\
		&  \leq \mathbb{E}[\|\nabla_{\mathbf{x}} f^{(k)}_{i}(\mathbf{x}_{t}^{(k)}, \mathbf{y}_{t}^{(k)})-\mathbf{g}_{i, t}^{(k)}\|^2 ]\notag \\
		& = \frac{1}{n}\sum_{j=1}^{n} \mathbb{E}[\|\nabla_{\mathbf{x}} f^{(k)}_{j}(\mathbf{x}_{t}^{(k)}, \mathbf{y}_{t}^{(k)})-\mathbf{g}_{j, t}^{(k)}\|^2 ] \ . 
	\end{align}
	Finally, by setting  the batch size to $s_{t}=s_1$ and $\rho_{t}=\rho_1$ for $t>1$, we complete the proof.

\end{proof}

\begin{lemma} \label{lemma_var_y}
	Given Assumption~\ref{graph}-\ref{assumption_strong}, for $t>0$,  by setting $s_{t}=s_1$ and $\rho_{t}=\rho_1$, we have
	\begin{align}
		& \quad \mathbb{E}[\|\mathbf{u}_{t+1}^{(k)}-\nabla_{\mathbf{y}} f^{(k)}(\mathbf{x}_{t+1}^{(k)}, \mathbf{y}_{t+1}^{(k)})\|^2 ] \leq (1-\rho_{1})^2\mathbb{E}[\| \mathbf{u}_{t}^{(k)}- \nabla_{\mathbf{y}} f^{(k)}(\mathbf{x}_{t}^{(k)}, \mathbf{y}_{t}^{(k)})\|^2 ]  \notag \\
		& \leq \frac{2L^2}{s_{1}}\mathbb{E}[\|\mathbf{x}_{t+1}^{(k)}- \mathbf{x}_{t}^{(k)}\|^2 ]+ \frac{2L^2}{s_{1}}\mathbb{E}[\|\mathbf{y}_{t+1}^{(k)}-\mathbf{y}_{t}^{(k)}\|^2+\frac{ 2\rho_{1}^2}{s_{1}}\frac{1}{n}\sum_{i=1}^{n} \mathbb{E}[\|\nabla_{\mathbf{y}} f^{(k)}_{i}(\mathbf{x}_{t}^{(k)}, \mathbf{y}_{t}^{(k)})-\mathbf{h}_{i, t}^{(k)}\|^2 ] \ . 
	\end{align}

	For $t=0$,  $\rho_{0}=1$,  we have
	\begin{align}
		&  \mathbb{E}[\|\mathbf{u}_{0}^{(k)}-\nabla_{\mathbf{y}} f^{(k)}(\mathbf{x}_{0}^{(k)}, \mathbf{y}_{0}^{(k)})\|^2 ]\leq  \frac{n-s_0}{(n-1)s_0}\frac{1}{n}\sum_{i=1}^{n} \|\nabla_{\mathbf{y}} f^{(k)}_i(\mathbf{x}_{0}^{(k)}, \mathbf{y}_{0}^{(k)})\|^2 \ .
	\end{align}
\end{lemma}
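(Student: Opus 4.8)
The statement is the exact $\mathbf{y}$-analogue of Lemma~\ref{lemma_var_x}, so the plan is to transcribe that proof, replacing $\nabla_{\mathbf{x}} f_i^{(k)}$ by $\nabla_{\mathbf{y}} f_i^{(k)}$, the estimator $\mathbf{v}_t^{(k)}$ by $\mathbf{u}_t^{(k)}$, and the gradient table $\mathbf{g}_{i,t}^{(k)}$ by $\mathbf{h}_{i,t}^{(k)}$. This works verbatim because the recursion~(\ref{ut}) defining $\mathbf{u}_t^{(k)}$ has the identical algebraic form to~(\ref{vt}) defining $\mathbf{v}_t^{(k)}$, and Assumption~\ref{assumption_smooth} controls $\nabla_{\mathbf{y}} f_i^{(k)}$ with the same Lipschitz constant $L$ (the smoothness bound is stated jointly for the full gradient $\nabla f_i^{(k)}$, hence also for its $\mathbf{y}$-block).

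\textbf{Base case $t=0$, $\rho_0=1$.} Here $\mathbf{u}_0^{(k)}=\frac{1}{s_0}\sum_{i\in\mathcal{S}_0}\nabla_{\mathbf{y}} f_i^{(k)}(\mathbf{x}_0^{(k)},\mathbf{y}_0^{(k)})$, so the error equals $\frac{1}{s_0}\sum_{i=1}^n \mathbf{z}_i\mathbb{I}_{i\in\mathcal{S}_0}$ with $\mathbf{z}_i=\nabla_{\mathbf{y}} f_i^{(k)}(\mathbf{x}_0^{(k)},\mathbf{y}_0^{(k)})-\nabla_{\mathbf{y}} f^{(k)}(\mathbf{x}_0^{(k)},\mathbf{y}_0^{(k)})$. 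Expanding the square and taking expectation over the uniform size-$s_0$ subset, using $\mathbb{E}[\mathbb{I}_{i\in\mathcal{S}_0}]=s_0/n$ and $\mathbb{E}[\mathbb{I}_{i\in\mathcal{S}_0}\mathbb{I}_{j\in\mathcal{S}_0}]=s_0(s_0-1)/(n(n-1))$ for $i\neq j$, together with $\sum_{i=1}^n\mathbf{z}_i=\mathbf{0}$ to collapse the cross terms, gives exactly $\frac{n-s_0}{s_0(n-1)}\cdot\frac1n\sum_{i=1}^n\|\mathbf{z}_i\|^2$; finally $\|\mathbf{z}_i\|^2=\|\nabla_{\mathbf{y}} f_i^{(k)}-\mathbb{E}[\nabla_{\mathbf{y}} f_i^{(k)}]\|^2\le\|\nabla_{\mathbf{y}} f_i^{(k)}\|^2$ yields the claimed bound. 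This is the same combinatorial computation already carried out for $\mathbf{v}_0^{(k)}$.

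\textbf{Inductive step $t>0$.} Using~(\ref{ut}) I would write $\mathbf{u}_{t+1}^{(k)}-\nabla_{\mathbf{y}} f^{(k)}(\mathbf{x}_{t+1}^{(k)},\mathbf{y}_{t+1}^{(k)})$ as a sum of three pieces: (i) the centered SPIDER-type increment $\frac{1}{s_{t+1}}\sum_{i\in\mathcal{S}_{t+1}}\big(\nabla_{\mathbf{y}} f_i^{(k)}(\mathbf{x}_{t+1}^{(k)},\mathbf{y}_{t+1}^{(k)})-\nabla_{\mathbf{y}} f_i^{(k)}(\mathbf{x}_{t}^{(k)},\mathbf{y}_{t}^{(k)})-\nabla_{\mathbf{y}} f^{(k)}(\mathbf{x}_{t+1}^{(k)},\mathbf{y}_{t+1}^{(k)})+\nabla_{\mathbf{y}} f^{(k)}(\mathbf{x}_{t}^{(k)},\mathbf{y}_{t}^{(k)})\big)$; (ii) the centered ZeroSARAH correction $\rho_{t+1}\big(\frac{1}{s_{t+1}}\sum_{i\in\mathcal{S}_{t+1}}(\nabla_{\mathbf{y}} f_i^{(k)}(\mathbf{x}_{t}^{(k)},\mathbf{y}_{t}^{(k)})-\mathbf{h}_{i,t}^{(k)})+\frac1n\sum_j\mathbf{h}_{j,t}^{(k)}-\nabla_{\mathbf{y}} f^{(k)}(\mathbf{x}_{t}^{(k)},\mathbf{y}_{t}^{(k)})\big)$; and (iii) the $\mathcal{F}_t$-measurable carry-over $(1-\rho_{t+1})\big(\mathbf{u}_t^{(k)}-\nabla_{\mathbf{y}} f^{(k)}(\mathbf{x}_{t}^{(k)},\mathbf{y}_{t}^{(k)})\big)$. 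Conditioning on $\mathcal{F}_t$, pieces (i) and (ii) have zero mean while (iii) is deterministic, so $\mathbb{E}\|\cdot\|^2$ splits as $\mathbb{E}\|(\text{i})+(\text{ii})\|^2+(1-\rho_{t+1})^2\mathbb{E}\|\mathbf{u}_t^{(k)}-\nabla_{\mathbf{y}} f^{(k)}(\mathbf{x}_{t}^{(k)},\mathbf{y}_{t}^{(k)})\|^2$. Applying Young's inequality to the centered part, then per-sample independence with $\mathbb{E}\|z-\mathbb{E}z\|^2\le\mathbb{E}\|z\|^2$ and Assumption~\ref{assumption_smooth} on piece (i), and the table estimate of Eq.~(\ref{eq_var}) (with $\nabla_{\mathbf{y}}$, $\mathbf{h}$ in place of $\nabla_{\mathbf{x}}$, $\mathbf{g}$) on piece (ii), produces $\frac{2L^2}{s_{t+1}}\|\mathbf{x}_{t+1}^{(k)}-\mathbf{x}_{t}^{(k)}\|^2+\frac{2L^2}{s_{t+1}}\|\mathbf{y}_{t+1}^{(k)}-\mathbf{y}_{t}^{(k)}\|^2+\frac{2\rho_{t+1}^2}{s_{t+1}}\cdot\frac1n\sum_j\|\nabla_{\mathbf{y}} f_j^{(k)}(\mathbf{x}_{t}^{(k)},\mathbf{y}_{t}^{(k)})-\mathbf{h}_{j,t}^{(k)}\|^2$; substituting $s_{t+1}=s_1$, $\rho_{t+1}=\rho_1$ finishes the step.

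\textbf{Main obstacle.} There is no genuinely new difficulty: every ingredient—the without-replacement sampling identity, the orthogonal (zero-mean versus deterministic) decomposition, Young's inequality, and $L$-smoothness—was already deployed in Lemma~\ref{lemma_var_x}. The only points worth checking are that~(\ref{ut}) is structurally identical to~(\ref{vt}) and that the $\mathbf{y}$-block of the gradient inherits the $L$-Lipschitz bound from Assumption~\ref{assumption_smooth}; both hold, so the argument is a direct transcription.
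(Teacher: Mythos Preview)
Your proposal is correct and matches the paper's own treatment exactly: the paper simply states that Lemma~\ref{lemma_var_y} can be proved by following Lemma~\ref{lemma_var_x} and omits the details. Your transcription---replacing $(\nabla_{\mathbf{x}},\mathbf{v},\mathbf{g})$ by $(\nabla_{\mathbf{y}},\mathbf{u},\mathbf{h})$ and invoking the same zero-mean decomposition, Young's inequality, without-replacement sampling identity, and $L$-smoothness---is precisely that argument.
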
 
Lemma~\ref{lemma_var_y} can be proved by following Lemma~\ref{lemma_var_x}. Thus, we ignore it. 

\begin{lemma}  \label{lemma_var_g_x}
	Given Assumption~\ref{graph}-\ref{assumption_strong}, for $t>0$, by setting $s_{t}=s_1$ and $\alpha_{t}=\alpha_1>0$, we have
	\begin{align}
		&  \mathbb{E}[\frac{1}{n} \sum_{j=1}^{n}\|\nabla_{\mathbf{x}} f^{(k)}_{j}(\mathbf{x}_{t+1}^{(k)}, \mathbf{y}_{t+1}^{(k)})-\mathbf{g}_{j,t+1}^{(k)}\|^{2}] \leq (1-\frac{s_{1}}{n})(1+\alpha_{1})\frac{1}{n}\sum_{j=1}^{n}\| \nabla_{\mathbf{x}} f^{(k)}_{j}(\mathbf{x}_{t}^{(k)}, \mathbf{y}_{t}^{(k)})-\mathbf{g}_{j,t}^{(k)}\|^{2} \notag \\
		&\quad  +  2L^2(1-\frac{s_{1}}{n}) (1+\frac{1}{\alpha_{1}})\|\mathbf{x}_{t+1}^{(k)}-\mathbf{x}_{t}^{(k)}\|^2 + 2L^2(1-\frac{s_{1}}{n}) (1+\frac{1}{\alpha_{1}})\| \mathbf{y}_{t+1}^{(k)}- \mathbf{y}_{t}^{(k)}\|^2 \ . 
	\end{align}
	For $t=0$, we have
	\begin{align}
		& \mathbb{E}[\frac{1}{n} \sum_{j=1}^{n}\|\nabla_{\mathbf{x}} f^{(k)}_{j}(\mathbf{x}_{0}^{(k)}, \mathbf{y}_{0}^{(k)})-\mathbf{g}_{j,0}^{(k)}\|^{2}]  \leq  (1-\frac{s_{0}}{n}) \frac{1}{n}\sum_{j=1}^{n}\|\nabla_{\mathbf{x}} f^{(k)}_{j}(\mathbf{x}_{0}^{(k)}, \mathbf{y}_{0}^{(k)})\|^{2} \ . 
	\end{align}

\end{lemma}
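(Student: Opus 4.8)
The plan is to combine the SAGA-style update of the gradient table with the independence of the fresh mini-batch $\mathcal{S}_{t+1}$ from all quantities computed up to and including iteration $t$. For $t>0$, I would condition on the $\sigma$-algebra $\mathcal{F}_{t+1}$ generated by $\mathcal{S}_0,\dots,\mathcal{S}_t$: by the order of operations in Algorithm~\ref{dsgda}, the iterates $\mathbf{x}_{t+1}^{(k)},\mathbf{y}_{t+1}^{(k)}$ and every table entry $\mathbf{g}_{j,t}^{(k)}$ are $\mathcal{F}_{t+1}$-measurable, whereas $\mathcal{S}_{t+1}$ is a uniformly random size-$s_1$ subset independent of $\mathcal{F}_{t+1}$. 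Fixing an index $j$, the update rule gives $\nabla_{\mathbf{x}} f^{(k)}_{j}(\mathbf{x}_{t+1}^{(k)},\mathbf{y}_{t+1}^{(k)})-\mathbf{g}_{j,t+1}^{(k)}=\mathbf{0}$ on the event $\{j\in\mathcal{S}_{t+1}\}$ and equals $\nabla_{\mathbf{x}} f^{(k)}_{j}(\mathbf{x}_{t+1}^{(k)},\mathbf{y}_{t+1}^{(k)})-\mathbf{g}_{j,t}^{(k)}$ on its complement, whose conditional probability is $1-\tfrac{s_1}{n}$; averaging over $j$ then yields
\[ \mathbb{E}\Big[\tfrac1n\sum_{j=1}^n\|\nabla_{\mathbf{x}} f^{(k)}_{j}(\mathbf{x}_{t+1}^{(k)},\mathbf{y}_{t+1}^{(k)})-\mathbf{g}_{j,t+1}^{(k)}\|^2\,\big|\,\mathcal{F}_{t+1}\Big]=\big(1-\tfrac{s_1}{n}\big)\tfrac1n\sum_{j=1}^n\|\nabla_{\mathbf{x}} f^{(k)}_{j}(\mathbf{x}_{t+1}^{(k)},\mathbf{y}_{t+1}^{(k)})-\mathbf{g}_{j,t}^{(k)}\|^2 . \]

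Next I would shift the evaluation point inside each squared norm from $(\mathbf{x}_{t+1}^{(k)},\mathbf{y}_{t+1}^{(k)})$ to $(\mathbf{x}_{t}^{(k)},\mathbf{y}_{t}^{(k)})$. Inserting $\pm\nabla_{\mathbf{x}} f^{(k)}_{j}(\mathbf{x}_{t}^{(k)},\mathbf{y}_{t}^{(k)})$ and applying the weighted Young inequality $\|a+b\|^2\le(1+\alpha_1)\|a\|^2+(1+\tfrac1{\alpha_1})\|b\|^2$ gives, per index $j$,
\[ \|\nabla_{\mathbf{x}} f^{(k)}_{j}(\mathbf{x}_{t+1}^{(k)},\mathbf{y}_{t+1}^{(k)})-\mathbf{g}_{j,t}^{(k)}\|^2\le(1+\alpha_1)\|\nabla_{\mathbf{x}} f^{(k)}_{j}(\mathbf{x}_{t}^{(k)},\mathbf{y}_{t}^{(k)})-\mathbf{g}_{j,t}^{(k)}\|^2+\big(1+\tfrac1{\alpha_1}\big)\|\nabla_{\mathbf{x}} f^{(k)}_{j}(\mathbf{x}_{t+1}^{(k)},\mathbf{y}_{t+1}^{(k)})-\nabla_{\mathbf{x}} f^{(k)}_{j}(\mathbf{x}_{t}^{(k)},\mathbf{y}_{t}^{(k)})\|^2 . \]
The last term I would control by splitting it through the intermediate point $(\mathbf{x}_{t+1}^{(k)},\mathbf{y}_{t}^{(k)})$ via $\|u+v\|^2\le2\|u\|^2+2\|v\|^2$ and then invoking Assumption~\ref{assumption_smooth} on each piece, which bounds it by $2L^2\|\mathbf{x}_{t+1}^{(k)}-\mathbf{x}_{t}^{(k)}\|^2+2L^2\|\mathbf{y}_{t+1}^{(k)}-\mathbf{y}_{t}^{(k)}\|^2$. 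Substituting this into the displayed conditional identity, multiplying by $1-\tfrac{s_1}{n}$ (and, if an unconditional statement is wanted, taking total expectation) produces exactly the claimed recursion for $t>0$.

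The base case $t=0$ is immediate from the initialization $\mathbf{g}_{i,-1}^{(k)}=\mathbf{0}$ and $\rho_0=1$: then $\mathbf{g}_{j,0}^{(k)}=\nabla_{\mathbf{x}} f^{(k)}_{j}(\mathbf{x}_{0}^{(k)},\mathbf{y}_{0}^{(k)})$ for $j\in\mathcal{S}_0$ and $\mathbf{g}_{j,0}^{(k)}=\mathbf{0}$ otherwise, so the residual $\nabla_{\mathbf{x}} f^{(k)}_{j}(\mathbf{x}_{0}^{(k)},\mathbf{y}_{0}^{(k)})-\mathbf{g}_{j,0}^{(k)}$ vanishes on $\{j\in\mathcal{S}_0\}$ and equals $\nabla_{\mathbf{x}} f^{(k)}_{j}(\mathbf{x}_{0}^{(k)},\mathbf{y}_{0}^{(k)})$ otherwise; taking expectation over $\mathcal{S}_0$ and using $\mathbb{P}(j\notin\mathcal{S}_0)=1-\tfrac{s_0}{n}$ gives the stated bound (in fact with equality).

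I expect the only genuinely delicate point to be the measurability/independence bookkeeping in the first paragraph: one must be certain that $\mathcal{S}_{t+1}$ is independent of the triple $(\mathbf{x}_{t+1}^{(k)},\mathbf{y}_{t+1}^{(k)},\{\mathbf{g}_{j,t}^{(k)}\}_{j})$, which is precisely where the ordering in Algorithm~\ref{dsgda} is used — the step-$t$ iterate and table updates consume $\mathcal{S}_t$ only, never $\mathcal{S}_{t+1}$. A secondary, purely bookkeeping, point is tracking the factor $2$ introduced by the triangle-inequality split before applying smoothness, which is the source of the $2L^2$ coefficients; everything else is routine algebra.
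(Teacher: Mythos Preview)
Your proposal is correct and follows essentially the same route as the paper's proof: use the probability $1-\tfrac{s_{t+1}}{n}$ that index $j$ is not refreshed to pull out the contraction factor, add and subtract $\nabla_{\mathbf{x}} f^{(k)}_{j}(\mathbf{x}_t^{(k)},\mathbf{y}_t^{(k)})$, apply Young's inequality with parameter $\alpha_{t+1}$, and finish with the smoothness assumption; the $t=0$ case is handled identically. The only cosmetic difference is that you derive the $2L^2$ constant via an intermediate-point split, whereas Assumption~\ref{assumption_smooth} as stated already gives $\|\nabla f_j^{(k)}(\mathbf{x}_1,\mathbf{y}_1)-\nabla f_j^{(k)}(\mathbf{x}_2,\mathbf{y}_2)\|^2\le L^2\|\mathbf{x}_1-\mathbf{x}_2\|^2+L^2\|\mathbf{y}_1-\mathbf{y}_2\|^2$ directly, so the split is unnecessary (though harmless, and the lemma is stated with $2L^2$ anyway).
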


\begin{proof}
	For $t>0$,  we have
	\begin{align}
		&\quad  \mathbb{E}[\frac{1}{n} \sum_{j=1}^{n}\|\nabla_{\mathbf{x}} f^{(k)}_{j}(\mathbf{x}_{t+1}^{(k)}, \mathbf{y}_{t+1}^{(k)})-\mathbf{g}_{j,t+1}^{(k)}\|^{2}]  \notag \\
		& = (1-\frac{s_{t+1}}{n}) \frac{1}{n}\sum_{j=1}^{n} \mathbb{E}[\|\nabla_{\mathbf{x}} f^{(k)}_{j}(\mathbf{x}_{t+1}^{(k)}, \mathbf{y}_{t+1}^{(k)})-\mathbf{g}_{j,t}^{(k)}\|^{2}] \notag \\
		& = (1-\frac{s_{t+1}}{n}) \frac{1}{n}\sum_{j=1}^{n} \mathbb{E}[\|\nabla_{\mathbf{x}} f^{(k)}_{j}(\mathbf{x}_{t+1}^{(k)}, \mathbf{y}_{t+1}^{(k)})-\nabla_{\mathbf{x}} f^{(k)}_{j}(\mathbf{x}_{t}^{(k)}, \mathbf{y}_{t}^{(k)}) + \nabla_{\mathbf{x}} f^{(k)}_{j}(\mathbf{x}_{t}^{(k)}, \mathbf{y}_{t}^{(k)})-\mathbf{g}_{j,t}^{(k)}\|^{2} ]\notag \\
		& \leq 2L^2(1-\frac{s_{t+1}}{n}) (1+\frac{1}{\alpha_{t+1}}) \mathbb{E}[\|\mathbf{x}_{t+1}^{(k)}-\mathbf{x}_{t}^{(k)}\|^2] + 2L^2(1-\frac{s_{t+1}}{n}) (1+\frac{1}{\alpha_{t+1}}) \mathbb{E}[\| \mathbf{y}_{t+1}^{(k)}- \mathbf{y}_{t}^{(k)}\|^2]\notag \\
		& \quad +(1-\frac{s_{t+1}}{n})(1+\alpha_{t+1})\frac{1}{n}\sum_{j=1}^{n} \mathbb{E}[\| \nabla_{\mathbf{x}} f^{(k)}_{j}(\mathbf{x}_{t}^{(k)}, \mathbf{y}_{t}^{(k)})-\mathbf{g}_{j,t}^{(k)}\|^{2} ]  \ , 
	\end{align}
	where $\alpha_{t+1}>0$. By setting $s_{t}=s_1$ and $\alpha_{t}=\alpha_1$, we complete the proof for the first part. 
	
	For $t=0$, we have
	\begin{align}
		& \mathbb{E}[\frac{1}{n} \sum_{j=1}^{n}\|\nabla_{\mathbf{x}} f^{(k)}_{j}(\mathbf{x}_{0}^{(k)}, \mathbf{y}_{0}^{(k)})-\mathbf{g}_{j,0}^{(k)}\|^{2}]   = (1-\frac{s_{0}}{n}) \frac{1}{n}\sum_{j=1}^{n}\|\nabla_{\mathbf{x}} f^{(k)}_{j}(\mathbf{x}_{0}^{(k)}, \mathbf{y}_{0}^{(k)})\|^{2}  \ . 
	\end{align}
\end{proof}

\begin{lemma}  \label{lemma_var_g_y}
	Given Assumption~\ref{graph}-\ref{assumption_strong}, for $t>0$, by setting $s_{t}=s_1$ and $\alpha_{t}=\alpha_1$, we have
	\begin{align}
		& \mathbb{E}[\frac{1}{n} \sum_{j=1}^{n}\|\nabla_{\mathbf{y}} f^{(k)}_{j}(\mathbf{x}_{t+1}^{(k)}, \mathbf{y}_{t+1}^{(k)})-\mathbf{h}_{j,t+1}^{(k)}\|^{2}] \leq  (1-\frac{s_{1}}{n})(1+\alpha_{t})\frac{1}{n}\sum_{j=1}^{n}\| \nabla_{\mathbf{y}} f^{(k)}_{j}(\mathbf{x}_{t}^{(k)}, \mathbf{y}_{t}^{(k)})-\mathbf{h}_{j,t}^{(k)}\|^{2}  \notag \\
		&\quad  + 2L^2(1-\frac{s_{1}}{n}) (1+\frac{1}{\alpha_{t}})\|\mathbf{x}_{t+1}^{(k)}-\mathbf{x}_{t}^{(k)}\|^2  + 2L^2(1-\frac{s_{1}}{n}) (1+\frac{1}{\alpha_{t}})\| \mathbf{y}_{t+1}^{(k)}- \mathbf{y}_{t}^{(k)}\|^2 \ . 
	\end{align}
	For $t=0$, we have
	\begin{align}
		& \mathbb{E}[\frac{1}{n} \sum_{j=1}^{n}\|\nabla_{\mathbf{y}} f^{(k)}_{j}(\mathbf{x}_{0}^{(k)}, \mathbf{y}_{0}^{(k)})-\mathbf{h}_{j,0}\|^{2}]\leq  (1-\frac{s_{0}}{n}) \frac{1}{n}\sum_{j=1}^{n}\|\nabla_{\mathbf{y}} f^{(k)}_{j}(\mathbf{x}_{0}^{(k)}, \mathbf{y}_{0}^{(k)})\|^{2}  \ . 
	\end{align}
\end{lemma}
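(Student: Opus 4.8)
\textbf{Proof proposal for Lemma~\ref{lemma_var_g_y}.}

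The plan is to mirror the argument of Lemma~\ref{lemma_var_g_x} verbatim, replacing $\nabla_{\mathbf{x}} f^{(k)}_j$ by $\nabla_{\mathbf{y}} f^{(k)}_j$ and $\mathbf{g}^{(k)}_{j,t}$ by $\mathbf{h}^{(k)}_{j,t}$ throughout, since the update rule for $\mathbf{h}^{(k)}_{j,t}$ is structurally identical to that for $\mathbf{g}^{(k)}_{j,t}$ (overwrite with the fresh gradient on the sampled coordinates $i\in\mathcal{S}_t$, keep the stale value otherwise). First I would take expectation over the uniform random subset $\mathcal{S}_{t+1}$ of size $s_{t+1}$: for each fixed index $j$, with probability $s_{t+1}/n$ the stored value $\mathbf{h}^{(k)}_{j,t+1}$ equals the exact fresh gradient $\nabla_{\mathbf{y}} f^{(k)}_j(\mathbf{x}^{(k)}_{t+1},\mathbf{y}^{(k)}_{t+1})$ and contributes $0$ to the discrepancy, while with probability $1-s_{t+1}/n$ it retains $\mathbf{h}^{(k)}_{j,t}$; hence the conditional expectation of $\frac{1}{n}\sum_j\|\nabla_{\mathbf{y}} f^{(k)}_j(\mathbf{x}^{(k)}_{t+1},\mathbf{y}^{(k)}_{t+1})-\mathbf{h}^{(k)}_{j,t+1}\|^2$ is exactly $(1-\frac{s_{t+1}}{n})\frac{1}{n}\sum_j\|\nabla_{\mathbf{y}} f^{(k)}_j(\mathbf{x}^{(k)}_{t+1},\mathbf{y}^{(k)}_{t+1})-\mathbf{h}^{(k)}_{j,t}\|^2$.

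Next I would insert the telescoping term $\pm\nabla_{\mathbf{y}} f^{(k)}_j(\mathbf{x}^{(k)}_t,\mathbf{y}^{(k)}_t)$ inside the norm and apply Young's inequality $\|a+b\|^2\le(1+\alpha_{t+1})\|a\|^2+(1+\frac{1}{\alpha_{t+1}})\|b\|^2$ with $a=\nabla_{\mathbf{y}} f^{(k)}_j(\mathbf{x}^{(k)}_t,\mathbf{y}^{(k)}_t)-\mathbf{h}^{(k)}_{j,t}$ and $b=\nabla_{\mathbf{y}} f^{(k)}_j(\mathbf{x}^{(k)}_{t+1},\mathbf{y}^{(k)}_{t+1})-\nabla_{\mathbf{y}} f^{(k)}_j(\mathbf{x}^{(k)}_t,\mathbf{y}^{(k)}_t)$. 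For the $b$-term I would bound $\|\nabla_{\mathbf{y}} f^{(k)}_j(\mathbf{x}^{(k)}_{t+1},\mathbf{y}^{(k)}_{t+1})-\nabla_{\mathbf{y}} f^{(k)}_j(\mathbf{x}^{(k)}_t,\mathbf{y}^{(k)}_t)\|^2\le L^2\|\mathbf{x}^{(k)}_{t+1}-\mathbf{x}^{(k)}_t\|^2+L^2\|\mathbf{y}^{(k)}_{t+1}-\mathbf{y}^{(k)}_t\|^2$ using the $L$-smoothness of $f^{(k)}_j$ from Assumption~\ref{assumption_smooth} (note that the full-gradient smoothness bound dominates the $\mathbf{y}$-partial one), giving the factor $2L^2$ after summing the $\mathbf{x}$ and $\mathbf{y}$ contributions. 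Setting $s_t=s_1$ and $\alpha_t=\alpha_1$ for $t>0$ then yields the stated recursion; I note the statement writes $\alpha_t$ in the bound, which should be read as $\alpha_1$ under this choice.

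For the base case $t=0$, the initialization $\mathbf{h}^{(k)}_{j,-1}=0$ together with $\rho_0=1$ means $\mathbf{h}^{(k)}_{j,0}=\nabla_{\mathbf{y}} f^{(k)}_j(\mathbf{x}^{(k)}_0,\mathbf{y}^{(k)}_0)$ on $j\in\mathcal{S}_0$ and $\mathbf{h}^{(k)}_{j,0}=0$ otherwise; taking expectation over $\mathcal{S}_0$ of size $s_0$ exactly as above gives $\mathbb{E}[\frac1n\sum_j\|\nabla_{\mathbf{y}} f^{(k)}_j(\mathbf{x}^{(k)}_0,\mathbf{y}^{(k)}_0)-\mathbf{h}^{(k)}_{j,0}\|^2]=(1-\frac{s_0}{n})\frac1n\sum_j\|\nabla_{\mathbf{y}} f^{(k)}_j(\mathbf{x}^{(k)}_0,\mathbf{y}^{(k)}_0)\|^2$, with equality rather than inequality. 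This lemma is essentially bookkeeping; there is no real obstacle, the only mild subtlety being to confirm that the randomness of $\mathcal{S}_{t+1}$ is independent of the iterates $(\mathbf{x}^{(k)}_{t+1},\mathbf{y}^{(k)}_{t+1})$ so that the conditional expectation factorizes cleanly — which holds because $\mathcal{S}_{t+1}$ is drawn fresh at iteration $t+1$ while $(\mathbf{x}^{(k)}_{t+1},\mathbf{y}^{(k)}_{t+1})$ are determined by iteration $t$ and earlier.
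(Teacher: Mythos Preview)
Your proposal is correct and matches the paper's approach exactly: the paper simply states that Lemma~\ref{lemma_var_g_y} ``can be proved by following Lemma~\ref{lemma_var_g_x}'' and omits the details, and your sketch reproduces precisely that argument (conditional expectation over the fresh sample $\mathcal{S}_{t+1}$, telescoping insertion of $\nabla_{\mathbf{y}} f^{(k)}_j(\mathbf{x}^{(k)}_t,\mathbf{y}^{(k)}_t)$, Young's inequality with parameter $\alpha_{t+1}$, and the smoothness bound from Assumption~\ref{assumption_smooth}). Your remark that the $\alpha_t$ appearing in the statement should be read as $\alpha_1$, and that the $t=0$ case is in fact an equality, are both accurate observations about minor looseness in the paper's presentation.
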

Lemma~\ref{lemma_var_g_y} can be proved by following Lemma~\ref{lemma_var_g_x}. Thus, we do not include it.

\begin{lemma} \label{lemma_consensus_x}
	Given Assumption~\ref{graph}-\ref{assumption_strong},  we have 
	
	\begin{align}
		&   \sum_{k=1}^{K} \|\mathbf{x}_{t+1}^{(k)}-\bar{\mathbf{x}}_{t+1}\|^2\leq  \Big(1-\frac{\eta(1-\lambda^2)}{2}\Big)\sum_{k=1}^{K}\|\mathbf{x}_{t}^{(k)}-\bar{\mathbf{x}}_{t} \|^2+ \frac{2\eta\gamma_1^2}{1-\lambda^2}\sum_{k=1}^{K}\|\mathbf{a}_{t}^{(k)}-\bar{\mathbf{a}}_{t} \|^2 \ . 
	\end{align}
	
\end{lemma}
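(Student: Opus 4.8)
The plan is to lift the per-worker recursion to the matrix notation $X_t = [\mathbf{x}_t^{(1)},\dots,\mathbf{x}_t^{(K)}]$ and $A_t = [\mathbf{a}_t^{(1)},\dots,\mathbf{a}_t^{(K)}]$ already introduced in the supplement, so that $\sum_{k=1}^K\|\mathbf{x}_t^{(k)}-\bar{\mathbf{x}}_t\|^2 = \|X_t - \bar X_t\|_F^2$. First I would observe that the two-step $\mathbf{x}$-update of Algorithm~\ref{dsgda} reads $X_{t+1} = (1-\eta)X_t + \eta X_tW - \eta\gamma_1 A_t$, and that right-multiplying by $J := \tfrac1K\mathbf{1}\mathbf{1}^T$ and using the double stochasticity of $W$ (Assumption~\ref{graph}) gives $\bar X_{t+1} = \bar X_t - \eta\gamma_1\bar A_t$. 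Subtracting the two identities and using $\bar X_tW=\bar X_t$ together with $(X_t-\bar X_t)J = 0$ produces the clean consensus-error recursion
\[
 X_{t+1}-\bar X_{t+1} = (X_t-\bar X_t)\big((1-\eta)I+\eta(W-J)\big) - \eta\gamma_1(A_t-\bar A_t).
\]

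Next I would estimate the operator norm. The rows of $X_t - \bar X_t$ are orthogonal to $\mathbf{1}$, and on $\mathbf{1}^{\perp}$ the symmetric matrix $W$ has spectral radius $\lambda=|\lambda_2|$, so the symmetric matrix $(1-\eta)I+\eta(W-J)$ acts there with eigenvalues in $[-(1-\eta(1-\lambda)),\,1-\eta(1-\lambda)]$, the one of largest modulus being $1-\eta+\eta\lambda$ because $0<\eta<1$. Hence $\|(X_t-\bar X_t)((1-\eta)I+\eta(W-J))\|_F \le (1-\eta(1-\lambda))\|X_t-\bar X_t\|_F$. Applying the Peter--Paul inequality $\|u+v\|_F^2\le(1+c)\|u\|_F^2+(1+1/c)\|v\|_F^2$ with the tuned value $1+c = \frac{1}{1-\eta(1-\lambda)}$ (so that $1+1/c = \frac{1}{\eta(1-\lambda)}$) collapses the square and yields
\[
 \|X_{t+1}-\bar X_{t+1}\|_F^2 \le \big(1-\eta(1-\lambda)\big)\|X_t-\bar X_t\|_F^2 + \frac{\eta\gamma_1^2}{1-\lambda}\|A_t-\bar A_t\|_F^2.
\]

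Finally I would invoke the elementary bound $\tfrac{1+\lambda}{2}\le 1$ (equivalently $\lambda<1$), which gives simultaneously $1-\eta(1-\lambda)\le 1-\tfrac{\eta(1-\lambda^2)}{2}$ and $\tfrac{\eta\gamma_1^2}{1-\lambda}\le \tfrac{2\eta\gamma_1^2}{1-\lambda^2}$; rewriting the Frobenius norms back as $\sum_{k}\|\cdot\|^2$ is exactly the claimed inequality.

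The only place that needs care is the second step: correctly identifying $(1-\eta)I+\eta(W-J)$ as the effective mixing operator on the mean-zero subspace and pinning the parameter $c$ so that the contraction factor lands precisely at $1-\tfrac{\eta(1-\lambda^2)}{2}$ rather than something slightly larger. Everything else is routine linear algebra, and the companion consensus estimate for $\mathbf{y}$ follows identically with $\gamma_2$ in place of $\gamma_1$.
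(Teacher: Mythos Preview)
Your argument is correct and reaches the stated bound, but it proceeds along a slightly different decomposition than the paper's. The paper splits $X_{t+1}-\bar X_{t+1}$ in two stages: it first applies Young's inequality with parameter $a_0=\eta/(1-\eta)$ to separate the convex combination $(1-\eta)(X_t-\bar X_t)+\eta(X_{t+\frac12}-\bar X_{t+\frac12})$, obtaining $(1-\eta)\|X_t-\bar X_t\|_F^2+\eta\|X_{t+\frac12}-\bar X_{t+\frac12}\|_F^2$; then it applies Young a second time with $a_1=(1-\lambda^2)/(2\lambda^2)$ to split $X_tW-\bar X_t$ from $\gamma_1(A_t-\bar A_t)$ and invokes $\|X_tW-\bar X_t\|_F^2\le\lambda^2\|X_t-\bar X_t\|_F^2$. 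This lands \emph{exactly} on the contraction factor $1-\tfrac{\eta(1-\lambda^2)}{2}$ without any relaxation. You instead merge $(1-\eta)I$ and $\eta W$ into a single mixing operator on $\mathbf{1}^\perp$, read off its spectral radius $1-\eta(1-\lambda)$, apply Young only once, and then relax via $1-\lambda\ge\tfrac{1-\lambda^2}{2}$ to match the target constants. Your intermediate bound $1-\eta(1-\lambda)$ is actually tighter than the paper's, and the single Young step is arguably cleaner; the paper's two-stage split, on the other hand, avoids the slightly delicate eigenvalue check that $|1-\eta(1+\lambda)|\le 1-\eta(1-\lambda)$ (which is where your hypothesis $\eta<1$ is genuinely used) and works directly with $\lambda^2$ throughout.
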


\begin{proof}
	
	\begin{align}
		&  \quad \sum_{k=1}^{K} \|\mathbf{x}_{t+1}^{(k)}-\bar{\mathbf{x}}_{t+1}\|^2 = \|X_{t+1} - \bar{X}_{t+1}\|_F^2 \notag \\
		& = \|(1-\eta)X_{t} + \eta X_{t+\frac{1}{2}} - (1-\eta)\bar{X}_{t} - \eta \bar{X}_{t+\frac{1}{2}} \|_F^2 \notag \\
		& \leq  (1+a_0)(1-\eta)^2\|X_t -\bar{X}_t\|_F^2+ \eta^2(1+\frac{1}{a_0})\|{X}_{t+\frac{1}{2}} -\bar{X}_{t+\frac{1}{2}} \|_F^2\notag \\
		& \leq (1-\eta) \|X_t -\bar{X}_t\|_F^2+ \eta\|{X}_{t+\frac{1}{2}} -\bar{X}_{t+\frac{1}{2}} \|_F^2\notag \\
		& \leq (1-\eta)\|X_t -\bar{X}_t\|_F^2+ \eta\|X_{t}W+\gamma_1A_t - (\bar{X}_t + \gamma_1\bar{A}_t)\|_F^2\notag \\ 
		& \leq (1-\eta)\|X_t -\bar{X}_t\|_F^2+ \eta(1+a_1)\|X_{t}W  -\bar{X}_{t} \|_F^2+ \eta\gamma_1^2(1+\frac{1}{a_1})\|A_t - \bar{A}_t \|_F^2\notag \\ 
		& \leq (1-\eta)\|X_t -\bar{X}_t\|_F^2+ \frac{\eta(1+\lambda^2)}{2}\|X_{t}  -\bar{X}_{t} \|_F^2+ \frac{2\eta\gamma_1^2}{1-\lambda^2}\|A_t - \bar{A}_t \|_F^2\notag \\ 
		& = \Big(1-\eta+\frac{\eta(1+\lambda^2)}{2}\Big)\|X_{t}  -\bar{X}_{t} \|_F^2+ \frac{2\eta\gamma_1^2}{1-\lambda^2}\|A_t - \bar{A}_t \|_F^2\notag \\ 
		& = \Big(1-\frac{\eta(1-\lambda^2)}{2}\Big)\sum_{k=1}^{K}\|\mathbf{x}_{t}^{(k)}-\bar{\mathbf{x}}_{t} \|^2+ \frac{2\eta\gamma_1^2}{1-\lambda^2}\sum_{k=1}^{K}\|\mathbf{a}_{t}^{(k)}-\bar{\mathbf{a}}_{t} \|^2 \ , 
	\end{align}
	where the second inequality follows from $a_0=\frac{\eta}{1-\eta}$, the last inequality follows from $a_1=\frac{1-\lambda^2}{2\lambda^2}$ and $\|X_{t}W  -\bar{X}_{t} \|_F^2\leq \lambda^2 \|X_{t}  -\bar{X}_{t} \|_F^2$.

\end{proof}

\begin{lemma} \label{lemma_consensus_y}
	Given Assumption~\ref{graph}-\ref{assumption_strong},  we have 
	\begin{align}
		&   \sum_{k=1}^{K} \|\mathbf{y}_{t+1}^{(k)}-\bar{\mathbf{y}}_{t+1}\|^2\leq  \Big(1-\frac{\eta(1-\lambda^2)}{2}\Big)\sum_{k=1}^{K}\|\mathbf{y}_{t}^{(k)}-\bar{\mathbf{y}}_{t} \|^2+ \frac{2\eta\gamma_2^2}{1-\lambda^2}\sum_{k=1}^{K}\|\mathbf{b}_{t}^{(k)}-\bar{\mathbf{b}}_{t} \|^2 \ . 
	\end{align}
	
\end{lemma}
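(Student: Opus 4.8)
The plan is to mirror verbatim the argument used for Lemma~\ref{lemma_consensus_x}, replacing $(\mathbf{x},\gamma_1,\mathbf{a})$ by $(\mathbf{y},\gamma_2,\mathbf{b})$. Collecting the per-worker iterates into matrices $Y_t=[\mathbf{y}_t^{(1)},\dots,\mathbf{y}_t^{(K)}]$ and $B_t=[\mathbf{b}_t^{(1)},\dots,\mathbf{b}_t^{(K)}]$, the two-step $\mathbf{y}$-update in Algorithm~\ref{dsgda} reads $Y_{t+\frac{1}{2}}=Y_tW+\gamma_2 B_t$ and $Y_{t+1}=(1-\eta)Y_t+\eta Y_{t+\frac{1}{2}}$, so that $\sum_{k=1}^{K}\|\mathbf{y}_{t+1}^{(k)}-\bar{\mathbf{y}}_{t+1}\|^2=\|Y_{t+1}-\bar{Y}_{t+1}\|_F^2$. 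Because $W$ is doubly stochastic (Assumption~\ref{graph}), averaging commutes with right-multiplication by $W$, i.e. $\overline{Y_tW}=\bar{Y}_t$, and likewise $\overline{\gamma_2 B_t}=\gamma_2\bar{B}_t$.

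First I would peel off the convex-combination step: writing $Y_{t+1}-\bar{Y}_{t+1}=(1-\eta)(Y_t-\bar{Y}_t)+\eta(Y_{t+\frac{1}{2}}-\bar{Y}_{t+\frac{1}{2}})$ and applying Young's inequality with parameter $a_0=\frac{\eta}{1-\eta}$, the constants simplify to $(1+a_0)(1-\eta)^2=1-\eta$ and $\eta^2(1+\frac{1}{a_0})=\eta$, giving $\|Y_{t+1}-\bar{Y}_{t+1}\|_F^2\le(1-\eta)\|Y_t-\bar{Y}_t\|_F^2+\eta\|Y_{t+\frac{1}{2}}-\bar{Y}_{t+\frac{1}{2}}\|_F^2$. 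Next I would expand $Y_{t+\frac{1}{2}}-\bar{Y}_{t+\frac{1}{2}}=(Y_tW-\bar{Y}_t)+\gamma_2(B_t-\bar{B}_t)$ and apply Young's inequality a second time with $a_1=\frac{1-\lambda^2}{2\lambda^2}$. Using the spectral-gap contraction $\|Y_tW-\bar{Y}_t\|_F^2\le\lambda^2\|Y_t-\bar{Y}_t\|_F^2$ (the point at which the eigenvalue condition in Assumption~\ref{graph} is used), the coefficient on $\|Y_t-\bar{Y}_t\|_F^2$ becomes $(1+a_1)\lambda^2=\frac{1+\lambda^2}{2}$, and the coefficient on $\|B_t-\bar{B}_t\|_F^2$ is $\gamma_2^2(1+\frac{1}{a_1})=\gamma_2^2\frac{1+\lambda^2}{1-\lambda^2}\le\frac{2\gamma_2^2}{1-\lambda^2}$ since $1+\lambda^2\le2$.

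Finally I would substitute back and collect terms: the coefficient of $\sum_{k=1}^{K}\|\mathbf{y}_t^{(k)}-\bar{\mathbf{y}}_t\|^2$ is $1-\eta+\frac{\eta(1+\lambda^2)}{2}=1-\frac{\eta(1-\lambda^2)}{2}$, and the coefficient of $\sum_{k=1}^{K}\|\mathbf{b}_t^{(k)}-\bar{\mathbf{b}}_t\|^2$ is $\frac{2\eta\gamma_2^2}{1-\lambda^2}$, which is exactly the claimed bound. I do not expect any genuine obstacle here: the only structural difference from Lemma~\ref{lemma_consensus_x} is the sign of $\gamma_2$ in $Y_{t+\frac{1}{2}}$ (ascent rather than descent on $\mathbf{y}$), and this is immaterial because $\|(Y_tW-\bar{Y}_t)\pm\gamma_2(B_t-\bar{B}_t)\|_F^2$ is bounded identically by Young's inequality regardless of the sign. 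The only step needing care is bookkeeping the two applications of Young's inequality so that the parameters $a_0,a_1$ telescope the constants precisely as above; everything else is a transcription of the $\mathbf{x}$-consensus proof.
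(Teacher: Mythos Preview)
Your proposal is correct and follows exactly the paper's approach: the paper itself simply states that Lemma~\ref{lemma_consensus_y} is proved ``similarly'' by repeating the argument of Lemma~\ref{lemma_consensus_x} with $(\mathbf{y},\gamma_2,\mathbf{b})$ in place of $(\mathbf{x},\gamma_1,\mathbf{a})$. Your observation that the sign of $\gamma_2$ in the ascent step is immaterial under Young's inequality is the only point not made explicit in the paper, and it is correct.
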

Similarly, we can prove the second inequality regarding $\mathbf{y}$ in Lemma~\ref{lemma_consensus_x}.

\begin{lemma} \label{lemma_incremental}
	Given Assumption~\ref{graph}-\ref{assumption_strong}, we have
	\begin{align}
		&  \quad \|X_{t+1} - X_{t}\|_F^2\leq 12 \eta^2\sum_{k=1}^{K}\|\mathbf{x}_{t}^{(k)} - \bar{\mathbf{x}}_{t} \|^2 + 3  \gamma_1^2\eta^2\sum_{k=1}^{K} \|\mathbf{a}_{t}^{(k)} - \bar{\mathbf{a}}_{t}\|^2 + 3\gamma_1^2\eta^2K \|\bar{\mathbf{v}}_{t}\|^2 \ ,  \\
		&  \quad \|Y_{t+1} - Y_{t}\|_F^2  \leq 12 \eta^2\sum_{k=1}^{K}\|\mathbf{y}_{t}^{(k)} - \bar{\mathbf{y}}_{t} \|^2 + 3  \gamma_2^2\eta^2\sum_{k=1}^{K} \|\mathbf{b}_{t}^{(k)} - \bar{\mathbf{b}}_{t}\|^2 + 3\gamma_2^2\eta^2K \|\bar{\mathbf{u}}_{t}\|^2 \ . 
	\end{align}
	
\end{lemma}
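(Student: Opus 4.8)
The plan is to write the $\mathbf{x}$-update of Algorithm~\ref{dsgda} in matrix form and split the increment into three pieces. Stacking local variables as $X_t=[\mathbf{x}_t^{(1)},\dots,\mathbf{x}_t^{(K)}]$ and $A_t=[\mathbf{a}_t^{(1)},\dots,\mathbf{a}_t^{(K)}]$, the two lines $\mathbf{x}_{t+\frac12}^{(k)}=\sum_j w_{kj}\mathbf{x}_t^{(j)}-\gamma_1\mathbf{a}_t^{(k)}$ and $\mathbf{x}_{t+1}^{(k)}=\mathbf{x}_t^{(k)}+\eta(\mathbf{x}_{t+\frac12}^{(k)}-\mathbf{x}_t^{(k)})$ collapse to $X_{t+1}-X_t=\eta\big(X_tW-X_t-\gamma_1A_t\big)$, so $\|X_{t+1}-X_t\|_F^2=\eta^2\|X_tW-X_t-\gamma_1A_t\|_F^2$. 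Since $W$ is doubly stochastic (Assumption~\ref{graph}), $\bar X_tW=\bar X_t$, hence $X_tW-X_t=(X_t-\bar X_t)(W-I)$; I would also write $\gamma_1A_t=\gamma_1(A_t-\bar A_t)+\gamma_1\bar A_t$, giving $X_{t+1}-X_t=\eta(X_t-\bar X_t)(W-I)-\eta\gamma_1(A_t-\bar A_t)-\eta\gamma_1\bar A_t$.

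Next I would apply $\|a+b+c\|^2\le 3\|a\|^2+3\|b\|^2+3\|c\|^2$ to these three summands. For the first, $\|(X_t-\bar X_t)(W-I)\|_F^2\le \|W-I\|_2^2\,\|X_t-\bar X_t\|_F^2$; by Assumption~\ref{graph} the eigenvalues of $W$ lie in $[-1,1]$, so those of $W-I$ lie in $[-2,0]$ and $\|W-I\|_2\le 2$, which yields the $12\eta^2\sum_k\|\mathbf{x}_t^{(k)}-\bar{\mathbf{x}}_t\|^2$ term (using $\|X_t-\bar X_t\|_F^2=\sum_k\|\mathbf{x}_t^{(k)}-\bar{\mathbf{x}}_t\|^2$, per the notation $\bar C_t=[\bar{\mathbf{c}}_t,\dots]$). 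The second summand is already $3\gamma_1^2\eta^2\|A_t-\bar A_t\|_F^2=3\gamma_1^2\eta^2\sum_k\|\mathbf{a}_t^{(k)}-\bar{\mathbf{a}}_t\|^2$.

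For the third summand I need $\bar{\mathbf{a}}_t=\bar{\mathbf{v}}_t$, so that $\|\bar A_t\|_F^2=K\|\bar{\mathbf{a}}_t\|^2=K\|\bar{\mathbf{v}}_t\|^2$ and the term becomes $3\gamma_1^2\eta^2K\|\bar{\mathbf{v}}_t\|^2$. This is the mean-preservation property of gradient tracking: averaging $\mathbf{a}_t^{(k)}=\sum_jw_{kj}\mathbf{a}_{t-1}^{(j)}+\mathbf{v}_t^{(k)}-\mathbf{v}_{t-1}^{(k)}$ over $k$ and using $\sum_kw_{kj}=1$ gives $\bar{\mathbf{a}}_t=\bar{\mathbf{a}}_{t-1}+\bar{\mathbf{v}}_t-\bar{\mathbf{v}}_{t-1}$, and the initialization $\mathbf{a}_{-1}^{(k)}=\mathbf{v}_{-1}^{(k)}=0$ closes the induction. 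The $\mathbf{y}$-inequality is identical: $Y_{t+1}-Y_t=\eta(Y_tW+\gamma_2B_t-Y_t)$, the sign change is harmless after squaring, and the companion identity $\bar{\mathbf{b}}_t=\bar{\mathbf{u}}_t$ follows from the same induction. No step is genuinely hard; the only points requiring care are the mean-preservation identity $\bar{\mathbf{a}}_t=\bar{\mathbf{v}}_t$ and the operator-norm estimate $\|W-I\|_2\le 2$, after which the claim follows from a single application of the three-term inequality above.
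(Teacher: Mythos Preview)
Your proposal is correct and follows essentially the same route as the paper: write $X_{t+1}-X_t=\eta(X_tW-X_t-\gamma_1A_t)$, split into the three pieces $(X_t-\bar X_t)(W-I)$, $-\gamma_1(A_t-\bar A_t)$, $-\gamma_1\bar A_t$, apply the three-term inequality, bound $\|W-I\|_2\le 2$, and invoke the gradient-tracking identity $\bar{\mathbf{a}}_t=\bar{\mathbf{v}}_t$. If anything, you justify the operator-norm bound and the mean-preservation induction more explicitly than the paper does.
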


\begin{proof}
	
	\begin{align}
		&  \quad \|X_{t+1} - X_{t}\|_F^2 \notag \\
		& = \eta^2\|X_{t+\frac{1}{2}} - X_{t}\|_F^2 \notag \\
		& =  \eta^2\|X_{t}W - \gamma_1 A_t - X_{t}\|_F^2 \notag \\
		& =  \eta^2\|X_{t}W - X_{t} - \gamma_1 A_t + \gamma_1 \bar{A}_t - \gamma_1 \bar{A}_t \|_F^2 \notag \\
		& \leq 3 \eta^2\|X_{t}W - X_{t}\|_F^2 + 3  \gamma_1^2\eta^2 \|A_t - \bar{A}_t\|_F^2 + 3\gamma_1^2\eta^2\|\bar{A}_t\|_F^2 \notag \\
		& = 3 \eta^2\|(X_{t} -\bar{X}_{t})(W-I)\|_F^2 + 3  \gamma_1^2\eta^2 \|A_t - \bar{A}_t\|_F^2 + 3\gamma_1^2\eta^2\|\bar{A}_t\|_F^2 \notag \\
		& \leq 12 \eta^2\|X_{t} -\bar{X}_{t}\|_F^2  + 3  \gamma_1^2\eta^2 \|A_t - \bar{A}_t\|_F^2 + 3\gamma_1^2\eta^2\|\bar{V}_t\|_F^2 \notag \\
		& = 12 \eta^2\sum_{k=1}^{K}\|\mathbf{x}_{t}^{(k)} - \bar{\mathbf{x}}_{t} \|^2 + 3  \gamma_1^2\eta^2\sum_{k=1}^{K} \|\mathbf{a}_{t}^{(k)} - \bar{\mathbf{a}}_{t}\|^2 + 3\gamma_1^2\eta^2K \|\bar{\mathbf{v}}_{t}\|^2 \ , 
	\end{align}
	where the last inequality follows from $\frac{1}{K}\sum_{k=1}^{K}\mathbf{a}_t^{(k)} = \frac{1}{K}\sum_{k=1}^{K}\mathbf{v}_t^{(k)}$.  Similarly, we can prove the inequality for $\|Y_{t+1} - Y_{t}\|_F^2$.

\end{proof}

Based on these lemmas, we prove Theorem~\ref{theorem} in the following. 
\begin{proof}
	For $t\geq1$, we define the potential function  as follows:
	\begin{align}
		& H_{t}=\mathbb{E}[\Phi(\bar{\mathbf{x}}_{t})]+ C_0 \mathbb{E}[\|\bar{\mathbf{y}}_{t}   - \mathbf{y}^{*}(\bar{\mathbf{x}}_t)\| ^2 ]  \notag \\
		& \quad +  \frac{C_1}{K}\sum_{k=1}^{K}\mathbb{E}[\|\nabla_{\mathbf{x}} f^{(k)}(\mathbf{x}^{(k)}_t, \mathbf{y}^{(k)}_t)-\mathbf{v}_t^{(k)}\|^2] +  \frac{C_2}{K}\sum_{k=1}^{K}\mathbb{E}[\|\nabla_{\mathbf{y}} f^{(k)}(\mathbf{x}^{(k)}_t, \mathbf{y}^{(k)}_t)-\mathbf{u}_t^{(k)}\|^2] \notag \\
		& \quad + \frac{C_3}{K} \sum_{k=1}^{K}\mathbb{E}[\frac{1}{n} \sum_{j=1}^{n}\|\nabla_{\mathbf{x}} f^{(k)}_{j}(\mathbf{x}_{t}^{(k)}, \mathbf{y}_{t}^{(k)})-\mathbf{g}_{j,t}^{(k)}\|^{2}]  + \frac{C_4}{K}\sum_{k=1}^{K}\mathbb{E}[\frac{1}{n} \sum_{j=1}^{n}\|\nabla_{\mathbf{y}} f^{(k)}_{j}(\mathbf{x}_{t}^{(k)}, \mathbf{y}_{t}^{(k)})-\mathbf{h}_{j,t}^{(k)}\|^{2}] \notag \\
		& \quad + \frac{C_5}{K}\sum_{k=1}^{K} \mathbb{E}[\|\bar{\mathbf{x}}_t-\mathbf{x}^{(k)}_{t}\|^2]+ \frac{C_6}{K}\sum_{k=1}^{K} \mathbb{E}[\|\bar{\mathbf{y}}_t-\mathbf{y}^{(k)}_{t}\|^2] \notag \\
		& \quad  + \frac{C_7}{K}\sum_{k=1}^{K} \mathbb{E}[\|\bar{\mathbf{a}}_t-\mathbf{a}^{(k)}_{t}\|^2]+ \frac{C_8}{K}\sum_{k=1}^{K}\mathbb{E}[ \|\bar{\mathbf{b}}_t-\mathbf{b}^{(k)}_{t}\|^2 ] \  . 
	\end{align}

	Then, according to Lemma~\ref{lemma_fx},~\ref{lemma_y},~\ref{lemma_var_x},~\ref{lemma_var_y},~\ref{lemma_var_g_x},~\ref{lemma_var_g_y},~\ref{lemma_consensus_x},~\ref{lemma_consensus_y},~\ref{lemma_consensus_a},~\ref{lemma_consensus_b}, it is easy to get
	\begin{align}
		& \quad H_{t+1} - H_{t}  \leq - \frac{\gamma_1\eta}{2} \mathbb{E}[\|\nabla \Phi(\bar{\mathbf{x}}_{t})\|^2] + \Big( \gamma_1\eta L^2 -\frac{\eta\gamma_2\mu}{4}C_0 \Big)  \mathbb{E}[\|\bar{\mathbf{y}}_t -  \mathbf{y}^*(\bar{\mathbf{x}}_t)\|^2]\notag \\
		& \quad + \Big( \frac{25\eta\gamma_1^2 \kappa^2 }{6\gamma_2\mu}C_0 -  \frac{\gamma_1\eta}{4} \Big)\mathbb{E}[\|\bar{\mathbf{v}}_t\|^2]    +\Big(- \frac{3\gamma_2^2\eta}{4}C_0 \Big) \mathbb{E}[\|\bar{\mathbf{u}}_t\|^2]    \notag \\
		& \quad  + \Big(2\gamma_1\eta L^2+  \frac{25\eta \gamma_2 L^2}{3\mu}C_0 -\frac{\eta(1-\lambda^2)}{2}C_5\Big) \frac{1}{K}\sum_{k=1}^{K} \mathbb{E}[\|\bar{\mathbf{x}}_t-\mathbf{x}^{(k)}_{t}\|^2]  \notag \\
		& \quad + \Big(2\gamma_1\eta L^2+  \frac{25\eta \gamma_2 L^2}{3\mu} C_0-\frac{\eta(1-\lambda^2)}{2} C_6\Big) \frac{1}{K}\sum_{k=1}^{K} \mathbb{E}[\|\bar{\mathbf{y}}_t-\mathbf{y}^{(k)}_{t}\|^2]  \notag \\
		& \quad + \Big(2\gamma_1\eta+C_1(1-\rho_{1})^2+  \frac{6\rho_{1}^2 }{1-\lambda^2}C_7-C_1\Big)\frac{1}{K}\sum_{k=1}^{K}\mathbb{E}[\|\nabla_{\mathbf{x}} f^{(k)}(\mathbf{x}^{(k)}_t, \mathbf{y}^{(k)}_t) -\mathbf{v}^{(k)}_t\|^2]\notag \\
		& \quad    + \Big(\frac{25\eta \gamma_2 }{3\mu}C_0 + C_2(1-\rho_{1})^2+  \frac{6\rho_{1}^2 }{1-\lambda^2}C_8-C_2\Big) \frac{1}{K}\sum_{k=1}^{K}\mathbb{E}[\|\nabla_{\mathbf{y}} f^{(k)}({\mathbf{x}}^{(k)}_t, {\mathbf{y}}^{(k)}_t)  -{\mathbf{u}}^{(k)}_t\|^2]   \notag \\
		& \quad +\Big(C_3(1-\frac{s_{1}}{n})(1+\alpha_{1})+\frac{ 2\rho_{1}^2}{s_{1}} C_1+ \frac{6\rho_{1}^2}{(1-\lambda^2)s_1} C_7-C_3\Big)\frac{1}{K}\sum_{k=1}^{K}\frac{1}{n}\sum_{i=1}^{n} \mathbb{E}[\|\nabla_{\mathbf{x}} f^{(k)}_{i}(\mathbf{x}_{t}^{(k)}, \mathbf{y}_{t}^{(k)})-\mathbf{g}_{i, t}^{(k)}\|^2] \notag \\
		&\quad  +\Big(C_4(1-\frac{s_{1}}{n})(1+\alpha_{1})+\frac{ 2\rho_{1}^2}{s_{1}}C_2 +\frac{6\rho_{1}^2}{(1-\lambda^2)s_1} C_8-C_4\Big) \frac{1}{K}\sum_{k=1}^{K}\frac{1}{n}\sum_{i=1}^{n} \mathbb{E}[\|\nabla_{\mathbf{y}} f^{(k)}_{i}(\mathbf{x}_{t}^{(k)}, \mathbf{y}_{t}^{(k)})-\mathbf{h}_{i, t}^{(k)}\|^2] \notag \\
		& \quad + \Big(\frac{2\gamma_1^2\eta}{1-\lambda^2}C_5- \frac{1-\lambda^2}{2}C_7\Big) \frac{1}{K}\sum_{k=1}^{K}\mathbb{E}[\|\mathbf{a}_{t}^{(k)}-\bar{\mathbf{a}}_{t}\|^2]   + \Big(\frac{2\eta\gamma_2^2}{1-\lambda^2}C_6- \frac{1-\lambda^2}{2}C_8\Big)\frac{1}{K}\sum_{k=1}^{K} \mathbb{E}[\|\mathbf{b}_{t}^{(k)}-\bar{\mathbf{b}}_{t}\|^2]  \notag \\
		& \quad + C_{9}\frac{1}{K}\sum_{k=1}^{K}\mathbb{E}[\|\mathbf{x}_{t+1}^{(k)}- \mathbf{x}_{t}^{(k)}\|^2] + C_{9}\frac{1}{K}\sum_{k=1}^{K}\mathbb{E}[\|\mathbf{y}_{t+1}^{(k)}-\mathbf{y}_{t}^{(k)}\|^2]  \ , 
	\end{align}
	where 
	\begin{align}
		& C_{9} = \frac{4L^2}{s_{1}}C_1+\frac{4L^2}{s_{1}}C_2+ 2L^2(1-\frac{s_{1}}{n}) (1+\frac{1}{\alpha_{1}})C_3+ 2L^2(1-\frac{s_{1}}{n}) (1+\frac{1}{\alpha_{1}})C_4+  \frac{6L^2}{(1-\lambda^2)} C_7+  \frac{6L^2}{(1-\lambda^2)}C_8 \ . 
	\end{align}
	
	Then, according to Lemma~\ref{lemma_incremental}, we can get
	\begin{align}
		& \quad H_{t+1} - H_{t}  \leq - \frac{\gamma_1\eta}{2} \mathbb{E}[\|\nabla \Phi(\bar{\mathbf{x}}_{t})\|^2] + \Big( \gamma_1\eta L^2 -\frac{\eta\gamma_2\mu}{4}C_0 \Big)  \mathbb{E}[\|\bar{\mathbf{y}}_t -  \mathbf{y}^*(\bar{\mathbf{x}}_t)\|^2]\notag \\
		& \quad + \Big( \frac{25\eta\gamma_1^2 \kappa^2 }{6\gamma_2\mu}C_0 + 3\gamma_1^2\eta^2 C_{9} -  \frac{\gamma_1\eta}{4} \Big)\mathbb{E}[\|\bar{\mathbf{v}}_t\|^2]    +\Big( 3\gamma_2^2\eta^2C_{9}- \frac{3\gamma_2^2\eta}{4}C_0 \Big) \mathbb{E}[\|\bar{\mathbf{u}}_t\|^2]    \notag \\
		& \quad  + \Big(2\gamma_1\eta L^2+  \frac{25\eta \gamma_2 L^2}{3\mu}C_0 + 12 \eta^2 C_{9}-\frac{\eta(1-\lambda^2)}{2}C_5\Big) \frac{1}{K}\sum_{k=1}^{K} \mathbb{E}[\|\bar{\mathbf{x}}_t-\mathbf{x}^{(k)}_{t}\|^2]  \notag \\
		& \quad + \Big(2\gamma_1\eta L^2+  \frac{25\eta \gamma_2 L^2}{3\mu} C_0+ 12 \eta^2 C_{9}-\frac{\eta(1-\lambda^2)}{2} C_6\Big) \frac{1}{K}\sum_{k=1}^{K} \mathbb{E}[\|\bar{\mathbf{y}}_t-\mathbf{y}^{(k)}_{t}\|^2]  \notag \\
		& \quad + \Big(2\gamma_1\eta+C_1(1-\rho_{1})^2+  \frac{6\rho_{1}^2 }{1-\lambda^2}C_7-C_1\Big)\frac{1}{K}\sum_{k=1}^{K}\mathbb{E}[\|\nabla_{\mathbf{x}} f^{(k)}(\mathbf{x}^{(k)}_t, \mathbf{y}^{(k)}_t) -\mathbf{v}^{(k)}_t\|^2]\notag \\
		& \quad    + \Big(\frac{25\eta \gamma_2 }{3\mu}C_0 + C_2(1-\rho_{1})^2+  \frac{6\rho_{1}^2 }{1-\lambda^2}C_8-C_2\Big) \frac{1}{K}\sum_{k=1}^{K}\mathbb{E}[\|\nabla_{\mathbf{y}} f^{(k)}({\mathbf{x}}^{(k)}_t, {\mathbf{y}}^{(k)}_t)  -{\mathbf{u}}^{(k)}_t\|^2]   \notag \\
		& \quad +\Big(C_3(1-\frac{s_{1}}{n})(1+\alpha_{1})+\frac{ 2\rho_{1}^2}{s_{1}} C_1+ \frac{6\rho_{1}^2}{(1-\lambda^2)s_1} C_7-C_3\Big)\frac{1}{K}\sum_{k=1}^{K}\frac{1}{n}\sum_{i=1}^{n} \mathbb{E}[\|\nabla_{\mathbf{x}} f^{(k)}_{i}(\mathbf{x}_{t}^{(k)}, \mathbf{y}_{t}^{(k)})-\mathbf{g}_{i, t}^{(k)}\|^2] \notag \\
		&\quad  +\Big(C_4(1-\frac{s_{1}}{n})(1+\alpha_{1})+\frac{ 2\rho_{1}^2}{s_{1}}C_2 +\frac{6\rho_{1}^2}{(1-\lambda^2)s_1} C_8-C_4\Big) \frac{1}{K}\sum_{k=1}^{K}\frac{1}{n}\sum_{i=1}^{n} \mathbb{E}[\|\nabla_{\mathbf{y}} f^{(k)}_{i}(\mathbf{x}_{t}^{(k)}, \mathbf{y}_{t}^{(k)})-\mathbf{h}_{i, t}^{(k)}\|^2] \notag \\
		& \quad + \Big(\frac{2\gamma_1^2\eta}{1-\lambda^2}C_5  + 3  \gamma_1^2\eta^2 C_{9}- \frac{1-\lambda^2}{2}C_7\Big) \frac{1}{K}\sum_{k=1}^{K}\mathbb{E}[\|\mathbf{a}_{t}^{(k)}-\bar{\mathbf{a}}_{t}\|^2]  \notag \\
		& \quad  + \Big(\frac{2\eta\gamma_2^2}{1-\lambda^2}C_6+ 3  \gamma_2^2\eta^2C_{9}- \frac{1-\lambda^2}{2}C_8\Big)\frac{1}{K}\sum_{k=1}^{K} \mathbb{E}[\|\mathbf{b}_{t}^{(k)}-\bar{\mathbf{b}}_{t}\|^2]   \ . 
	\end{align}
	
		By setting $C_0 = \frac{6\gamma_1  L^2}{\gamma_2\mu}$, we can get
		\begin{align}
			\gamma_1\eta L^2 -\frac{\eta\gamma_2\mu}{4}C_0 =  - \frac{\gamma_1\eta}{2}  \ . 
		\end{align}
		
		By setting 
		\begin{align}
			& C_1 =  \frac{3\gamma_1\eta}{\rho_{1}}  \ , \notag \\
			& C_2 = \frac{51\eta \gamma_1 L^2}{\rho_{1}\mu^2} \ ,  \notag \\
			& C_7 = \frac{(1-\lambda^2)\gamma_1\eta}{6\rho_{1}}  \ , \notag \\
			&  C_8 =  \frac{(1-\lambda^2)\eta \gamma_1 L^2}{6\rho_{1}\mu^2}   \ , \notag \\
			& \rho_1=\frac{s_1}{2n} < 1 \ , 
		\end{align}
		we can get
		\begin{align}
			&\quad  2\gamma_1\eta+C_1(1-\rho_{1})^2+  \frac{6\rho_{1}^2 }{1-\lambda^2}C_7-C_1 \notag  \\
			& \leq 2\gamma_1\eta+  \frac{6\rho_{1} }{1-\lambda^2}C_7-\rho_{1}C_1     \notag \\
			& \leq 2\gamma_1\eta+  \frac{6\rho_{1}^2 }{1-\lambda^2} \frac{(1-\lambda^2)\gamma_1\eta}{6\rho_{1}} -\rho_{1} \frac{3\gamma_1\eta}{\rho_{1}}     \notag \\
			& \leq  0 \ , 
		\end{align}
		and 
		\begin{align}
			& \quad \frac{25\eta \gamma_2 }{3\mu}C_0 + C_2(1-\rho_{1})^2+  \frac{6\rho_{1}^2 }{1-\lambda^2}C_8-C_2 \notag \\
			& \leq \frac{25\eta \gamma_2 }{3\mu}C_0 +  \frac{6\rho_{1} }{1-\lambda^2}C_8-\rho_{1} C_2 \notag \\
			& \leq \frac{25\eta \gamma_2 }{3\mu} \frac{6\gamma_1  L^2}{\gamma_2\mu} +  \frac{6\rho_{1} }{1-\lambda^2} \frac{(1-\lambda^2)\eta \gamma_1 L^2}{6\rho_{1}\mu^2} -\rho_{1}  \frac{51\eta \gamma_1 L^2}{\rho_{1}\mu^2}  \notag \\
			& \leq  0 \ .
		\end{align}
		
		By setting
		\begin{align}
			& C_3 = \frac{ 14n\rho_{1}\gamma_1\eta}{s_{1}^2}  \ , \notag \\
			& C_4 = \frac{226n\rho_{1}\eta \gamma_1 L^2}{s_1^2\mu^2}  \ , \notag \\
			& \alpha_1 = \frac{s_1}{2n} < 1 \ , 
		\end{align}
		we can get
		\begin{align}
			& \quad C_3(1-\frac{s_{1}}{n})(1+\alpha_{1})+\frac{ 2\rho_{1}^2}{s_{1}} C_1+ \frac{6\rho_{1}^2}{(1-\lambda^2)s_1} C_7-C_3 \notag  \\
			& \leq \frac{ 2\rho_{1}^2}{s_{1}} C_1+ \frac{6\rho_{1}^2}{(1-\lambda^2)s_1} C_7-\frac{s_1}{2n} C_3   \notag  \\
			& \leq \frac{ 2\rho_{1}^2}{s_{1}}  \frac{3\gamma_1\eta}{\rho_{1}}+ \frac{6\rho_{1}^2}{(1-\lambda^2)s_1} \frac{(1-\lambda^2)\gamma_1\eta}{6\rho_{1}}-\frac{s_1}{2n} \frac{ 14n\rho_{1}\gamma_1\eta}{s_{1}^2}    \notag  \\
			& \leq  0 \ ,
		\end{align}
		and 
		\begin{align}
			& \quad C_4(1-\frac{s_{1}}{n})(1+\alpha_{1})+\frac{ 2\rho_{1}^2}{s_{1}}C_2 +\frac{6\rho_{1}^2}{(1-\lambda^2)s_1} C_8-C_4 \notag \\
			& \leq\frac{ 2\rho_{1}^2}{s_{1}}C_2 +\frac{6\rho_{1}^2}{(1-\lambda^2)s_1} C_8-\frac{s_1}{2n} C_4 \notag \\
			& \leq\frac{ 2\rho_{1}^2}{s_{1}} \frac{56\eta \gamma_1 L^2}{\rho_{1}\mu^2}   +\frac{6\rho_{1}^2}{(1-\lambda^2)s_1}  \frac{(1-\lambda^2)\eta \gamma_1 L^2}{6\rho_{1}\mu^2}-\frac{s_1}{2n} \frac{226n\rho_{1}\eta \gamma_1 L^2}{s_1^2\mu^2} \notag \\
			& \leq   0 \ .
		\end{align}
		
		Therefore, based on the following values 
		\begin{align}
			&  C_0 = \frac{6\gamma_1  L^2}{\gamma_2\mu}  \ ,   \quad  C_1 =  \frac{3\gamma_1\eta}{\rho_{1}}  \ ,    \quad C_2 = \frac{51\eta \gamma_1 L^2}{\rho_{1}\mu^2} \ ,  \quad  C_3 = \frac{ 14n\rho_{1}\gamma_1\eta}{s_{1}^2} \ , \quad  C_4 = \frac{226n\rho_{1}\eta \gamma_1 L^2}{s_1^2\mu^2}   \ ,  \notag \\
			& C_7 = \frac{(1-\lambda^2)\gamma_1\eta}{6\rho_{1}}  \ , \quad   C_8 =  \frac{(1-\lambda^2)\eta \gamma_1 L^2}{6\rho_{1}\mu^2}   \ ,  \quad \rho_1=\frac{s_1}{2n}  \ , \quad  \alpha_1 = \frac{s_1}{2n}  \ ,  \quad s_1 = \sqrt{n} \ , 
		\end{align}
		we can get
		\begin{align}
			& C_{9} = \frac{4L^2}{s_{1}}C_1+\frac{4L^2}{s_{1}}C_2+ 2L^2(1-\frac{s_{1}}{n}) (1+\frac{1}{\alpha_{1}})C_3+ 2L^2(1-\frac{s_{1}}{n}) (1+\frac{1}{\alpha_{1}})C_4+  \frac{6L^2}{(1-\lambda^2)} C_7+  \frac{6L^2}{(1-\lambda^2)}C_8 \notag \\
			& \leq  \frac{4L^2}{s_{1}} \frac{3\gamma_1\eta}{\rho_{1}}+\frac{4L^2}{s_{1}} \frac{51\eta \gamma_1 L^2}{\rho_{1}\mu^2} + 2L^2\frac{2n}{s_1}\frac{ 14n\rho_{1}\gamma_1\eta}{s_{1}^2}+ 2L^2\frac{2n}{s_1}\frac{226n\rho_{1}\eta \gamma_1 L^2}{s_1^2\mu^2}  \notag \\
			& \quad +  \frac{6L^2}{(1-\lambda^2)} \frac{(1-\lambda^2)\gamma_1\eta}{6\rho_{1}}  +  \frac{6L^2}{(1-\lambda^2)} \frac{(1-\lambda^2)\eta \gamma_1 L^2}{6\rho_{1}\mu^2}  \notag \\
			& \leq  \frac{12\gamma_1\eta L^2}{s_{1}\rho_{1}}+ \frac{204\eta \gamma_1 \kappa^2L^2}{s_{1}\rho_{1}} +\frac{ 56n^2\rho_{1}\gamma_1\eta  L^2}{s_{1}^3}+ \frac{944n^2\rho_{1}\eta \gamma_1\kappa^2 L^2}{s_1^3}   +  \frac{\gamma_1\eta L^2}{\rho_{1}}  +  \frac{\eta \gamma_1 \kappa^2 L^2}{\rho_{1}}  \notag \\
			& \leq  \frac{24n\gamma_1\eta L^2}{s_{1}^2}+ \frac{408n\eta \gamma_1 \kappa^2L^2}{s_{1}^2} +\frac{ 28n\gamma_1\eta  L^2}{s_{1}^2} + \frac{472n\eta \gamma_1\kappa^2 L^2}{s_1^2}   +  \frac{2n\gamma_1\eta L^2}{s_{1}}  +  \frac{2n\eta \gamma_1 \kappa^2 L^2}{s_{1}}  \notag \\
			& \leq 936\eta \gamma_1 \kappa^2L^2  \ . 
		\end{align}
		Then, by setting
		\begin{align}
			& C_5 = \frac{22568\gamma_1 \kappa^2  L^2}{(1-\lambda^2)}  \ , \notag \\
			& C_6 = \frac{22568\gamma_1 \kappa^2L^2 }{(1-\lambda^2)}  \ ,  \notag \\
			& \eta \leq 1  \ , 
		\end{align}
		we can get
		\begin{align}
			& \quad 2\gamma_1\eta L^2+  \frac{25\eta \gamma_2 L^2}{3\mu}C_0 + 12 \eta^2 C_{9}-\frac{\eta(1-\lambda^2)}{2}C_5 \notag  \\
			& \leq 2\gamma_1\eta L^2+  \frac{25\eta \gamma_2 L^2}{3\mu} \frac{6\gamma_1  L^2}{\gamma_2\mu}  + 12 \eta^2 936\eta \gamma_1 \kappa^2L^2 -\frac{\eta(1-\lambda^2)}{2}\frac{22568\gamma_1 \kappa^2 L^2}{(1-\lambda^2)} \notag \\
			& \leq 2\eta \gamma_1 \kappa^2 L^2+  50 \eta \gamma_1  \kappa^2 L^2 + 11232\eta \gamma_1 \kappa^2L^2 - 11284\eta \gamma_1 \kappa^2 L^2 \notag \\
			& \leq  0 \ , 
		\end{align}
		and 
		\begin{align}
			& \quad 2\gamma_1\eta L^2+  \frac{25\eta \gamma_2 L^2}{3\mu} C_0+ 12 \eta^2 C_{9}-\frac{\eta(1-\lambda^2)}{2} C_6 \notag \\
			& \leq 2\gamma_1\eta L^2+  \frac{25\eta \gamma_2 L^2}{3\mu}  \frac{6\gamma_1  L^2}{\gamma_2\mu} + 12 \eta^2  936\eta \gamma_1 \kappa^2L^2 -\frac{\eta(1-\lambda^2)}{2} \frac{106\gamma_1 \kappa^2L^2 }{(1-\lambda^2)} \notag \\
			& \leq 2\gamma_1\eta L^2+ 50\gamma_1\eta\kappa^2 L^2+ 11232\eta \gamma_1 \kappa^2L^2 -11284\eta \gamma_1 \kappa^2 L^2 \notag \\
			& \leq  0 \  . 
		\end{align}
		
		In addition, we enforce
		\begin{align}
			& \quad \frac{25\eta\gamma_1^2 \kappa^2 }{6\gamma_2\mu}C_0 + 3\gamma_1^2\eta^2 C_{9} -  \frac{\gamma_1\eta}{4} \notag \\
			& \leq \frac{25\eta\gamma_1^2 \kappa^2 }{6\gamma_2\mu} \frac{6\gamma_1  L^2}{\gamma_2\mu}  + 3\gamma_1^2\eta^2 936\eta \gamma_1 \kappa^2L^2-  \frac{\gamma_1\eta}{4}  \notag \\
			& \leq 0 \ .
		\end{align}
		Due to $\eta\leq 1$,  this can be done by setting
		\begin{align}
			& \frac{25\eta\gamma_1^2 \kappa^2 }{6\gamma_2\mu} \frac{6\gamma_1  L^2}{\gamma_2\mu}  \leq  \frac{\gamma_1\eta}{8}  \ ,  \notag \\
			& 3\gamma_1^2\eta^2 936\eta \gamma_1 \kappa^2L^2\leq 3\gamma_1^2 936\eta \gamma_1 \kappa^2L^2\leq   \frac{\gamma_1\eta}{8}  \ . 
		\end{align}
		Then, we can get
		\begin{align}
			& \gamma_1 \leq  \frac{\gamma_2}{15\kappa^2 }  \ ,  \notag \\
			& \gamma_1 \leq   \frac{1}{150\kappa L}  \ . 
		\end{align}
		
		Moreover, due to $\eta\leq 1$, we enforce
		\begin{align}
			& \quad  3\gamma_2^2\eta^2C_{9}- \frac{3\gamma_2^2\eta}{4}C_0 \notag \\
			& \leq  3\gamma_2^2\eta^2936\eta \gamma_1 \kappa^2L^2 - \frac{3\gamma_2^2\eta}{4}\frac{6\gamma_1  L^2}{\gamma_2\mu}   \notag \\
			& \leq  3\gamma_2^2936\eta \gamma_1 \kappa^2L^2 - \frac{3\gamma_2^2\eta}{4}\frac{6\gamma_1  L^2}{\gamma_2\mu}   \notag \\
			& \leq 0 \ .
		\end{align}
		Then, we can get 
		\begin{align}
			& 	  \gamma_2  \leq  \frac{  1}{624  \kappa L}   \ . 
		\end{align}
		
		Furthermore, due to $\eta\leq 1$,  we enforce
		\begin{align}
			& \quad \frac{2\gamma_1^2\eta}{1-\lambda^2}C_5  + 3  \gamma_1^2\eta^2 C_{9}- \frac{1-\lambda^2}{2}C_7 \notag \\
			& \leq \frac{2\gamma_1^2\eta}{1-\lambda^2}\frac{22568\gamma_1 \kappa^2  L^2}{(1-\lambda^2)}  + 3  \gamma_1^2\eta^2 936\eta \gamma_1 \kappa^2L^2 - \frac{1-\lambda^2}{2} \frac{(1-\lambda^2)\gamma_1\eta}{6\rho_{1}} \notag \\
			& \leq  \frac{2\gamma_1^2\eta}{1-\lambda^2}\frac{22568\gamma_1 \kappa^2  L^2}{(1-\lambda^2)}  + 3  \gamma_1^2 936\eta \gamma_1 \kappa^2L^2 - \frac{1-\lambda^2}{2} \frac{(1-\lambda^2)\gamma_1\eta}{6\rho_{1}} \notag \\
			& \leq  0 \ .
		\end{align}
		Then, due to $\rho_{1} <1$, we can get
		\begin{align}
			&     \gamma_1  \leq  \frac{(1-\lambda)^2}{760\kappa L }  \ . 
		\end{align}
		
		Finally, due to $\eta\leq 1$,  we enforce
		\begin{align}
			& \quad \frac{2\eta\gamma_2^2}{1-\lambda^2}C_6+ 3  \gamma_2^2\eta^2C_{9}- \frac{1-\lambda^2}{2}C_8 \notag \\
			& \leq \frac{2\eta\gamma_2^2}{1-\lambda^2} \frac{22568\gamma_1 \kappa^2L^2 }{(1-\lambda^2)} + 3  \gamma_2^2\eta^2936\eta \gamma_1 \kappa^2L^2 - \frac{1-\lambda^2}{2}  \frac{(1-\lambda^2)\eta \gamma_1 L^2}{6\rho_{1}\mu^2} \notag \\
			& \leq \frac{2\eta\gamma_2^2}{1-\lambda^2} \frac{22568\gamma_1 \kappa^2L^2 }{(1-\lambda^2)} + 3  \gamma_2^2936\eta \gamma_1 \kappa^2L^2 - \frac{1-\lambda^2}{2}  \frac{(1-\lambda^2)\eta \gamma_1 L^2}{6\rho_{1}\mu^2} \notag \\
			& \leq  0 \ .
		\end{align}
		Then, due to $\rho_{1} <1$, we can get
		\begin{align}
			&     \gamma_2  \leq  \frac{(1-\lambda)^2}{760 L }  \ . 
		\end{align}
		
		In summary, when $t\geq1$, by setting
		\begin{align} \label{eq:hyperparameter}
			&  C_0 = \frac{6\gamma_1  L^2}{\gamma_2\mu}  \ ,   \quad  C_1 =  \frac{3\gamma_1\eta}{\rho_{1}}  \ ,    \quad C_2 = \frac{51\eta \gamma_1 L^2}{\rho_{1}\mu^2} \ ,  \quad  C_3 = \frac{ 14n\rho_{1}\gamma_1\eta}{s_{1}^2} \ , \quad  C_4 = \frac{226n\rho_{1}\eta \gamma_1 L^2}{s_1^2\mu^2}   \ ,  \notag \\
			& C_5 = \frac{22568\gamma_1 \kappa^2  L^2}{(1-\lambda^2)} \ , \quad C_6 = \frac{22568\gamma_1 \kappa^2  L^2}{(1-\lambda^2)} \ , \quad  C_7 = \frac{(1-\lambda^2)\gamma_1\eta}{6\rho_{1}}  \ , \quad   C_8 =  \frac{(1-\lambda^2)\eta \gamma_1 L^2}{6\rho_{1}\mu^2}   \ , \notag \\
			&   \rho_1=\frac{s_1}{2n}  \ , \quad  \alpha_1 = \frac{s_1}{2n}  \ ,  \quad s_1 = \sqrt{n} \ ,  \notag \\
			& \gamma_1\leq \min\left\{\frac{(1-\lambda)^2}{760\kappa L }, \frac{1}{150\kappa L} ,  \frac{\gamma_2}{15\kappa^2 } \right\} \ , \quad  \gamma_2 \leq \min\left\{ \frac{(1-\lambda)^2}{760 L },  \frac{  1}{624  \kappa L}   \right\}  \ , 
		\end{align}
		we can get
		\begin{align}
			& \quad H_{t+1} - H_{t}  \leq - \frac{\gamma_1\eta}{2} \mathbb{E}[ \|\nabla \Phi(\bar{\mathbf{x}}_{t})\|^2] - \frac{\gamma_1\eta L^2}{2}   \mathbb{E}[\|\bar{\mathbf{y}}_t -  {\mathbf{y}^*(\overline{\mathbf{x}}_t)}\|^2]\ . 
		\end{align}
		By summing $t$ from $1$ to $T-1$, we can get
		\begin{align}
			& \sum_{t=1}^{T-1}\left(\mathbb{E}[\|\nabla \Phi(\bar{\mathbf{x}}_{t})\|^2]   +L^2  \mathbb{E}[\|\bar{\mathbf{y}}_t -  {\mathbf{y}^*(\overline{\mathbf{x}}_t)}\|^2]\right) \leq \frac{2(H_{1} - \Phi(\mathbf{x}_*))}{\gamma_1\eta }  \ . 
		\end{align}

		According to the definition of the potential function, we have
		\begin{align}
			& H_{1} = \mathbb{E}[\Phi(\bar{\mathbf{x}}_{1})]+ C_0 \mathbb{E}[\|\bar{\mathbf{y}}_{1}   - \mathbf{y}^{*}(\bar{\mathbf{x}}_{1})\| ^2 ]  \notag \\
			& \quad +  \frac{C_1}{K}\sum_{k=1}^{K}\mathbb{E}[\|\nabla_{\mathbf{x}} f^{(k)}(\mathbf{x}^{(k)}_{1}, \mathbf{y}^{(k)}_{1})-\mathbf{v}_{1}^{(k)}\|^2] +  \frac{C_2}{K}\sum_{k=1}^{K}\mathbb{E}[\|\nabla_{\mathbf{y}} f^{(k)}(\mathbf{x}^{(k)}_{1}, \mathbf{y}^{(k)}_{1})-\mathbf{u}_{1}^{(k)}\|^2] \notag \\
			& \quad + \frac{C_3}{K} \sum_{k=1}^{K}\mathbb{E}[\frac{1}{n} \sum_{j=1}^{n}\|\nabla_{\mathbf{x}} f^{(k)}_{j}(\mathbf{x}_{1}^{(k)}, \mathbf{y}_{1}^{(k)})-\mathbf{g}_{j,1}^{(k)}\|^{2}]  + \frac{C_4}{K}\sum_{k=1}^{K}\mathbb{E}[\frac{1}{n} \sum_{j=1}^{n}\|\nabla_{\mathbf{y}} f^{(k)}_{j}(\mathbf{x}_{1}^{(k)}, \mathbf{y}_{1}^{(k)})-\mathbf{h}_{j,1}^{(k)}\|^{2}] \notag \\
			& \quad + \frac{C_5}{K}\sum_{k=1}^{K} \mathbb{E}[\|\bar{\mathbf{x}}_{1}-\mathbf{x}^{(k)}_{1}\|^2]+ \frac{C_6}{K}\sum_{k=1}^{K} \mathbb{E}[\|\bar{\mathbf{y}}_{1}-\mathbf{y}^{(k)}_{1}\|^2] \notag \\
			& \quad  + \frac{C_7}{K}\sum_{k=1}^{K} \mathbb{E}[\|\bar{\mathbf{a}}_{1}-\mathbf{a}^{(k)}_{1}\|^2]+ \frac{C_8}{K}\sum_{k=1}^{K}\mathbb{E}[ \|\bar{\mathbf{b}}_{1}-\mathbf{b}^{(k)}_{1}\|^2 ] \notag \\
			& \leq \mathbb{E}[\Phi(\bar{\mathbf{x}}_{0})] - \frac{\gamma_1\eta}{2} \mathbb{E}[\|\nabla \Phi(\bar{\mathbf{x}}_{0})\|^2] + \Big( \gamma_1\eta L^2 + (1-\frac{\eta\gamma_2\mu}{4})C_0 \Big)  \mathbb{E}[\|\bar{\mathbf{y}}_{0} -  \mathbf{y}^*(\bar{\mathbf{x}}_{0})\|^2]\notag \\
			& \quad + \Big( \frac{25\eta\gamma_1^2 \kappa^2 }{6\gamma_2\mu}C_0 + 3\gamma_1^2\eta^2 C_{9} -  \frac{\gamma_1\eta}{4} \Big)\mathbb{E}[\|\bar{\mathbf{v}}_{0}\|^2]    +\Big( 3\gamma_2^2\eta^2C_{9}- \frac{3\gamma_2^2\eta}{4}C_0 \Big) \mathbb{E}[\|\bar{\mathbf{u}}_{0}\|^2]    \notag \\
			& \quad  + \Big(2\gamma_1\eta L^2+  \frac{25\eta \gamma_2 L^2}{3\mu}C_0 + 12 \eta^2 C_{9}+ (1-\frac{\eta(1-\lambda^2)}{2})C_5\Big) \frac{1}{K}\sum_{k=1}^{K} \mathbb{E}[\|\bar{\mathbf{x}}_{0}-\mathbf{x}^{(k)}_{0}\|^2]  \notag \\
			& \quad + \Big(2\gamma_1\eta L^2+  \frac{25\eta \gamma_2 L^2}{3\mu} C_0+ 12 \eta^2 C_{9}+(1-\frac{\eta(1-\lambda^2)}{2} )C_6\Big) \frac{1}{K}\sum_{k=1}^{K} \mathbb{E}[\|\bar{\mathbf{y}}_{0}-\mathbf{y}^{(k)}_{0}\|^2]  \notag \\
			& \quad + \Big(2\gamma_1\eta+C_1(1-\rho_{1})^2+  \frac{6\rho_{1}^2 }{1-\lambda^2}C_7\Big)\frac{1}{K}\sum_{k=1}^{K}\mathbb{E}[\|\nabla_{\mathbf{x}} f^{(k)}(\mathbf{x}^{(k)}_{0}, \mathbf{y}^{(k)}_{0}) -\mathbf{v}^{(k)}_{0}\|^2]\notag \\
			& \quad    + \Big(\frac{25\eta \gamma_2 }{3\mu}C_0 + C_2(1-\rho_{1})^2+  \frac{6\rho_{1}^2 }{1-\lambda^2}C_8\Big) \frac{1}{K}\sum_{k=1}^{K}\mathbb{E}[\|\nabla_{\mathbf{y}} f^{(k)}({\mathbf{x}}^{(k)}_{0}, {\mathbf{y}}^{(k)}_{0})  -{\mathbf{u}}^{(k)}_{0}\|^2]   \notag \\
			& \quad +\Big(C_3(1-\frac{s_{1}}{n})(1+\alpha_{1})+\frac{ 2\rho_{1}^2}{s_{1}} C_1+ \frac{6\rho_{1}^2}{(1-\lambda^2)s_1} C_7\Big)\frac{1}{K}\sum_{k=1}^{K}\frac{1}{n}\sum_{i=1}^{n} \mathbb{E}[\|\nabla_{\mathbf{x}} f^{(k)}_{i}(\mathbf{x}_{0}^{(k)}, \mathbf{y}_{0}^{(k)})-\mathbf{g}_{i, t}^{(k)}\|^2] \notag \\
			&\quad  +\Big(C_4(1-\frac{s_{1}}{n})(1+\alpha_{1})+\frac{ 2\rho_{1}^2}{s_{1}}C_2 +\frac{6\rho_{1}^2}{(1-\lambda^2)s_1} C_8\Big) \frac{1}{K}\sum_{k=1}^{K}\frac{1}{n}\sum_{i=1}^{n} \mathbb{E}[\|\nabla_{\mathbf{y}} f^{(k)}_{i}(\mathbf{x}_{0}^{(k)}, \mathbf{y}_{0}^{(k)})-\mathbf{h}_{i, t}^{(k)}\|^2] \notag \\
			& \quad + \Big(\frac{2\gamma_1^2\eta}{1-\lambda^2}C_5  + 3  \gamma_1^2\eta^2 C_{9}+ \frac{1+\lambda^2}{2}C_7\Big) \frac{1}{K}\sum_{k=1}^{K}\mathbb{E}[\|\mathbf{a}_{0}^{(k)}-\bar{\mathbf{a}}_{0}\|^2]  \notag \\
			& \quad  + \Big(\frac{2\eta\gamma_2^2}{1-\lambda^2}C_6+ 3  \gamma_2^2\eta^2C_{9}+ \frac{1+\lambda^2}{2}C_8\Big)\frac{1}{K}\sum_{k=1}^{K} \mathbb{E}[\|\mathbf{b}_{0}^{(k)}-\bar{\mathbf{b}}_{0}\|^2]   \notag \\
			& \leq \mathbb{E}[\Phi(\bar{\mathbf{x}}_{0})] - \frac{\gamma_1\eta}{2} \mathbb{E}[\|\nabla \Phi(\bar{\mathbf{x}}_{0})\|^2] + \Big( \frac{6\gamma_1  L^2}{\gamma_2\mu}-\frac{\gamma_1  \eta L^2}{2} \Big)  \mathbb{E}[\|\bar{\mathbf{y}}_{0} -  \mathbf{y}^*(\bar{\mathbf{x}}_{0})\|^2]\notag \\
			& \quad +C_1\frac{1}{K}\sum_{k=1}^{K}\mathbb{E}[\|\nabla_{\mathbf{x}} f^{(k)}(\mathbf{x}^{(k)}_{0}, \mathbf{y}^{(k)}_{0}) -\mathbf{v}^{(k)}_{0}\|^2]  + C_2 \frac{1}{K}\sum_{k=1}^{K}\mathbb{E}[\|\nabla_{\mathbf{y}} f^{(k)}({\mathbf{x}}^{(k)}_{0}, {\mathbf{y}}^{(k)}_{0})  -{\mathbf{u}}^{(k)}_{0}\|^2]   \notag \\
			& \quad +C_3\frac{1}{K}\sum_{k=1}^{K}\frac{1}{n}\sum_{i=1}^{n} \mathbb{E}[\|\nabla_{\mathbf{x}} f^{(k)}_{i}(\mathbf{x}_{0}^{(k)}, \mathbf{y}_{0}^{(k)})-\mathbf{g}_{i, t}^{(k)}\|^2] +C_4 \frac{1}{K}\sum_{k=1}^{K}\frac{1}{n}\sum_{i=1}^{n} \mathbb{E}[\|\nabla_{\mathbf{y}} f^{(k)}_{i}(\mathbf{x}_{0}^{(k)}, \mathbf{y}_{0}^{(k)})-\mathbf{h}_{i, t}^{(k)}\|^2] \notag \\
			& \quad + C_7 \frac{1}{K}\sum_{k=1}^{K}\mathbb{E}[\|\mathbf{a}_{0}^{(k)}-\bar{\mathbf{a}}_{0}\|^2] + C_8\frac{1}{K}\sum_{k=1}^{K} \mathbb{E}[\|\mathbf{b}_{0}^{(k)}-\bar{\mathbf{b}}_{0}\|^2]    \ , 
		\end{align}
		where the second step follows from Lemma~\ref{lemma_fx},~\ref{lemma_y},~\ref{lemma_var_x},~\ref{lemma_var_y},~\ref{lemma_var_g_x},~\ref{lemma_var_g_y},~\ref{lemma_consensus_x},~\ref{lemma_consensus_y},~\ref{lemma_consensus_a},~\ref{lemma_consensus_b},~\ref{lemma_incremental}, and the last step follows from Eq.~(\ref{eq:hyperparameter}).

		On the other hand, we have
		\begin{align}\label{eq:a-consensus}
			& \quad \frac{1}{K}\sum_{k=1}^{K} \mathbb{E}[\|\mathbf{a}_0^{(k)}-\bar{\mathbf{a}}_0\|^2]  \notag \\
			& = \frac{1}{K}\sum_{k=1}^{K} \mathbb{E}[\|\mathbf{v}_0^{(k)} - \frac{1}{K}\sum_{k'=1}^{K}\mathbf{v}_0^{(k')}\|^2] \notag \\
			& = \frac{1}{K}\sum_{k=1}^{K} \mathbb{E}[\|\mathbf{v}_0^{(k)}- \nabla_{\mathbf{x}} f^{(k)}(\mathbf{x}^{(k)}_0, \mathbf{y}^{(k)}_0) + \nabla_{\mathbf{x}} f^{(k)}(\mathbf{x}^{(k)}_0, \mathbf{y}^{(k)}_0) \notag \\
			& \quad  - \frac{1}{K}\sum_{k'=1}^{K}\nabla_{\mathbf{x}} f^{(k')}(\mathbf{x}^{(k')}_0, \mathbf{y}^{(k')}_0)  + \frac{1}{K}\sum_{k'=1}^{K}\nabla_{\mathbf{x}} f^{(k')}(\mathbf{x}^{(k')}_0, \mathbf{y}^{(k')}_0)   - \frac{1}{K}\sum_{k'=1}^{K}\mathbf{v}_0^{(k')}\|^2] \notag \\
			& = \frac{1}{K}\sum_{k=1}^{K} \mathbb{E}[\|\mathbf{v}_0^{(k)}- \nabla_{\mathbf{x}} f^{(k)}(\mathbf{x}^{(k)}_0, \mathbf{y}^{(k)}_0) \|^2]+ \frac{1}{K}\sum_{k=1}^{K} \mathbb{E}[\|\nabla_{\mathbf{x}} f^{(k)}(\mathbf{x}^{(k)}_0, \mathbf{y}^{(k)}_0)  - \frac{1}{K}\sum_{k'=1}^{K}\nabla_{\mathbf{x}} f^{(k')}(\mathbf{x}^{(k')}_0, \mathbf{y}^{(k')}_0)\|^2]  \notag \\
			& \quad +\frac{1}{K}\sum_{k=1}^{K} \mathbb{E}[\| \frac{1}{K}\sum_{k'=1}^{K}\nabla_{\mathbf{x}} f^{(k')}(\mathbf{x}^{(k')}_0, \mathbf{y}^{(k')}_0)   - \frac{1}{K}\sum_{k'=1}^{K}\mathbf{v}_0^{(k')}\|^2] \notag \\
			& \leq  \frac{2}{K}\sum_{k=1}^{K} \mathbb{E}[\|\mathbf{v}_0^{(k)}- \nabla_{\mathbf{x}} f^{(k)}(\mathbf{x}^{(k)}_0, \mathbf{y}^{(k)}_0) \|^2  \  , \notag \\
			& \quad \frac{1}{K}\sum_{k=1}^{K} \mathbb{E}[\|\mathbf{b}_0^{(k)}-\bar{\mathbf{b}}_0\|^2]  \leq  \frac{2}{K}\sum_{k=1}^{K} \mathbb{E}[\|\mathbf{u}_0^{(k)}- \nabla_{\mathbf{y}} f^{(k)}(\mathbf{x}^{(k)}_0, \mathbf{y}^{(k)}_0) \|^2  \  ,
		\end{align}
		where we used $\nabla_{\mathbf{x}} f^{(k)}(\mathbf{x}^{(k)}_0, \mathbf{y}^{(k)}_0) = \frac{1}{K}\sum_{k'=1}^{K}\nabla_{\mathbf{x}} f^{(k')}(\mathbf{x}^{(k')}_0, \mathbf{y}^{(k')}_0)$ and $\nabla_{\mathbf{y}} f^{(k)}(\mathbf{x}^{(k)}_0, \mathbf{y}^{(k)}_0)  = \frac{1}{K}\sum_{k'=1}^{K}\nabla_{\mathbf{y}} f^{(k')}(\mathbf{x}^{(k')}_0, \mathbf{y}^{(k')}_0)$ in the proof. 
		
		Then, based on Lemma~\ref{lemma_var_x},~\ref{lemma_var_y},~\ref{lemma_var_g_x},~\ref{lemma_var_g_y} when $t=0$, we can obtain
		\begin{align}
			& H_{1} \leq \mathbb{E}[\Phi(\bar{\mathbf{x}}_{0})]+  \frac{6\gamma_1  L^2}{\gamma_2\mu} \mathbb{E}[\|\bar{\mathbf{y}}_{0}   - \mathbf{y}^{*}(\bar{\mathbf{x}}_0)\| ^2 ] - \frac{\gamma_1\eta}{2} \mathbb{E}[\|\nabla \Phi(\bar{\mathbf{x}}_{0})\|^2]  - \frac{\gamma_1  \eta L^2}{2}  \mathbb{E}[\|\bar{\mathbf{y}}_{0} -  \mathbf{y}^*(\bar{\mathbf{x}}_{0})\|^2]\notag \\
			& \quad +  \frac{4\gamma_1\eta}{\rho_{1}} \frac{n-s_0}{(n-1)s_0}\frac{1}{K}\sum_{k=1}^{K}\frac{1}{n}\sum_{i=1}^{n} \mathbb{E}[ \|\nabla_{\mathbf{x}} f^{(k)}_i(\mathbf{x}_{0}^{(k)}, \mathbf{y}_{0}^{(k)})\|^2] \notag \\
			& \quad +  \frac{52\eta \gamma_1 L^2}{\rho_{1}\mu^2}\frac{n-s_0}{(n-1)s_0}\frac{1}{K}\sum_{k=1}^{K}\frac{1}{n}\sum_{i=1}^{n}  \mathbb{E}[\|\nabla_{\mathbf{y}} f^{(k)}_i(\mathbf{x}_{0}^{(k)}, \mathbf{y}_{0}^{(k)})\|^2 ] \notag \\
			& \quad +\frac{ 14n\rho_{1}\gamma_1\eta}{s_{1}^2}(1-\frac{s_{0}}{n}) \frac{1}{K} \sum_{k=1}^{K} \frac{1}{n}\sum_{i=1}^{n} \mathbb{E}[\|\nabla_{\mathbf{x}} f^{(k)}_{i}(\mathbf{x}_{0}^{(k)}, \mathbf{y}_{0}^{(k)})\|^{2}] \notag \\
			& \quad  + \frac{226n\rho_{1}\eta \gamma_1 L^2}{s_1^2\mu^2}  (1-\frac{s_{0}}{n}) \frac{1}{K}\sum_{k=1}^{K}\frac{1}{n}\sum_{i=1}^{n} \mathbb{E}[\|\nabla_{\mathbf{y}} f^{(k)}_{i}(\mathbf{x}_{0}^{(k)}, \mathbf{y}_{0}^{(k)})\|^{2}]\notag \\
			& \leq  \mathbb{E}[\Phi(\bar{\mathbf{x}}_{0})]+  \frac{6\gamma_1  L^2}{\gamma_2\mu} \mathbb{E}[\|\bar{\mathbf{y}}_{0}   - \mathbf{y}^{*}(\bar{\mathbf{x}}_0)\| ^2 ] - \frac{\gamma_1\eta}{2} \mathbb{E}[\|\nabla \Phi(\bar{\mathbf{x}}_{0})\|^2]   - \frac{\gamma_1  \eta L^2}{2}  \mathbb{E}[\|\bar{\mathbf{y}}_{0} -  \mathbf{y}^*(\bar{\mathbf{x}}_{0})\|^2]\notag \\
			& \quad + 15\gamma_1\eta\frac{n-s_0}{s_0s_1} \frac{1}{K}\sum_{k=1}^{K}\frac{1}{n}\sum_{i=1}^{n}  \mathbb{E}[\|\nabla_{\mathbf{x}} f^{(k)}_i(\mathbf{x}_{0}^{(k)}, \mathbf{y}_{0}^{(k)})\|^2]   \notag \\
			& \quad +278\kappa^2\gamma_1\eta \frac{n-s_0}{s_0s_1}  \frac{1}{K}\sum_{k=1}^{K}\frac{1}{n}\sum_{i=1}^{n} \mathbb{E}[\|\nabla_{\mathbf{y}} f^{(k)}_{i}(\mathbf{x}_{0}^{(k)}, \mathbf{y}_{0}^{(k)})\|^{2}]  \ , 
		\end{align}
		where  the last step follows from $s_0\leq n$ and $\rho_1=\frac{s_1}{2n}$. 
		
		Finally, we can get
		\begin{align}
			&  \frac{1}{T}\sum_{t=0}^{T-1}\left(\mathbb{E}[\|\nabla \Phi(\bar{\mathbf{x}}_{t})\|^2]   +L^2  \mathbb{E}[\|\bar{\mathbf{y}}_t -  {\mathbf{y}^*(\overline{\mathbf{x}}_t)}\|^2]\right)\leq \frac{2(\Phi({\mathbf{x}}_{0})- \Phi(\mathbf{x}_*))}{\gamma_1\eta T}  +  \frac{12 L^2}{\gamma_2\eta T \mu} \mathbb{E}[\|\bar{\mathbf{y}}_{0}   - \mathbf{y}^{*}(\bar{\mathbf{x}}_0)\| ^2 ]  \notag \\
			& \quad + \frac{30}{ T} \frac{n-s_0}{s_0s_1} \frac{1}{K}\sum_{k=1}^{K}\frac{1}{n}\sum_{i=1}^{n}  \mathbb{E}[\|\nabla_{\mathbf{x}} f^{(k)}_i(\mathbf{x}_{0}^{(k)}, \mathbf{y}_{0}^{(k)})\|^2]  +\frac{556\kappa^2}{ T} \frac{n-s_0}{s_0s_1} \frac{1}{K}\sum_{k=1}^{K}\frac{1}{n}\sum_{i=1}^{n} \mathbb{E}[\|\nabla_{\mathbf{y}} f^{(k)}_{i}(\mathbf{x}_{0}^{(k)}, \mathbf{y}_{0}^{(k)})\|^{2}]  \  . 
		\end{align}

	\end{proof}

\end{document}